%% file: main.tex
\documentclass[10pt]{article}

\usepackage[accepted]{rlj}   

\usepackage{microtype}
\usepackage{graphicx}
\usepackage{subcaption}
\usepackage{booktabs}

\usepackage{amssymb,mathtools,amsthm}
\usepackage{thm-restate}

\usepackage{xspace}

\usepackage[on]{editing}
\usepackage{multirow}

\usepackage{algorithm,algorithmic}
\renewcommand{\algorithmicrequire}{\textbf{Input:}}
\renewcommand{\algorithmicensure}{\textbf{Output:}}
\newcommand{\RETURN}{\textbf{return} }

\usepackage{silence}
\usepackage{tikz}
\usetikzlibrary{arrows.meta,positioning,decorations.pathreplacing,decorations.pathmorphing,patterns,calc}

\definecolor{betterred}{RGB}{228,26,28}
\definecolor{betterblue}{RGB}{55,126,184}
\definecolor{bettergreen}{RGB}{77,175,74}
\definecolor{betterpurple}{RGB}{152,78,163}
\definecolor{LightCyan}{rgb}{0.88,1,1}

\theoremstyle{plain}
\newtheorem{theorem}{Theorem}[section]
\newtheorem{proposition}[theorem]{Proposition}
\newtheorem{lemma}[theorem]{Lemma}
\newtheorem{corollary}[theorem]{Corollary}
\theoremstyle{definition}
\newtheorem{definition}[theorem]{Definition}
\newtheorem{assumption}[theorem]{Assumption}
\theoremstyle{remark}
\newtheorem{remark}[theorem]{Remark}

\newcommand{\E}{\mathbb{E}}
\newcommand{\Var}{\mathrm{Var}}
\newcommand{\Cov}{\mathrm{Cov}}
\newcommand{\Corr}{\mathrm{Corr}}
\newcommand{\R}{\mathbb{R}}

\newcommand{\U}{\mathbf{U}}
\newcommand{\V}{\mathbf{V}}
\newcommand{\X}{\mathbf{X}}  
\newcommand{\cM}{\mathcal{M}}

\newcommand{\cS}{\mathcal{S}}
\newcommand{\cA}{\mathcal{A}}

\title{Counterfactual Shapley Credit Assignment}
\setrunningtitle{Counterfactual Shapley Credit Assignment}

\author{Mingxuan Li\textsuperscript{1,$\dagger$}, Kai-Zhan Lee\textsuperscript{1,$\dagger$}, Elias Bareinboim\textsuperscript{1}}
\emails{ml@cs.columbia.edu, \ kl2792@columbia.edu, \ eb@cs.columbia.edu}
\affiliations{
$^{1}$\textbf{Department of Computer Science, Columbia University, New York, NY, USA}
\par
$^\dagger$ Equal contribution
}

\keywords{temporal credit assignment, counterfactual Shapley values, causal inference}

\summary{The Credit Assignment Problem (CAP) is fundamental to developing efficient and explainable Reinforcement Learning (RL) agents. Existing frameworks, whether relying on temporal contiguity or hindsight-conditioned reward reweighting, frequently fail to attribute properly between an agent's policy (skill) and environmental stochasticity (luck). A principled approach to CAP must isolate the true causal drivers of observed outcomes from spurious correlations and environmental randomness. We introduce Counterfactual Shapley Credit Assignment, a novel framework grounded in causal theory that attributes credit and blame via the Counterfactual Shapley Value ($\phi$-value). By redistributing environmental rewards, $\phi$-values enhance temporal credit assignment across three critical dimensions: sparse causality, high stochasticity, and delayed rewards, all while preserving the optimal policy. We derive a consistent estimator that computes $\phi$-values efficiently, enabling a new class of policy gradient methods, $\phi$-PPO, combined with Prioritized Trajectory Replay (PTR). Empirical results demonstrate that $\phi$-values align precisely to the ground truth causes of task rewards with superior sample efficiency in challenging environments where prior state-of-the-art methods fail to converge.}

\contribution{
We apply the Counterfactual Shapley Value ($\phi$-value) framework~\citet{lee2025} to RL temporal credit assignment, showing that $\phi$-values yield a reward redistribution that preserves the optimal policy and concentrates credit on causal actions. We show that the proposed framework satisfies the four causal credit assignment criteria: (1) Causal Admissibility (2) Causal Power (3) Causal Normality and (4) Causal Effect Scaling. The four criteria together guarantee that the proposed framework allocates credits faithful to how much causal contribution each action has to the outcome compared to a baseline policy, effectively rectifying the long-standing challenges in RL temporal credit assignment: sparsity, high stochasticity and delayed rewards.
}{
Prior credit assignment methods, including HCA~\citep{hindsightcredit}, CCA~\citep{pmlr-v139-mesnard21a}, CoCoA~\citep{meulemans2023would}, RUDDER~\citep{arjona2019rudder}, and Synthetic Returns~\citep{raposo2021synthetic}, are not grounded in counterfactuals in the Structural Causal Model (SCM) framework~\citep{pearl2009causality,bareinboim2022pearl}.
}

\contribution{
We derive a consistent $\phi$-value function estimator converging to the true causal contributions in probability given sufficient samples with amortized $O(1)$ time per action and a tunable bias--variance tradeoff via $\lambda$-bootstrapping.
}{
Previous work on estimating Shapley values~\citep{shapley:book1952} and $\phi$-values~\citep{lee2025} don't consider bootstrapping with value function approximators. Thus, they usually suffer from either exponential sample complexity or biased estimation. 
}

\contribution{
We introduce $\phi$-PPO with Prioritized Trajectory Replay (PTR) and demonstrate superior sample efficiency on the SkillLuck and Combinatorial Lock benchmarks, where prior methods either take much longer to learn or fail to converge; through ablation, we show that both $\phi$-values and PTR are necessary.
}{
Compared against PPO and REINFORCE~\citep{schulman2017ppo,williams1992simple}, HCA~\citep{harutyunyan2019hindsight}, CCA~\citep{pmlr-v139-mesnard21a}, CoCoA~\citep{meulemans2023would}, QCA~\citep{mesnard2023quantile}, RUDDER~\citep{arjona2019rudder}, Synthetic Returns~\citep{raposo2021synthetic}, and H-DICE~\citep{velu2024hindsightdice} in three classical tabular environments characterizing the challenges in temporal credit assignment.
}

\begin{document}

\makeCover
\maketitle

\input{sections/0-abstract}

\input{sections/1-intro}               
\input{sections/2-prelims}             
\input{sections/3-credit-assignment}   
\input{sections/4-estimating}          
\input{sections/5-experiments}         
\input{sections/6-conclusion}          

\section*{Acknowledgments}
This research is supported in part by the NSF, ONR, AFOSR, DoE, Amazon, JP Morgan, and The Alfred P. Sloan Foundation.

\bibliography{refs,bibwithID,extra}
\bibliographystyle{rlj}

\beginSupplementaryMaterials
\appendix

\vspace{0.3cm}
\noindent\textbf{Appendix Contents}
\begin{itemize}[leftmargin=20pt, topsep=4pt, itemsep=1pt, parsep=0pt]
    \item[{\ref{sec:full code}}] \hyperref[sec:full code]{Counterfactual Simulation}
    \item[{\ref{sec:phi-ppo}}] \hyperref[sec:phi-ppo]{$\phi$-PPO}
    \item[{\ref{sec:theory detail}}] \hyperref[sec:theory detail]{Theory Details}
    \item[{\ref{app:proofs}}] \hyperref[app:proofs]{Proofs}
    \item[{\ref{app:experiments}}] \hyperref[app:experiments]{Additional Experiments}
    \item[{\ref{app:limitations}}] \hyperref[app:limitations]{Limitations and Future Work}
    \item[{\ref{app:ca-methods}}] \hyperref[app:ca-methods]{Credit Assignment Methods: Detailed Comparison}
\end{itemize}
\vspace{0.3cm}

\input{sections/E-full-pseudocode}
\input{sections/D-phiPPO}
\input{sections/C-theorydetails}
\input{sections/A-proofs}
\input{sections/B-limitations}
\input{sections/B-related-methods}

\end{document}

%% file: sections/0-abstract.tex
\begin{abstract}
The Credit Assignment Problem (CAP) is fundamental to developing efficient and explainable 
Reinforcement Learning (RL) agents.
Existing frameworks, whether relying on temporal contiguity or hindsight-conditioned reweighting of rewards, frequently fail to distribute credit properly 
between an agent's policy (skill) and environmental stochasticity (luck). A principled approach to the CAP must isolate the causes of observed rewards from spurious correlated features and environmental randomness.
We introduce Counterfactual Shapley Credit Assignment, a novel causal credit assignment framework based on the counterfactual Shapley value ($\phi$-value). By redistributing rewards, $\phi$-values enhance credit assignment across three critical dimensions: high stochasticity, sparse causality, and delayed rewards, all while theoretically preserving the original optimal policy.
We derive a consistent estimator that computes each $\phi$-value in amortized constant time complexity, enabling a new class of policy gradient methods, $\phi$-PPO. Empirical results demonstrate that $\phi$-values align precisely 
with the ground truth causes of task rewards. Furthermore, we show its superior sample efficiency in challenging environments where prior state-of-the-art methods fail to converge. 
\end{abstract}

%% file: sections/1-intro.tex
\section{Introduction}
\label{sec:intro}

Reinforcement learning (RL) agents have achieved remarkable success in domains such as healthcare~\citep{yu2020reinforcementlearninghealthcaresurvey}, autonomous driving~\citep{shaoSurveyDeepReinforcement2019a}, and code generation~\citep{claude2024,yang2025qwen3technicalreport}, driven by advances in deep RL~\citep{mnihHumanlevelControlDeep2015a,silverMasteringGameGo2017,akkaya2019solvingcube,schrittwieserMasteringAtariGo2020,Guo_2025deepseek}.
As RL scales to more challenging domains with longer horizons, sparse and delayed rewards, and greater stochasticity, the long-standing \emph{Credit Assignment Problem} (CAP) grows increasingly relevant: learning depends on successfully ``distributing credit for success of a complex strategy among the many decisions that were involved'' \citep{minsky1961}.
We argue that this credit distribution must be causal: the credit each action receives should reflect its causal effect on the total discounted reward.

\begin{figure*}[t]
\vspace{-0.2in}
    \begin{minipage}[c]{.36\textwidth}
    \centering
        \input{figures/intro_credit_assignment_example}
    \end{minipage}
    \hfill
    \begin{subfigure}[c]{0.31\linewidth}
        \centering
        \includegraphics[width=\linewidth]{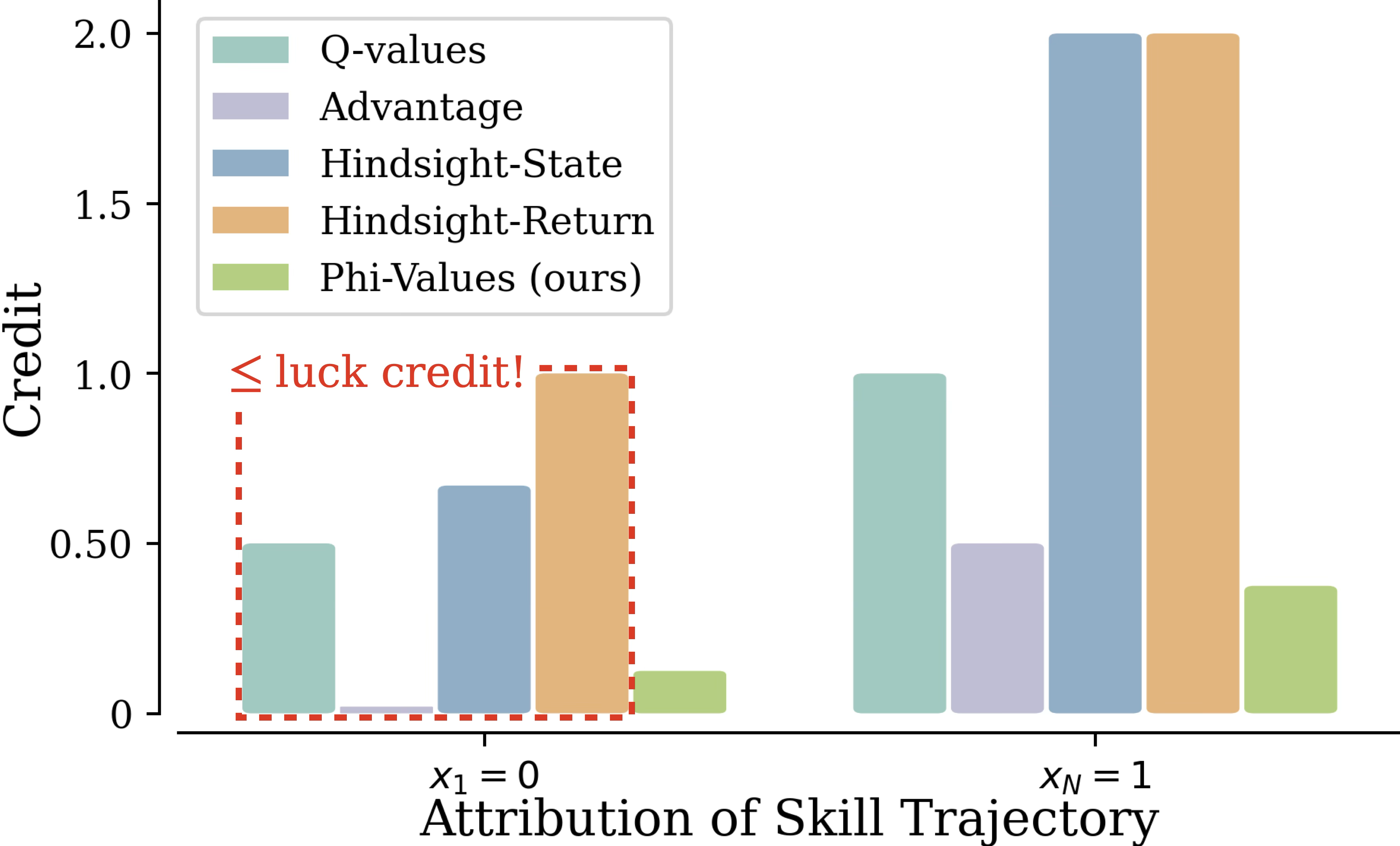}
    \end{subfigure}
    \hfill
    \begin{subfigure}[c]{0.31\linewidth}
        \centering
        \includegraphics[width=\linewidth]{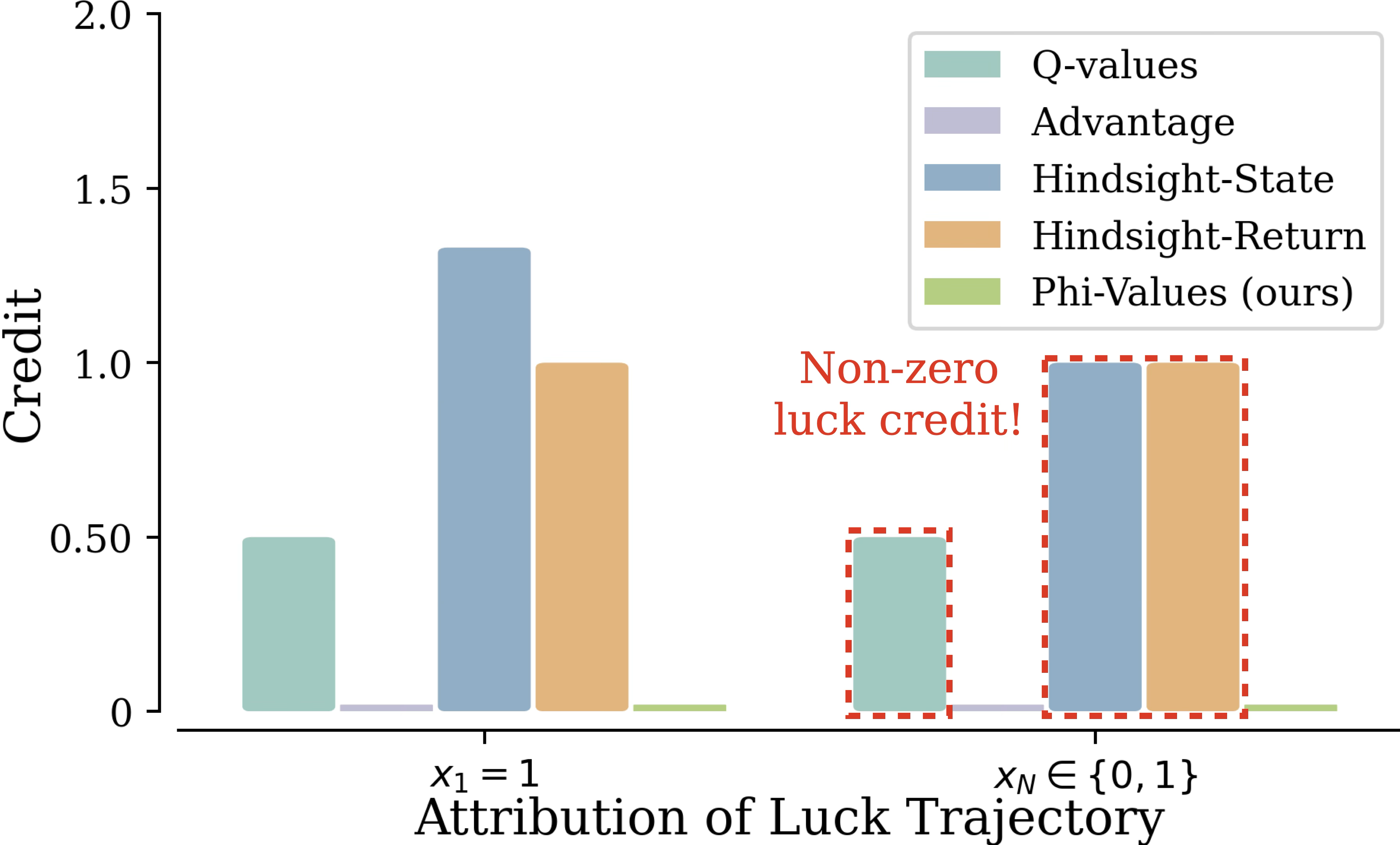}
    \end{subfigure}
    \caption{\textbf{Among all tested methods, only causal credit assignment correctly distinguishes Skill from Luck.} \emph{Left:} A SkillLuck MDP under a uniform random policy. The first action selects the branch; on the top branch, $x_N{=}1$ yields reward $1$ with certainty (``Skill,'' red) while $x_N{=}0$ yields $0$ (gray). On the bottom branch, the reward is drawn from $\text{Bern}(0.5)$ regardless of $x_N$ (``Luck,'' blue). \emph{Center:} Only our causal credit assignment assigns more credit to the first action on the Skill trajectory ($x_1{=}0$) than on the Luck trajectory ($x_1{=}1$). \emph{Right:} On the Luck trajectory, $x_N$ has no causal effect on the reward. Only the advantage and our method correctly assign zero credit to $x_N$.}
    \label{fig:intro example}
    \vspace{-0.1in}
\end{figure*}

However, locating the true causes of the rewards is challenging~\citep{pignatelli2024creditsurvey}.
In Figure~\ref{fig:intro example}, we construct a simple MDP to reflect this intricate nature of CAP. 
Under a uniform random policy, the agent's first action determines which branch it follows.
On the top branch, the last action determines the reward: one choice yields reward $1$ with certainty (``Skill,'' red) while the other yields $0$ (gray).
On the bottom branch, the reward is $0$ or $1$ with equal probability regardless of the last action (``Luck,'' blue).
Thus, a good credit assignment method should assign bigger positive credit to both the first and last actions on Skill trajectory than those on Luck, since changing either one would have altered the return distribution.
On the Luck trajectory, the last action has no causal effect on the return and should receive zero credit. Surprisingly, as we show in Figure~\ref{fig:intro example}, none of the prominent existing methods produces the fully correct credit assignment
(see Appendix~\ref{app:ca-methods} for a detailed discussion).

Humans, on the other hand, routinely perform credit assignment which does not seem to be challenging as it appears to machines.
One possible reason is that we are endowed with causal reasoning capabilities that enable us to allocate credit counterfactually.
Chess masters, for instance, replay key positions after every game, analyzing what would have happened under different moves.
As \citet{kasparov2007life} writes, ``it is so important to question success as vigorously as you question failure.''
Structural causal models (SCMs)~\citep{pearl2009causality,bareinboim2022pearl} formalize this type of counterfactual reasoning: given an observed trajectory, we can simulate the outcome under a different action at any step while holding all other factors fixed, isolating the causal effect of that decision. This type of counterfactual analysis lays the foundation for a fine-grained attribution for individual actions in each trajectory regardless of delayed effect, stochasticity and sparsity of the reward signals. 

In this work, building on the Counterfactual Shapley Value ($\phi$-value) framework of \citet{lee2025}, we distribute the total causal effect across actions using Shapley values~\citep{shapley:book1952}, which provide the unique allocation satisfying efficiency, symmetry, and null-player axioms.
The resulting $\phi$-MDP replaces the environment's reward signal with per-step $\phi$-values, providing dense causal learning signals even when the original reward is delayed or sparse.
We prove this redistribution preserves the optimal policy (Theorem~\ref{thm:optimal-policy-equiv}) and derive a consistent estimator computing all $T$ values from a single coalition sample in $O(T)$ time (Theorem~\ref{thm:optimal-proposal}).
We integrate these into $\phi$-PPO, which replaces per-step rewards with $\phi$-values and uses Prioritized Trajectory Replay (PTR).
PTR selects subtrajectories by causal importance, in contrast to PER~\citep{schaul2015per} which prioritizes individual transitions by TD error.
Our contributions are summarized as follows.
\begin{itemize}[label=\textbullet, leftmargin=20pt, topsep=0pt, parsep=0pt, itemsep=1pt]
    \item \textbf{Credit Assignment via $\phi$-values}. We show that credit assignment via $\phi$-values is equivalent to a reward redistribution that
    preserves the optimal policy and spikes only on causal actions.
    \item \textbf{Tractable $\phi$-value Estimation}. We propose a consistent $\phi$-value estimator that achieves linear time complexity, with a tunable bias--variance tradeoff via $\lambda$-bootstrapping.
    \item \textbf{Efficient and Explainable Policy Optimization}. We propose $\phi$-PPO and demonstrate its strong empirical performance in several challenging credit assignment testbeds. 
\end{itemize}

%% file: figures/intro_credit_assignment_example.tex

\begin{tikzpicture}[
    state/.style={inner sep=0pt, minimum size=7mm, font=\normalsize, circle, draw},
    start/.style={state, fill=white},
    mid/.style={state, fill=gray!10, draw=gray!40},
    term/.style={state, fill=yellow!30, draw=orange!50, },
    arr/.style={->, thick, >=stealth},
    opt/.style={arr, red!50},
    sub/.style={arr, blue!70},
    lbl/.style={font=\tiny, midway},
    ann/.style={font=\tiny, text=black!60},
]

\node[start] (s0) at (-0.2, 0) {$s_1$};

\node[mid] (s1t) at (0.8, 1.1) {$s_{2}$};
\node[ann] (dott) at (1.7, 1.1) {$\cdots$};
\node[mid] (snt) at (2.6, 1.1) {$s_{N}$};
\node[term] (Yt) at (4.1, 1.1) {$s_F$};

\node[mid] (s1b) at (0.8, -1.1) {{$s_{2}'$}};
\node[ann] (dotb) at (1.7, -1.1) {$\cdots$};
\node[mid] (snb) at (2.6, -1.1) {{$s_{N}'$}};
\node[term] (Yb) at (4.1, -1.1) {$s_F'$};

\draw[opt] (s0) -- node[lbl, above, sloped] {\small $x_1{=}0$} (s1t);
\draw[sub] (s0) -- node[lbl, below, sloped] {\small $x_1{=}1$} (s1b);
\draw[arr, gray!50] (s1t) -- (dott);
\draw[arr, gray!50] (dott) -- (snt);
\draw[arr, gray!50] (snt) -- node[lbl, above] {\normalsize $\substack{x_{N}{=}0\\y=0}$} (Yt);
\draw[arr, gray!50] (s1b) -- (dotb);
\draw[arr, gray!50] (dotb) -- (snb);
\draw[arr, blue!70] (snb) -- node[lbl, below] {\normalsize $\substack{\forall x_{N},\\y\sim \operatorname{Bern}(0.5)}$} node[lbl, above] {\scriptsize ``Luck"} (Yb);
\draw[arr, red!50] (snt) -- node[lbl, below, sloped] {\normalsize $\substack{x_{N}{=}1\\y=1.0}$} node[lbl, above, sloped] {\scriptsize ``Skill"} (Yb);


\end{tikzpicture}

%% file: sections/2-prelims.tex
\section{Preliminaries}
\label{sec:prelim}

In this section, we establish foundational concepts to facilitate our discussion. Throughout the paper, we use uppercase letters to denote random variables ($X$), lowercase for their realizations ($x$) and bold letters for sets ($\mathbf{V}$).

\subsection{Structural Causal Models and MDPs}
\label{sec:scm}

Structural Causal Models (SCMs) lay the mathematical foundations for counterfactual reasoning needed to isolate causal contributions \citep{bareinboim2022pearl}.

\begin{definition}[Structural Causal Model]
\label{def:scm}
An SCM is $\cM = \langle \V, \U, \mathcal{F}, P(\U) \rangle$ where $\V$ is endogenous variables, $\U$ is exogenous variables, $\mathcal{F} = \{f_i\}$ are structural equations $V_i = f_i(\mathbf{Pa}_i, \U_i)$, and $P(\U)$ is exogenous distribution.
\end{definition}

Under this definition, the counterfactual $V_{\mathbf{x}}(\mathbf{u})$ is the value of $V$ when $\X$ is set to $\mathbf{x}$ while $\U = \mathbf{u}$.
Then we cast MDPs as SCMs to ground the credit assignment discussion with causal semantics.

\begin{definition}[MDP-SCM]
\label{def:mdp-scm}
An MDP-SCM is an SCM with endogenous variables $\V = \{S_t, X_t, Y_t\}_{t=1}^T \cup \{Y\}$ (states, actions, per-step rewards, and outcome $Y = f_Y(Y_{1:T})$), mutually independent exogenous variables $\U = \{U_{S_t}, U_{X_t}, U_{Y_t}\}_{t=1}^T$, and structural equations defining system dynamics:
$
    S_1 = f_S(U_{S_1}), S_{t+1} = f_S(S_t, X_t, U_{S_{t+1}}),
    X_t = \pi(S_t, U_{X_t}),\ Y_t = r(S_t, X_t, U_{Y_t}) 
$.
\end{definition}
We use $P(S)$ to denote the initial state distribution induced by $U_{S_1}$ and the discounted total reward as the outcome $Y = \sum_t \gamma^{t-1} Y_t$. Value functions $V(s)$, Q-values $Q(s,x)$, and advantages $A(s,x)$ under MDP-SCM follow the standard RL definitions~\citep[Sec. 3.1]{sutton2018}.

\subsection{Policy Gradients}
\label{sec:policy gradients}
The goal of reinforcement learning is to find the policy that maximizes the cumulative reward (outcome),
$
    \pi^* = \arg\max_\pi \mathbb{E}_{\tau\sim \mathcal{M}^\pi}[Y],
$
where trajectory $\tau\sim \mathcal{M}^\pi$ is sampled from the environment MDP $\mathcal{M}^\pi$ under policy $\pi$. Optimizing this objective with respect to a parameterized policy $\pi_\theta$ leads to the basic form of policy gradients~\citep{williams1992simple},
\begin{align}
    \mathbb{E}\Big[\sum_{t \geq 0} \gamma^t\nabla_\theta \log\pi(x_t|s_t) \big(\sum_{t'\geq t} \gamma^{t'-t}Y_{t'} - V(s_t)\big)\Big]
\end{align}
where $V(s_t) = \mathbb{E}\big[\sum_{t'\geq t} \gamma^{t'-t}Y_{t'}|s_t\big]$ is the value function. Here it serves as a baseline with respect to which the advantage of taking the specific action $x_t$ instead of others is strengthened. Subtracting the baseline will not bias the gradient as long as the baseline is a function of state $s_t$. Modern policy gradient methods use generalized advantages~\citep{schulmanHighDimensionalContinuousControl2016a} and gradient clipping~\citep{schulman2017ppo} to reduce variance.
In this work, we base our learning algorithm on proximal policy gradient (PPO)~\citep{schulman2017ppo}, whose objective is to maximize the following,
\begin{align}
    \min\bigl(r_t(\theta) \hat{A}_t,\; \mathrm{clip}(r_t(\theta), 1{-}\epsilon_{\mathrm{clip}}, 1{+}\epsilon_{\mathrm{clip}}) \hat{A}_t\bigr)
\end{align}
where $\hat{A}_t =  \sum_{\ell=0}^{T-t+1} (\gamma\lambda)^\ell \delta_{t+\ell},\  \delta_{t} = Y_t + \gamma V(s_{t+1}) - V(s_t)$ is the generalized advantage and $r_t(\theta)=\frac{\pi_\theta(x_t|s_t)}{\pi_{\theta_{\operatorname{old}}}(x_t|s_t)}$ is the importance ratio between the current policy distribution and the previous policy collecting the data.

\subsection{Counterfactual Shapley Values}
\label{sec:l3-sv}
Shapley values~\citep{shapley1953} fairly distribute credit among players in cooperative games.
Given players $\X$ and value function $f: 2^\X \to \R$, a \emph{coalition} $\mathbf{Z} \subseteq \X$ is a subset of players, and $f(\mathbf{Z})$ measures their joint contribution.
The Shapley value $\phi_t$ quantifies player $X_t$'s fair share of the total value $f(\X) - f(\emptyset)$:
\begin{align}
    \phi_t = \sum_{\mathbf{Z} \subseteq \X \setminus \{X_t\}} \frac{|\mathbf{Z}|!(T-|\mathbf{Z}|-1)!}{T!} \bigl[f(\mathbf{Z} \cup \{X_t\}) - f(\mathbf{Z})\bigr].
\end{align}
Shapley values satisfy desirable properties~\citep{shapley1953}, including \emph{efficiency}: $\sum_t \phi_t = f(\X) - f(\emptyset)$.
For credit assignment, actions $\X = \{X_1, \ldots, X_T\}$ are players, and given trajectory $\tau = (s_{1:T}, x_{1:T}, y_{1:T})$ (the \emph{evidence} for counterfactual reasoning), the Shapley value $\phi_t$ quantifies how much $X_t$ contributed to outcome $Y$.

\paragraph{Causal Contributions.}
We use counterfactual natural total effect~\citep{lee2025} to measure a set of actions' causal contributions to the outcome. It is the expected difference between the observed outcome and a counterfactual outcome where a subset of actions are replaced.
``Natural'' means the comparison uses the posterior $P(\U \mid \tau)$ rather than the prior, ensuring the counterfactual is consistent with the observed trajectory:
\begin{align}
    \label{eq:ctf nte}
    f(\mathbf{Z}) &= \mathrm{NTE}(\mathbf{Z}, Y \mid \tau) 
    = \E_{\substack{\mathbf{u'} \sim P(\U),  \mathbf{u} \sim P(\U \mid \tau)}}\bigl[Y(\mathbf{u}) - Y_{\mathbf{Z}(\mathbf{u}')}(\mathbf{u})\bigr],
\end{align}
where $Y(\mathbf{u})$ is the observed outcome and $Y_{\mathbf{Z}(\mathbf{u}')}(\mathbf{u})$ is the counterfactual with coalition actions replaced by baseline alternatives $\mathbf{Z}(\mathbf{u}')$ sampled from a baseline policy $\pi_{\mathrm{base}}$ (Sec.~\ref{sec:ctf-sim}), while exogenous $\mathbf{u}$ is fixed.

\paragraph{Kernel Formulation.}
The standard Shapley formula sums over $T!$ permutations; the kernel formulation~\citep{lundberg2017} sums over $2^T$ coalitions, enabling amortization (Section~\ref{sec:computing}).
Let $\mathbf{z} \in \{0,1\}^T$ denote the coalition mask ($z_t = 1$ iff $X_t \in \mathbf{Z}$):
\begin{equation}
    \phi_t = \sum_{\mathbf{z} \in \{0,1\}^T} \kappa_t(\mathbf{z}) \cdot f(\mathbf{z}),
    \label{eqn:kernel}
\end{equation}
where $|\mathbf{z}| = \sum_t z_t$ and $
    \kappa_t(\mathbf{z}) = 
        \frac{1}{T\binom{T-1}{|\mathbf{z}|-1}}$ if $ z_t = 1$ while 
        $\kappa_t(\mathbf{z}) = -\frac{1}{T\binom{T-1}{|\mathbf{z}|}}$ if $ z_t = 0$.

%% file: sections/3-credit-assignment.tex
\section{Counterfactual Credit Assignment}
\label{sec:credit-assignment}

\input{figures/overview}

In the remainder of the text, we introduce our proposed causal solution to CAP. In Section~\ref{sec:credit-assignment}, we formalize the credit assignment problem from a causal perspective, propose $\phi$-values as a principled solution and its related calculations in the RL context. In Section~\ref{sec:computing}, we develop efficient estimators for $\phi$-values and incorporate them into policy learning.
Figure~\ref{fig:phi-mdp-overview} visualizes the conceptual structures.

First, we define the Credit Assignment Problem (CAP) as a causal inference problem: only actions that causally contribute to the outcome should receive credit, proportional to their contributions.

\begin{definition}[Causal Credit Assignment, Informal]
\label{def:causal credit assignment}
A causal credit assignment function is a mapping $\phi$ that maps a trajectory, an action at a time step and a baseline policy to a real number denoting its causal contribution to the outcome observed in the trajectory,
\begin{align}
    \phi: \tau \times \mathcal{X} \times T \times \Pi \mapsto \mathbb{R}^T
\end{align}
where $\tau \sim \cM_\pi$ is a trajectory sampled from the baseline policy and $\Pi$ is the policy space. The causal assignment $\phi$ should satisfy the following desiderata,
\begin{enumerate}[label={$\mathbf{D}_\arabic*$}, leftmargin=23pt, topsep=0pt, parsep=0pt, itemsep=1pt]
    \item \textbf{Causal Admissibility} : Non-cause actions should be assigned zero credit;
    \item \textbf{Causal Power} : Actions causing the rewards should be assigned non-zero credit;
    \item \textbf{Causal Normality} : Actions against more probable baseline policies get more credit;
    \item \textbf{Causal Effect Scaling} : Given the same baseline policy, actions affecting the outcome more should be allocated more credit. 
\end{enumerate}
\end{definition}

A complete formal version of the causal credit assignment desiderata is presented in App.~\ref{sec:theory detail}, Def.~\ref{def:causal credit assignment formal}. Here, we measure each action's causal contribution to the outcome as the counterfactual natural total effect (Eq.~\ref{eq:ctf nte}). Unlike discounted summation of rewards, the counterfactual natural total effect is the difference between the returns under baseline policy and returns had we acted differently given an already realized trajectory. Advantage function shares a similar idea but they cannot capture counterfactual outcomes with respect to specific trajectories or quantify an action's full potential under different action selections at other time steps. 

Counterfactual natural total effect provides a fair source of credit pool as the basis for allocation and these desiderata are axioms embedded in human credit assignment systems, which a causal credit assignment function should satisfy. $\mathbf{D_1,D_2}$ together guarantee that the credit assignment function generates both sufficient and necessary allocations. That is, actions get credit if and only if they cause the outcome. $\mathbf{D_3,D_4}$ require that the scale of the allocated credit should reflect an action's relative contribution compared against a baseline policy. We show that counterfactual Shapley values ($\phi$-values, Eq.~\ref{eqn:kernel}) calculated from counterfactual natural total effect satisfy the above desiderata~\citep{lee2025} and is a desirable credit assignment method. 

\begin{restatable}[$\phi$-Values are Causal]{theorem}{ThmPhiIsCausal}
\label{thm:phi is causal}
    $\phi$-values satisfy the causal credit assignment criteria, $\mathbf{D_1-D_4}$.
\end{restatable}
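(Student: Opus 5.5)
The plan is to fix a formalization of $\mathbf{D_1}$--$\mathbf{D_4}$ consistent with Definition~\ref{def:causal credit assignment formal} in App.~\ref{sec:theory detail}, and then check each one against the kernel form Eq.~\eqref{eqn:kernel} with $f$ given by the counterfactual natural total effect Eq.~\eqref{eq:ctf nte}. Everything reduces to properties of the marginal contributions $\Delta_t(\mathbf{Z}) := f(\mathbf{Z}\cup\{X_t\}) - f(\mathbf{Z})$. By Eq.~\eqref{eq:ctf nte}, this equals $\E_{\mathbf{u}\sim P(\U\mid\tau),\mathbf{u}'}\bigl[Y_{\mathbf{Z}(\mathbf{u}')}(\mathbf{u}) - Y_{(\mathbf{Z}\cup\{X_t\})(\mathbf{u}')}(\mathbf{u})\bigr]$. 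In words, it is the expected change in the counterfactual outcome when $X_t$ is additionally replaced by a baseline draw, with the exogenous noise held at its posterior.

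I would call $X_t$ a \emph{non-cause} of $Y$ given $\tau$ if, for every coalition $\mathbf{Z}$ and posterior-almost every $\mathbf{u}$, replacing $X_t$ by a baseline action leaves $Y$ unchanged in distribution. Concretely, $Y_{\mathbf{Z}\cup\{x_t'\}}(\mathbf{u}) = Y_{\mathbf{Z}}(\mathbf{u})$ for all $x_t'$ in the support of $\pi_{\mathrm{base}}$.

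\begin{itemize}
\item \textbf{$\mathbf{D_1}$.} Non-cause implies $\Delta_t(\mathbf{Z})=0$ for every $\mathbf{Z}$. This is exactly the Shapley null-player axiom, so $\phi_t=0$.
\item \textbf{$\mathbf{D_2}$.} I would define a cause as an action with $\Delta_t(\mathbf{Z})\neq 0$ for some $\mathbf{Z}$. I would strengthen this to sign-consistent marginal effects, as is standard in \citet{lee2025}, since otherwise cancellation can occur. Then $\phi_t=\sum_{\mathbf{Z}} w_{|\mathbf{Z}|}\Delta_t(\mathbf{Z})$ with all weights $w_{|\mathbf{Z}|}>0$, so $\phi_t\neq 0$.
\end{itemize}

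For $\mathbf{D_3}$ and $\mathbf{D_4}$ I would exploit linearity. Write the inner expectation over $\mathbf{u}'$ as an integral over baseline actions. For two baseline policies $\pi_{\mathrm{base}}$ and $\pi'_{\mathrm{base}}$, $\phi_t$ is linear in $f$, and $f$ is linear in the baseline action distribution at step $t$. Write $\Delta_t(\mathbf{Z}) = \sum_{x'} \pi_{\mathrm{base}}(x'\mid\cdot)\,[\,Y_{\mathbf{Z}}(\mathbf{u}) - Y_{\mathbf{Z},x_t=x'}(\mathbf{u})\,]$. The summand vanishes at $x'=x_t$. So moving baseline probability mass onto alternatives that are worse than the observed action raises $\Delta_t$, and hence $\phi_t$.
\begin{itemize}
\item \textbf{$\mathbf{D_3}$.} I would read ``actions against more probable baseline policies'' as this monotonicity. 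Its formal statement is that, holding the other components fixed, $\phi_t$ is monotone in the baseline weight placed on actions that are worse than the observed one.
\item \textbf{$\mathbf{D_4}$.} With the baseline fixed, compare two actions (or environments) $t$, $t'$ with $\Delta_t(\mathbf{Z})\ge\Delta_{t'}(\mathbf{Z})$ for all $\mathbf{Z}$ not containing either. Shapley monotonicity (strong monotonicity of Young) then gives $\phi_t\ge\phi_{t'}$, using the positive kernel weights again.
\end{itemize}

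The main obstacle is definitional rather than computational. The informal desiderata, especially $\mathbf{D_2}$ and $\mathbf{D_3}$, are false without side conditions. For $\mathbf{D_2}$, marginal contributions of mixed sign can cancel. For $\mathbf{D_3}$, the notion of ``more probable'' has to be pinned down. My plan is to state each desideratum with the sign or dominance condition that makes it provable, align these conditions with the formal Definition~\ref{def:causal credit assignment formal}, and cite the corresponding results of \citet{lee2025}. Each check is then a short appeal to linearity, the null-player axiom, or positivity of the kernel weights.
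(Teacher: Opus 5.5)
Your plan follows the paper's proof essentially step for step: the null-player axiom for $\mathbf{D_1}$, strictly positive Shapley weights for $\mathbf{D_2}$, monotonicity of the marginal contributions in the baseline probabilities for $\mathbf{D_3}$, and Young's strong monotonicity for $\mathbf{D_4}$. Your sign-consistency hypothesis for $\mathbf{D_2}$ is a necessary repair, not a detour. The paper concludes $\phi_t\neq 0$ only because the positively weighted sum contains one nonzero marginal contribution, which ignores cancellation, and $\mathbf{D_2}$ read that way is false. For example, take $T=2$, $S_2=X_1$, $Y_1=0$, $Y_2=g(S_2,X_2)$, deterministic dynamics, $\pi=\pi_{\mathrm{base}}$ uniform, and observed $x_1=x_2=1$. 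Algorithm~\ref{alg:ctf-sim} gives $\Delta_1(\emptyset)=\tfrac{\gamma}{4}[2g(1,1)-g(0,0)-g(0,1)]$ and $\Delta_1(\{X_2\})=\tfrac{\gamma}{4}[g(1,1)+g(1,0)-g(0,0)-g(0,1)]$. With $g(1,1)=2$, $g(1,0)=-2$, $g(0,0)=g(0,1)=1$ these are $\pm\tfrac{\gamma}{2}$, so $\phi_1=0$ even though setting $X_1=0$ changes $Y$.

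The genuine gap is strictness: the formal $\mathbf{D_3}$ and $\mathbf{D_4}$ require $\phi_{\cM}>\phi_{\cM'}$, but linearity and Young's monotonicity only give $\ge$. The paper's proof has the same gap, since it passes from weakly larger marginal contributions straight to a strict conclusion. You never use the non-equivalence premises, which exist precisely to supply strictness. To close it, write $\Delta_t(\mathbf{Z})$ as the expectation of the functional dependence $c$ over the posterior on $\mathbf{u}$ and the baseline draws $(x',\mathbf{z}')$ for $\mathbf{Z}\cup\{X_t\}$. Then $\phi_t$ is the expectation of $c$ under a measure that gives every coalition positive Shapley weight.
\begin{itemize}
\item In $\mathbf{D_3}$, $P^{\cM}_+\ge P^{\cM'}_+$ makes the change a sum of nonnegative terms $c\,(P^{\cM}-P^{\cM'})$. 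The premise $P^{\cM}_{\bar c}\not\equiv P^{\cM'}_{\bar c}$ forces some baseline with $c\neq 0$ to carry different mass, so one term is strictly positive.
\item In $\mathbf{D_4}$, pointwise $c_{\cM}\ge c_{\cM'}$ together with $P^{\cM}(c_Z)\not\equiv P^{\cM'}(c_Z)$ gives $c_{\cM}>c_{\cM'}$ on a set of positive probability.
\end{itemize}
In both cases, positivity of the weights carries the strict gain to $\phi_t$.

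Two smaller points. First, phrase the $\mathbf{D_3}$ monotonicity through the sign of $c$ in each context, as $P_+$ does, rather than through alternatives ``worse than the observed action''. Whether $x'$ is worse depends on the rest of the coalition, and once the simulation is off-track at step $t$ the non-intervened action there is a fresh draw from $\pi$, so your summand need not vanish at $x'=x_t$. Second, Young's monotonicity is a same-player, two-game statement, which is exactly the formal $\mathbf{D_4}$. Comparing two steps of one trajectory instead needs $\phi_t-\phi_{t'}=\sum_{\mathbf{Z}\not\ni X_t,X_{t'}}w'_{|\mathbf{Z}|}[\Delta_t(\mathbf{Z})-\Delta_{t'}(\mathbf{Z})]$ with all $w'_{|\mathbf{Z}|}>0$.
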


Next, we present the details in calculating $\phi$-values and $\phi$-MDP, an MDP that redistributes rewards according to Counterfactual Shapley values, as our solution to the Causal Credit Assignment Problem.

\subsection{Counterfactual Simulation}
\label{sec:ctf-sim}
\input{figures/ctfsim}
To compute Shapley values $\phi_t$, we need a causal contribution function $f: 2^\X \to \R$ measuring $\X$'s contribution under a given coalition $\mathbf{z}$.
The NTE from Section~\ref{sec:l3-sv} averages over both posterior and prior exogenous variables; here, we condition on the observed trajectory, giving the \emph{conditional} causal contribution:
$
    f(\mathbf{z}) = Y - Y_{\sigma_{\mathbf{z}}},
$
where $Y$ is the observed return and $Y_{\sigma_{\mathbf{z}}}$ is the counterfactual return under intervention $\sigma_{\mathbf{z}}$
which specifies counterfactual actions based on coalition membership $\mathbf{z} \in \{0,1\}^T$ and trajectory state:
\begin{align}
    \sigma_{\mathbf{z}}(t) = \begin{cases}
        x_t & z_t = 0,\ s^{\mathbf{z}}_t = s_t \\
        x \sim \pi(\cdot \mid s^{\mathbf{z}}_t) & z_t = 0,\ s^{\mathbf{z}}_t \neq s_t \\
        x \sim \pi_{\mathrm{base}}(\cdot \mid s^{\mathbf{z}}_t) & z_t = 1
    \end{cases}
    \label{eqn:sigma-z}
\end{align}
where $s^{\mathbf{z}}_t$ is the counterfactual state at time $t$.
The baseline $\pi_{\mathrm{base}}$ defines the ``default'' actions for intervened time steps.
Options include: uniform random ($\pi_{\mathrm{base}}(x \mid s) = 1/|\cA|$), measuring contribution relative to random behavior; a fixed reference policy $\pi_0$, measuring improvement over $\pi_0$; or self-baseline ($\pi_{\mathrm{base}} = \pi$), isolating whether the \emph{specific} action $x_t$ matters versus what the same policy typically does.
We use self-baseline throughout training (Section~\ref{sec:doorkey} illustrates the effect of alternative baselines on attributions).
The intervention design rests on the following assumption.

\begin{assumption}[Counterfactual Independence]
\label{assum:ctf-indep}
Exogenous noise is independent across steps: $(S_{t+1})_{S_t, X_t} \perp (S_{t+1})_{S'_t, X'_t}$ for $(S_t, X_t) \neq (S'_t, X'_t)$, and $(X_t)_{S_t} \perp (X_t)_{S'_t}$ for $S_t \neq S'_t$.
\end{assumption}

This assumption does not constrain the MDP, only its SCM representation.
Since any distribution can be sampled via inverse CDF with independent uniform noise, any MDP admits an SCM with independent exogenous variables and the same optimal policy; the assumption selects this representation.
The consequence is simple: \emph{when parents match the observed trajectory, the observed value is the counterfactual; when parents differ, we resample}.
For actions (parent $S_t$): when $z_t = 0$ and $s^{\mathbf{z}}_t = s_t$, the counterfactual equals the observed $x_t$; when $s^{\mathbf{z}}_t \neq s_t$, we resample from $\pi(\cdot \mid s^{\mathbf{z}}_t)$.
When $z_t = 1$, we resample from $\pi_{\mathrm{base}}$.
For transitions and rewards (parents $(S_t, X_t)$): when $(s^{\mathbf{z}}_t, x^{\mathbf{z}}_t) = (s_t, x_t)$, we reuse $(s_{t+1}, y_t)$; otherwise, we resample.
See Fig.~\ref{fig:ctf-sim} for a visualization of the counterfactual simulation process.

The full pseudo-code is provided in Algo.~\ref{alg:ctf-sim} in appendix Sec.~\ref{sec:full code}; Section~\ref{sec:computing} makes exact computation tractable via coalition sampling.

\subsection{Reward Redistribution}
\label{sec:phi-mdp-def}
With the counterfactual Shapley values, we can define a new rewarding system based on $\phi_t$ while keeping the system dynamics unchanged. This leads to the $\phi$-MDP, a reward-redistributed counterpart of the original MDP.

\begin{definition}[$\phi$-MDP]
\label{def:phi-mdp}
Given an MDP-SCM $\cM$ and baseline policy $\pi_{\mathrm{base}}$, the \emph{$\phi$-MDP} $\cM^\phi_\pi$ replaces the reward function $r$ with the \emph{policy-induced Shapley reward}:
\begin{align}
    r^\phi_\pi(s, x) = \E_{\tau \sim \pi}[\phi_t \mid S_t = s, X_t = x],
\end{align}
where the expectation is over trajectories $\tau$ sampled from $\pi$, conditioned on visiting $(S_t, X_t) = (s, x)$.
\end{definition}

\begin{restatable}[Gradient Equivalence]{lemma}{LemGradientEquiv}
\label{lem:gradient-equiv}
For any baseline policy $\pi_{\mathrm{base}}$:
$\E_\pi[\sum_t \phi_t \cdot \nabla \log \pi(X_t \mid S_t)] = \E_\pi[Y \cdot \sum_t \nabla \log \pi(X_t \mid S_t)]$.
\end{restatable}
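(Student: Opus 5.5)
The plan is to reduce the identity to two facts: the \emph{efficiency} axiom of Shapley values (Section~\ref{sec:l3-sv}), and the score-function identity $\E_\pi[\nabla\log\pi(X_t\mid S_t)\mid S_t]=0$ together with its consequence that a score term is orthogonal to anything not depending on $X_t$.

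\textbf{Step 1 (efficiency).} Applying efficiency to Eq.~\eqref{eqn:kernel} with the conditional contribution $f(\mathbf z)=Y-Y_{\sigma_{\mathbf z}}$ gives $\sum_t\phi_t=f(\mathbf 1)-f(\mathbf 0)$. When $\mathbf z=\mathbf 0$, every step of Eq.~\eqref{eqn:sigma-z} falls in the first case, because parents match the observed trajectory inductively from $s_1$. So $Y_{\sigma_{\mathbf 0}}=Y$ and $f(\mathbf 0)=0$. Hence
\begin{align*}
\textstyle\sum_t\phi_t \;=\; Y-C(\tau), \qquad C(\tau):=Y_{\sigma_{\mathbf 1}},
\end{align*}
where $C(\tau)$ is the return of the rollout in which every action is redrawn from $\pi_{\mathrm{base}}$ and observed transitions are reused only when the counterfactual state--action pair coincides with the observed one (Assumption~\ref{assum:ctf-indep}).

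\textbf{Step 2 (per-step form).} Write $g_t=\nabla\log\pi(X_t\mid S_t)$. Since the left-hand side of the lemma pairs $\phi_t$ only with its own score $g_t$, I will not try to move $\sum_t\phi_t$ directly under $\sum_t g_t$. Instead I will compare
\begin{align*}
\E\Big[\textstyle\sum_t\phi_t g_t\Big] \quad\text{with}\quad \E\Big[\textstyle\sum_t\big(\sum_{t'}\phi_{t'}\big)g_t\Big].
\end{align*}
By the tower property, $\E[\phi_t g_t]=\E[r^\phi_\pi(S_t,X_t)g_t]$ (Definition~\ref{def:phi-mdp}). The cross terms $\E[\phi_{t'}g_t]$ with $t'\neq t$ should then be controlled as follows. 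For $t'<t$, condition on $S_t$ and use that the marginal over $X_t$ of the counterfactual computation for $\phi_{t'}$ involves $X_t$ only through coalitions, in which $X_t$ is either resampled or inherited. For $t'>t$, $X_t$ is already a parent of the state at $t'$, and I would try to show that the dependence of $\phi_{t'}$ on $X_t$ beyond $S_{t'}$ cancels in the kernel sum. Once the cross terms are disposed of, Step~1 gives
\begin{align*}
\E\Big[\textstyle\sum_t\phi_t g_t\Big]=\E\Big[(Y-C(\tau))\textstyle\sum_t g_t\Big].
\end{align*}

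\textbf{Step 3 (baseline term vanishes).} It remains to show $\E[C(\tau)\sum_t g_t]=0$. Under Assumption~\ref{assum:ctf-indep}, the all-intervened rollout with fresh exogenous noise has a law that does not depend on which actions $\pi$ produced. The coupling with $\tau$ through reused transitions preserves the marginal of the counterfactual trajectory, which is just the $\pi_{\mathrm{base}}$ rollout distribution. I would therefore integrate out $X_t$ given $S_t$ and argue that $\E[C\mid S_t,X_t]$ is constant in $X_t$. Then $\E[C\,g_t]=\E\big[\E[C\mid S_t]\,\E[g_t\mid S_t]\big]=0$ by the score identity.

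\textbf{Main obstacle.} The delicate parts are the cross terms in Step~2 and the claim in Step~3 that $\E[C\mid S_t,X_t]$ does not depend on $X_t$. The reuse rule couples the counterfactual rollout to the observed $(s_{t+1},y_t)$ exactly when the resampled action equals $x_t$. My first attempt would be an explicit coupling argument: replace each reused transition by an independent draw from the same kernel, and check that the joint law of (counterfactual trajectory, $S_t$, $X_t$) factorizes on the relevant event. If this factorization fails for general $\pi_{\mathrm{base}}$, I would fall back to showing the identity via the $\phi$-MDP reward $r^\phi_\pi$ and the policy gradient theorem, which gives the equality of expected per-step contributions directly.
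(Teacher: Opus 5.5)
\textbf{The gap is Step~2, and it cannot be closed.} The cross terms $\E[\phi_{t'}\nabla\log\pi(X_t\mid S_t)]$, $t'\neq t$, are generally nonzero. So the identity with each $\phi_t$ paired only with its own score is false as stated, even under self-baseline.

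Here is a counterexample. Take $T=2$, $Y_1=0$, $Y_2=X_2$ (so $Y=\gamma X_2$), and deterministic $S_2=X_1$. Fix $\pi(X_1{=}1\mid s_1)=p$ and $\pi(X_2{=}1\mid S_2{=}1)=q_1$. Let $\pi_\theta(X_2{=}1\mid S_2{=}0)=q_0$ with $\theta$ its logit, and set $\pi_{\mathrm{base}}=\pi$. The score at step~1 is zero. At step~2 it equals $X_2-q_0$ on $\{X_1=0\}$ and zero otherwise. So the right-hand side is $\gamma(1-p)q_0(1-q_0)$.

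For $T=2$, $\phi_2=\tfrac12(Y-Y^{\{2\}})+\tfrac12(Y^{\{1\}}-Y^{\{1,2\}})$. On $\{X_1=0\}$:
\begin{itemize}
\item $\E[Y^{\{2\}}\mid\tau]=\gamma q_0$;
\item $\E[Y^{\{1,2\}}\mid\tau]$ is a constant;
\item $\E[Y^{\{1\}}\mid\tau]=\gamma\bigl((1-p)X_2+p\,q_1\bigr)$, because the resampled first action lands back on track with probability $1-p$ and then inherits $X_2$.
\end{itemize}
Hence
\[
\E\Bigl[\textstyle\sum_t\phi_t\nabla\log\pi(X_t\mid S_t)\Bigr]=\bigl(1-\tfrac{p}{2}\bigr)\gamma(1-p)q_0(1-q_0)\neq\gamma(1-p)q_0(1-q_0).
\]
The shortfall is exactly the cross term $\E[\phi_1\nabla\log\pi(X_2\mid S_2)]=\tfrac{\gamma}{2}p(1-p)q_0(1-q_0)$, which comes from $\tfrac12(Y-Y^{\{1\}})$ inside $\phi_1$.

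The mechanism is general. Take $t'<t$ and a coalition with $z_t=0$. Whenever the intervention at $t'$ knocks the rollout off track, the marginal contribution $Y^{\mathbf z}-Y^{\mathbf z\cup\{t'\}}$ inherits $X_t$ on one side but resamples it on the other. It therefore carries a copy of $X_t$'s own effect, which correlates with the score and need not cancel in the kernel sum. For $\pi_{\mathrm{base}}\neq\pi$, the terms with $t'>t$ fail as well: coalitions intervening only after $t$ swap $\pi$'s continuation value for $\pi_{\mathrm{base}}$'s, and that difference depends on $X_t$ through later states.

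Your fallback does not escape this. The policy gradient theorem on the $\phi$-MDP yields a reward-to-go pairing, and matching that to the right-hand side again requires the $t'<t$ cross terms to vanish.

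\textbf{What your argument does prove.} Steps~1 and~3 establish $\E\bigl[(\sum_{t'}\phi_{t'})\sum_t\nabla\log\pi(X_t\mid S_t)\bigr]=\E\bigl[Y\sum_t\nabla\log\pi(X_t\mid S_t)\bigr]$ for every $\pi_{\mathrm{base}}$. That is exactly what the paper's proof shows. It uses efficiency to get $\sum_t\phi_t=Y-Y^{\mathbf 1}$, then kills $\E[Y^{\mathbf 1}\sum_t\nabla\log\pi_t]$ via $Y^{\mathbf 1}\perp X_t\mid S_t$ and the score identity. The paper then reads this as the per-step statement without addressing the cross terms you correctly flagged.

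Your worry in Step~3 is unnecessary. $U_{S_{t+1}}$ and $U_{Y_t}$ are independent of everything factual and counterfactual up to step $t$. So a reused transition has law $P(\cdot\mid s^{\mathbf 1}_t,x^{\mathbf 1}_t)$ given the joint past. Also, the factual prefix that the counterfactual prefix reuses is independent of $X_t$ given $S_t$. Together these mean the all-intervened rollout's law given $(S_t,X_t)$ does not depend on $X_t$.

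The correct move is to drop Step~2 and state the result with $\sum_t\phi_t$ multiplying the full score sum. The per-step form is not provable.
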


\begin{remark}[Independence under self-baseline]
\label{rem:self-baseline-indep}
The proof uses $Y' \perp X_t \mid S_t$ where $Y' = Y_{\sigma_\mathbf{1}}$ is the full-baseline outcome.
Under self-baseline, $Y'$ is computed on a \emph{counterfactual trajectory} using fresh action samples $X'_{1:T} \sim \pi(\cdot \mid S^{\mathbf{1}}_{1:T})$, where $S^{\mathbf{1}}_{1:T}$ denotes the counterfactual state sequence.
Crucially, $Y'$ carries no information about which specific action $X_t$ was taken at the observed state $S_t$: the counterfactual uses fresh action samples independent of $X_t$.
The score function identity $\E_{X_t \sim \pi}[\nabla \log \pi(X_t \mid S_t)] = 0$ then gives $\E[Y' \nabla \log \pi_t] = 0$.
\end{remark}

Gradient equivalence means policy gradient algorithms receive the same update direction in expectation whether maximizing returns from MDP $\cM$ or its dual $\phi$-MDP.

\begin{restatable}[Optimal Policy Equivalence]{theorem}{ThmOptimalPolicyEquiv}
\label{thm:optimal-policy-equiv}
$\phi$-MDP preserves optimal policies as the original MDP.
\end{restatable}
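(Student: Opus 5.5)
The plan is to reduce Theorem~\ref{thm:optimal-policy-equiv} to Lemma~\ref{lem:gradient-equiv} by showing that, for every policy $\pi$, the expected return of $\pi$ in the $\phi$-MDP differs from its expected return in $\cM$ by a quantity that does not depend on $\pi$. Maximizers then coincide.

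\textbf{Step 1: total $\phi$-return via efficiency.} Let $J(\pi)=\E_\pi[Y]$. In $\cM^\phi_\pi$ the per-step reward is $r^\phi_\pi(S_t,X_t)=\E[\phi_t\mid S_t,X_t]$. By the tower property, the undiscounted $\phi$-return is $\E_\pi[\sum_t\phi_t]$. Shapley efficiency gives $\sum_t\phi_t=f(\mathbf{1})-f(\emptyset)$. Since $f(\emptyset)=Y-Y=0$ (no intervention reproduces the trajectory), this becomes
\[
J^\phi(\pi)=\E_\pi[Y]-\E_\pi[Y_{\sigma_{\mathbf{1}}}].
\]
This step fixes the aggregation convention: the $\phi$-MDP return is the sum of redistributed rewards, with no extra discount, because $Y$ already contains $\gamma$.

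\textbf{Step 2: the baseline term.} Let $Y'=Y_{\sigma_\mathbf{1}}$ be the outcome when every action is redrawn from $\pi_{\mathrm{base}}$ along a counterfactual trajectory. For a fixed baseline policy (uniform, or a reference $\pi_0$), $\E[Y']=J(\pi_{\mathrm{base}})$ is a constant independent of $\pi$. It follows that $J^\phi(\pi)=J(\pi)-J(\pi_{\mathrm{base}})$, so $\arg\max_\pi J^\phi=\arg\max_\pi J$.

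As a consistency check, differentiating gives exactly Lemma~\ref{lem:gradient-equiv}. The $Y'$ term contributes zero gradient, by the argument of Remark~\ref{rem:self-baseline-indep}.

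\textbf{Step 3 (main obstacle): self-baseline.} When $\pi_{\mathrm{base}}=\pi$, the term $\E[Y']=J(\pi)$ and $J^\phi\equiv 0$, so the naive argument collapses. Here I would not treat the baseline as the optimized variable. Instead I would fix the reward function $r^\phi_{\pi}$ induced by a reference policy and read optimality in the standard MDP sense, as a fixed point of policy improvement.

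Lemma~\ref{lem:gradient-equiv} shows that, at every $\pi$, the policy-gradient field of $\cM^\phi_\pi$ equals that of $\cM$. Hence a policy is stationary or optimal for one exactly when it is for the other. In particular, $\pi^*$ admits no improving direction in $\cM^\phi_{\pi^*}$, and any policy optimal in the $\phi$-MDP with its own induced reward has zero true gradient.

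To upgrade stationarity to optimality, I would use the per-state form of the gradient identity and a performance-difference argument. I would try to show that the $\phi$-advantage $\E[\phi_t-\ldots\mid s,x]$ equals $A^\pi(s,x)$ up to a state-only term. That follows from conditioning the gradient identity on $S_t=s$ together with the score-function identity, so greedy improvement in either MDP selects the same actions and the optimal policies coincide.

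The delicate point is verifying this state-dependent-offset claim, since $\phi_t$ depends on future steps through coalitions.
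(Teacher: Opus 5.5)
\paragraph{Steps 1--2.} These steps are correct and use the same decomposition as the paper. The paper's proof also writes $J_\phi(\theta)=J(\theta)-\E_{\pi_\theta}[Y^{\mathbf 1}]$ via efficiency and argues that the second term does not move the maximizer. Your justification is cleaner than the paper's appeal to matching gradients: the $\mathbf z=\mathbf 1$ counterfactual is, in law, a fresh $\pi_{\mathrm{base}}$-rollout from $S_1$, because reused transitions have the correct conditional law, so $\E[Y^{\mathbf 1}]=J(\pi_{\mathrm{base}})$ is a constant.

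\paragraph{Your Step~3 diagnosis is right.} It exposes a flaw in the paper's argument. The paper claims $\E_{\pi_\theta}[Y^{\mathbf 1}]$ has zero gradient by the score-function argument of Lemma~\ref{lem:gradient-equiv}. That argument only removes the derivative through the law of the \emph{observed} actions. Under self-baseline the law of $Y^{\mathbf 1}$ itself depends on $\theta$, so $\E_{\pi_\theta}[Y^{\mathbf 1}]=J(\theta)$ and $J_\phi\equiv 0$. Indeed, the computation $\E_\pi[\sum_t\phi_t\mid S_1=s]=V(s)-V(s)=0$ in the proof of Corollary~\ref{cor:chunked-phi} holds for every $\pi$, not only $\pi^*$. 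So the statement in the form $\arg\max_\theta\E_{\pi_\theta}[\sum_t\phi_t]=\arg\max_\theta J$ is only established for a fixed baseline.

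\paragraph{The gap is in your repair.} There are two problems.

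First, the gradient-field equality you import is not available. The paper's proof of Lemma~\ref{lem:gradient-equiv} only establishes the total-return form $\E[(\sum_{t'}\phi_{t'})\sum_t\nabla\log\pi_t]=\E[Y\sum_t\nabla\log\pi_t]$, which involves the non-Markov $\phi_{t'}$. The policy gradient of $\cM^\phi_\pi$ at $\pi$ is instead $\sum_t\E[r^\phi_\pi(S_t,X_t)\nabla\log\pi(X_t\mid S_t)]$.

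Second, the additive-offset claim is false. Compute the marginal contributions directly. Take $z_t=0$ and condition on $(S_t,X_t)=(s,x)$. On the event that the $\mathbf z$-counterfactual is on-track at $t$, $Y^{\mathbf z}-Y^{\mathbf z\cup\{t\}}$ has mean $\gamma^{t-1}A^\pi(s,x)$: one continuation starts from the observed $x$, the other from a fresh $\pi$-draw, and both are $\pi$-rollouts in law. Off-track it has mean $0$, since both draw step $t$ from $\pi$. With the Shapley weights $w(\mathbf z)$ this gives
\[
r^\phi_\pi(s,x)=c_t(s)\,\gamma^{t-1}A^\pi(s,x),\qquad c_t(s)=\sum_{\mathbf z:\,z_t=0}w(\mathbf z)\,\Pr\bigl(s^{\mathbf z}_t=S_t\mid S_t=s\bigr)\in[1/T,1].
\]
Already for $T=2$, $c_2(s)=\tfrac12(1+p(s))$, where $p(s)$ is the probability that resampling $x_1$ still leads to $s$. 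This is $<1$ as soon as resampling $x_1$ can change $s_2$. Then the $\phi$-advantage differs from $\gamma A^\pi$ by an action-dependent amount. Likewise, the $\phi$-MDP gradient $\sum_t\E[c_t(S_t)\gamma^{t-1}A^\pi\nabla\log\pi_t]$ differs from $\nabla J$, so the lemma's per-step identity fails as well.

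\paragraph{What closes Step~3.} The missing ingredient is this multiplicative structure. Since $c_t>0$ depends only on the state, future $\phi$-rewards average to zero under $\pi$. Hence $Q^\phi_\pi=r^\phi_\pi$ and $V^\phi_\pi=0$, and a positive state-dependent rescaling of $A^\pi$ has the same greedy actions. So $\pi$ is greedy, and therefore optimal, in its own $\cM^\phi_\pi$ iff it is greedy with respect to $A^\pi$, i.e.\ optimal in $\cM$ (extending $r^\phi_\pi$ to unvisited state--action pairs by the same formula). Once you have this relation no gradient argument is needed; without it, your Step~3 does not go through.
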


More specifically, $
\arg\max_\theta \E_{\pi_\theta}[\sum_t \phi_t] = \arg\max_\theta \E_{\pi_\theta}[Y]$, so $\phi$-MDP is a drop-in replacement: any policy gradient algorithm applied to the redistributed rewards converges to the same optimal policy.
One caveat: the $\phi$-MDP is \emph{undiscounted} ($\gamma_\phi = 1$).
Since the NTE game outcome $Y = \sum_t \gamma^{t-1} r_t$ already incorporates temporal discounting, applying $\gamma$ to $\phi_t$ would double-discount.
The undiscounted $\phi$-return $\sum_t \phi_t$ equals $Y - Y^{\mathbf{1}}$ by Shapley efficiency.

\subsection{How Causal Credit Assignment Helps Learning}
\label{sec:when-helps}

We have shown that the $\phi$-MDP provides a principled credit assignment while preserving the optimal policy, but does it also improve learning efficiency?
As an overview, Fig.~\ref{fig:three-dimensions} illustrates each mechanism visually.
Below, we present the formal conditions our method helps policy optimization under the assumption of a finite second moment.

\begin{assumption}[Finite Second Moment]
\label{assum:second-moment}
$\forall t, \E[Y_t^2] \leq \sigma^2$.
\end{assumption}

\input{figures/three-dimensions}

\begin{restatable}[Sparse Causality]{proposition}{PropSparseCausality}
\label{prop:sparse-causality}
Let $k' = |\{t : \phi_t \neq 0\}|$, $\sigma^2_\phi = \max_t \Var[\phi_t \nabla\!\log \pi_t]$, and $\bar{\rho} \in [0,1]$ the maximal pairwise correlation among causal terms.
Then $\Var[\hat{g}^\phi] \leq k'(1 + (k'{-}1)\bar{\rho})\, \sigma^2_\phi$.
\end{restatable}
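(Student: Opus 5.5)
The plan is to treat the $\phi$-gradient estimator as a sum of per-step terms, $\hat{g}^\phi = \sum_{t=1}^T g_t$ with $g_t = \phi_t \nabla\log\pi_t$. The first step is to drop the non-causal steps. For every $t$ with $\phi_t = 0$ we have $g_t = 0$ identically, so $\hat{g}^\phi = \sum_{t \in \mathcal{C}} g_t$, where $\mathcal{C} = \{t : \phi_t \neq 0\}$ and $|\mathcal{C}| = k'$. Two points need care here.

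\begin{itemize}
\item Since $\phi_t$ depends on the trajectory, $\mathcal{C}$ is random. I will read the statement, as the definition of $k'$ suggests, with $\mathcal{C}$ being the set of steps whose $\phi$-value is not almost surely zero. This is the situation guaranteed by $\mathbf{D_1}$ of Theorem~\ref{thm:phi is causal}, where non-cause actions receive zero credit.
\item Terms with $\phi_t = 0$ a.s.\ contribute neither variance nor covariance, so they vanish from every sum below.
\end{itemize}

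If $\hat{g}^\phi$ is vector-valued, I will interpret $\Var$ as the trace of the covariance, i.e.\ $\E\|\hat{g}^\phi - \E\hat{g}^\phi\|^2$. The argument is the same coordinatewise.

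The second step is the standard variance expansion of a sum:
\[
\Var\Bigl[\sum_{t\in\mathcal{C}} g_t\Bigr] = \sum_{t\in\mathcal{C}} \Var[g_t] + \sum_{t\neq t' \in \mathcal{C}} \Cov(g_t, g_{t'}).
\]
The diagonal part is bounded by $k'\sigma^2_\phi$, using the definition of $\sigma^2_\phi$ as the maximum per-step variance. For the off-diagonal part, I write each covariance as $\Corr(g_t,g_{t'})\sqrt{\Var[g_t]\Var[g_{t'}]}$. This is at most $\bar\rho\,\sigma^2_\phi$, by the definition of $\bar\rho$ as the maximal pairwise correlation among causal terms together with Cauchy--Schwarz. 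There are $k'(k'-1)$ ordered pairs, which gives the bound
\[
k'\sigma^2_\phi + k'(k'-1)\bar\rho\,\sigma^2_\phi = k'(1+(k'-1)\bar\rho)\,\sigma^2_\phi.
\]

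The third step checks that every quantity is finite, so the expansion is legitimate. Assumption~\ref{assum:second-moment} gives finite $\E[Y_t^2]$, hence finite second moments of $Y$ and of the counterfactual returns $Y_{\sigma_{\mathbf z}}$. Each $\phi_t$ is a finite kernel combination (Eq.~\ref{eqn:kernel}) of the differences $f(\mathbf z) = Y - Y_{\sigma_{\mathbf z}}$, so it also has a finite second moment. Combined with a bounded score, which I will assume as is standard, this makes $\sigma^2_\phi$ finite.

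The main obstacle is interpretive rather than computational. It lies in the first step: making precise the sense in which non-causal terms vanish identically, and in reading $\bar\rho$ as an upper bound on correlations, which may be negative, in which case the bound only becomes looser. If $\mathcal{C}$ must be treated as random, the fallback is to condition on $\mathcal{C}$ and apply the same bound. This works cleanly with $k'$ taken as the maximal support size. Otherwise an extra term $\Var[\E[\hat{g}^\phi \mid \mathcal{C}]]$ would appear, and I would argue it is absorbed by the definition of $\sigma^2_\phi$.
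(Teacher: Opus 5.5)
Your proposal is correct and matches the paper's proof: discard the steps with $\phi_t = 0$ (the paper attributes this to D1, the null-player axiom), expand the variance of the remaining sum of $k'$ terms, bound the diagonal by $k'\sigma^2_\phi$, and bound each of the $k'(k'-1)$ ordered cross-covariances by $\bar\rho\,\sigma^2_\phi$ using the correlation definition and Cauchy--Schwarz. Your reading of $\mathcal{C}$ as the fixed set of steps whose $\phi_t$ is not almost surely zero is what makes the paper's first step rigorous, so the conditioning fallback is unnecessary (and would not work as stated, since the conditional variances and correlations are not controlled by the unconditional $\sigma^2_\phi$ and $\bar\rho$).
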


When few actions influence the outcome ($k' \ll T$), $\phi$-values assign zero credit to non-causal actions, so only $k'$ terms contribute to gradient variance.
The cross-correlation $\bar{\rho}$ is small in practice: the score function identity $\E[\nabla\!\log\pi_s \mid S_s] = 0$ decorrelates terms at distinct timesteps, and $\bar{\rho} = 0$ exactly in deterministic environments (Appendix~\ref{sec:theory detail}).

\begin{restatable}[High Stochasticity]{proposition}{PropStochasticity}
\label{prop:stochasticity}
Let $\mathbf{z}' = \mathbf{z}$ with $z_t = 1$ and $\rho_Y = \Corr[Y^{\mathbf{z}'}, Y^{\mathbf{z}}]$ under shared exogenous randomness.
Then $\Var[\phi_t] \propto (1-\rho_Y)$.
\end{restatable}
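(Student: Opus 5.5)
The plan is to write $\phi_t$ in its kernel form (Eq.~\ref{eqn:kernel}) and group the $2^T$ coalitions into pairs $(\mathbf{z},\mathbf{z}')$ that differ only in coordinate $t$ ($z_t=0$, $z'_t=1$). With the conditional contribution $f(\mathbf{z}) = Y - Y^{\mathbf{z}}$ from Section~\ref{sec:ctf-sim}, the observed return $Y$ cancels inside each pair. This gives the standard marginal-contribution form
\begin{align*}
\phi_t = \sum_{\mathbf{z}:\, z_t = 0} w(|\mathbf{z}|)\,\bigl(Y^{\mathbf{z}} - Y^{\mathbf{z}'}\bigr), \qquad w(k) = \frac{k!\,(T-k-1)!}{T!},
\end{align*}
where the weights $w$ are nonnegative and sum to one. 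Equivalently, $\phi_t = \E_{\mathbf{z}}[D_{\mathbf{z}}]$ with $D_{\mathbf{z}} := Y^{\mathbf{z}} - Y^{\mathbf{z}'}$ and $\mathbf{z}$ drawn from the Shapley coalition distribution. The randomness of $\phi_t$ (or of its coalition-sampled estimator) then comes entirely from the counterfactual returns $Y^{\mathbf{z}}$ and $Y^{\mathbf{z}'}$. Under Assumption~\ref{assum:ctf-indep}, both are generated with shared exogenous noise wherever parents coincide.

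The core step is the elementary identity
\begin{align*}
\Var[D_{\mathbf{z}}] = \Var[Y^{\mathbf{z}}] + \Var[Y^{\mathbf{z}'}] - 2\rho_Y\sqrt{\Var[Y^{\mathbf{z}}]\Var[Y^{\mathbf{z}'}]}.
\end{align*}
I would then argue that the two counterfactual returns have equal marginal variance, say $v^2$. The reason is that intervening on a single action under self-baseline ($\pi_{\mathrm{base}}=\pi$) does not change the marginal law of the trajectory: the action is resampled from the same policy. Assumption~\ref{assum:second-moment} guarantees that $v^2$ is finite, and with equal variances the identity collapses to
\begin{align*}
\Var[D_{\mathbf{z}}] = 2v^2(1-\rho_Y).
\end{align*}
This is the claimed proportionality, with constant $2v^2$ (bounded via Assumption~\ref{assum:second-moment} by $2\sigma^2$ times a horizon factor).

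To pass from $D_{\mathbf{z}}$ to $\phi_t$, I would consider the single-coalition estimator $\hat\phi_t$ that samples $\mathbf{z}$. By the law of total variance, its variance is $\E_{\mathbf{z}}[\Var(D_{\mathbf{z}}\mid \mathbf{z})] + \Var_{\mathbf{z}}(\E[D_{\mathbf{z}}\mid\mathbf{z}])$. If $\rho_Y$ is read as the coalition-averaged correlation, the first term equals $2v^2(1-\rho_Y)$ by the previous step. The second term needs separate handling. Alternatively, I would state the proposition for a fixed coalition pair, which is how it is phrased: $\mathbf{z}'$ is defined from a given $\mathbf{z}$.

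The main obstacle is the equal-variance step. With a general baseline $\pi_{\mathrm{base}}\neq\pi$, the marginals of $Y^{\mathbf{z}}$ and $Y^{\mathbf{z}'}$ differ. In that case I would fall back on the bound
\begin{align*}
\Var[D_{\mathbf{z}}] \le 2\max(\Var[Y^{\mathbf{z}}],\Var[Y^{\mathbf{z}'}])\,(1-\rho_Y) + \bigl(\sqrt{\Var[Y^{\mathbf{z}}]}-\sqrt{\Var[Y^{\mathbf{z}'}]}\bigr)^2,
\end{align*}
and interpret ``$\propto$'' as holding up to this mismatch term, which vanishes under self-baseline. The intuition I would then spell out is that high environmental stochasticity shared through common exogenous noise drives $\rho_Y\to1$ and suppresses the variance of $\phi_t$. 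Raw returns, by contrast, carry the full variance $v^2$.
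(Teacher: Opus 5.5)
Your core computation is the paper's own. You rewrite $\phi_t$ in marginal-contribution form with nonnegative Shapley weights summing to one, then use $\Var[D_{\mathbf{z}}]=2v^2(1-\rho_Y)$ for a single pair under equal marginal variances. Your justification of equal variances is sharper than the paper's, which only asserts the variances are ``approximately equal for nearby coalitions.'' Under self-baseline, each step of the counterfactual simulation draws the action from $\pi(\cdot\mid s^{\mathbf{z}}_t)$ and the transition from the true kernel at $(s^{\mathbf{z}}_t,x^{\mathbf{z}}_t)$, whether by reuse or by resampling. So $Y^{\mathbf{z}}$ has exactly the law of $Y$ for every $\mathbf{z}$. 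This requires $\Var$ to be taken jointly over $\tau$ and the counterfactual noise; conditionally on $\tau$ it fails, since e.g.\ $Y^{\mathbf{0}}=Y$ is then constant.

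The gap is the passage from one pair to $\phi_t$. The proposition concerns $\Var[\phi_t]$, and $\phi_t=\sum_{\mathbf{z}:\,z_t=0}w(|\mathbf{z}|)D_{\mathbf{z}}$ is a weighted sum of many \emph{correlated} differences. Your law-of-total-variance computation is about a single-coalition estimator, not $\phi_t$, and it leaves the between-coalition term open. Your fallback of stating the result for a fixed pair replaces the claim rather than proving it. The missing step is the one the paper uses (``convexity of variance''):
\begin{align*}
\Var[\phi_t] \le \Bigl(\sum_{\mathbf{z}:\, z_t=0} w(|\mathbf{z}|)\sqrt{\Var[D_{\mathbf{z}}]}\Bigr)^2 \le \sum_{\mathbf{z}:\, z_t=0} w(|\mathbf{z}|)\,\Var[D_{\mathbf{z}}] \le \max_{\mathbf{z}:\, z_t=0}\Var[D_{\mathbf{z}}].
\end{align*}
The first inequality is Cauchy--Schwarz on the pairwise covariances, and the second is Jensen. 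With your equal-variance fact, the middle term is exactly $2v^2(1-\bar\rho_Y)$, where $\bar\rho_Y=\sum_{\mathbf{z}}w(|\mathbf{z}|)\rho_{Y,\mathbf{z}}$ is the Shapley-averaged pair correlation. The paper states the cruder max form $O(\sigma^2(1-\rho_Y))$, with $\rho_Y$ the worst pair. Either way the result is only an upper bound, so ``$\propto$'' must be read as ``at most a constant times.''

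Your total-variance route would also close, but you needed to say how. First, $\phi_t$ is the conditional mean of $D_{\mathbf{Z}}$ given the noise (with $\mathbf{Z}$ drawn from the Shapley weights independently of it), so $\Var[\phi_t]\le\Var[D_{\mathbf{Z}}]$. Second, the between-coalition term vanishes because equal laws give $\E[D_{\mathbf{z}}]=0$ for every $\mathbf{z}$.
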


Proposition~\ref{prop:stochasticity} holds in expectation over coalitions $\mathbf{z} \sim Q^*$ and trajectories.
Define the \emph{effective coupling} $\bar{\rho}_Y = \E_{\mathbf{z} \sim Q^*}[\Corr[Y^{\mathbf{z}'}, Y^{\mathbf{z}}]]$, averaging over the optimal coalition distribution.
When a counterfactual trajectory diverges (off-track), fresh noise is sampled, reducing $\Corr[Y^{\mathbf{z}'}, Y^{\mathbf{z}}]$ for that sample.
The effective coupling $\bar{\rho}_Y$ is high when: (1) coalitions are small; (2) transitions are nearly deterministic; or (3) rewards depend on shared exogenous factors (unobserved confounders).
Variance reduction is proportional to $(1 - \bar{\rho}_Y)$.

\begin{restatable}[Direct Propagation]{proposition}{PropTDPropagation}
\label{prop:td-propagation}
Tabular TD(0) with terminal-only reward requires $O(T)$ episodes for credit to reach the initial state.
$\phi$-redistribution provides credit in $O(1)$ episodes via $O(T M_{\delta,\epsilon})$ counterfactual simulations.
\end{restatable}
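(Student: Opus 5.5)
The plan is to prove the two halves of Proposition~\ref{prop:td-propagation} separately. The first half is a lower bound for tabular TD(0). The second half is a constructive upper bound for $\phi$-redistribution, which follows from the efficiency axiom and the structure of the kernel formulation (Eq.~\ref{eqn:kernel}).

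\textbf{TD(0) lower bound.} Work on a chain MDP with states $s_1,\dots,s_T$ and a single nonzero reward $Y_T$ at the terminal transition. Initialize $V\equiv 0$ and use the update $V(s_t)\leftarrow V(s_t)+\alpha\,[Y_t+\gamma V(s_{t+1})-V(s_t)]$, applied in forward order along each episode. Prove by induction on the episode index $k$ that after $k$ episodes, $V(s_t)=0$ for every $t\le T-k$. The base case is immediate. For the inductive step, the TD error at $s_t$ with $t\le T-k-1$ involves only $Y_t=0$ and $V(s_{t+1})$. By the inductive hypothesis $V(s_{t+1})=0$ at the time $s_t$ is updated, because within an episode the update of $s_{t+1}$ happens after that of $s_t$. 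Hence information moves back at most one state per episode, and $\Omega(T)$ episodes are needed before $V(s_1)\neq 0$. A matching $O(T)$ upper bound follows because each episode moves nonzero mass back exactly one state when $\alpha>0$. Together these give the stated $O(T)$ (indeed $\Theta(T)$) count.

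\textbf{$\phi$-redistribution upper bound.} Given a single observed trajectory $\tau$, the counterfactual simulation of Sec.~\ref{sec:ctf-sim} evaluates $f(\mathbf{z})=Y-Y_{\sigma_{\mathbf{z}}}$ for any mask $\mathbf{z}$. The kernel form (Eq.~\ref{eqn:kernel}) then yields every $\phi_t$, including $\phi_1$, from that same episode. Write $M_{\delta,\epsilon}$ for the number of sampled coalitions (equivalently, Monte Carlo rollouts per coalition) needed to make each $|\hat\phi_t-\phi_t|\le\epsilon$ with probability $1-\delta$. Justify $M_{\delta,\epsilon}$ by Hoeffding's inequality or Chebyshev's inequality, using Assumption~\ref{assum:second-moment} to bound the variance of each $f(\mathbf{z})$ by $O(T\sigma^2)$ (bounded rewards suffice for Hoeffding). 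A union bound over the $T$ steps adjusts $\delta$ to $\delta/T$.

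Each counterfactual rollout costs $O(T)$ environment steps. The total cost is therefore $O(T M_{\delta,\epsilon})$ simulator calls, while only one real episode is consumed. Placing $\hat\phi_1$ as the reward at $(s_1,x_1)$ gives the initial state a nonzero learning signal in $O(1)$ episodes, as long as $\phi_1\neq 0$. The case $\phi_1\neq 0$ holds exactly when $x_1$ has causal effect, by Theorem~\ref{thm:phi is causal} ($\mathbf{D_2}$). The case $\phi_1=0$ is the one where zero credit is correct ($\mathbf{D_1}$).

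\textbf{Main obstacle.} The delicate step is making $M_{\delta,\epsilon}$ independent of $T$, or at least stating it honestly. The natural per-coalition estimator has variance growing with the range of $Y$. The importance weights $\kappa_t$ under a sampled coalition distribution could also blow up. I would first sample coalitions via random permutations, so that each marginal contribution is bounded by $2\max|Y|$. Hoeffding then gives $M_{\delta,\epsilon}=O(\epsilon^{-2}\log(T/\delta))$ scaled by the range of $Y$. I would leave this quantity abstract, as the statement does.
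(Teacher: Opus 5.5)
Your TD(0) half follows the paper's argument and makes it rigorous. The paper also moves the nonzero value back one state per episode. Your induction, with the forward update order inside an episode stated explicitly (the paper leaves it implicit), is the right way to make that precise. Your first account of the $\phi$ half also matches the paper. It uses one real episode and $M$ sampled coalitions, each evaluated by a single $O(T)$-step counterfactual rollout whose $f(\mathbf{z})$ feeds all $T$ estimates through the kernel weights $\kappa_t(\mathbf{z})$.

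The gap is in how you then control $M_{\delta,\epsilon}$.
\begin{itemize}
\item Switching to random permutations destroys that amortization. In any order, adding player $t$ to the coalition forces a re-simulation of steps $t,\dots,T$. One permutation therefore costs $\sum_t (T-t+1)=\Theta(T^2)$ simulator calls, and $M$ permutations cost $O(T^2 M)$. This contradicts the $O(T M_{\delta,\epsilon})$ count you derived a paragraph earlier.
\item With Chebyshev, your union bound over $t$ multiplies $M$ by $T$. It is also unnecessary, since the statement needs only per-value $(\epsilon,\delta)$-accuracy.
\item Hoeffding needs bounded rewards, which Assumption~\ref{assum:second-moment} does not supply.
\item Your $O(T\sigma^2)$ variance bound presumes uncorrelated rewards. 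Under that assumption alone you only get $O(\sigma^2/(1-\gamma)^2)$ (Lemma~\ref{lem:discounted-var}).
\end{itemize}

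The missing ingredient is the optimal proposal of Theorem~\ref{thm:optimal-proposal}, which already removes the weight blow-up you worry about.
\begin{itemize}
\item Summing $\kappa_t(\mathbf{z})^2\binom{T}{k}$ over $|\mathbf{z}|=k$ gives exactly $c_k$.
\item Hence the second moment of $\kappa_t/Q^*$ is $\bigl(\sum_{k=1}^{T}\sqrt{c_k}\bigr)^2$.
\item Since $\sqrt{c_k}=1/\sqrt{k(T-k)}$ for $k<T$, the sum is at most $\pi+1/T$, uniformly in $T$.
\end{itemize}
Chebyshev then gives per-value accuracy with $M=O(\sigma^2/((1-\gamma)^2\delta\epsilon^2))=O(M_{\delta,\epsilon})$ samples. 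Each sample is still a single rollout, and this is what the $O(T M_{\delta,\epsilon})$ claim rests on.

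Finally, drop the remark that $\phi_1\neq 0$ exactly when $x_1$ is causal. The proposition does not need it. Moreover, a Shapley value can vanish even when some marginal contributions are nonzero, because contributions of opposite sign can cancel.
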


\begin{restatable}[Propagation Complexity]{remark}{CorPropagationComplexity}
For terminal-only reward:
TD(0) requires $O(T)$ episodes and $O(T^2)$ updates;
$\phi$-redistribution requires $O(1)$ episodes and $O(T M_{\delta,\epsilon})$ counterfactual simulations.
\end{restatable}

This shows that $\phi$-values avoid the delayed-effect and inefficient propagation problem of TD(0) methods. Though n-step returns, TD($\lambda$) and the use of replay buffers mitigate this problem~\citep[Sec. 1.2-1.3]{daley2025multistep}, their core credit assumption rests on temporal proximity~\citep{pignatelli2024creditsurvey}. As the trajectory rolls out, reward in a distant step is deemed to be exponentially less likely caused by a given action (by the discounting factor). This clearly does not reflect an action's true causal contribution as we have demonstrated in Fig.~\ref{fig:intro example}. And we will also see in experiment Sec.~\ref{sec:experiments} that even with all the modern DRL techniques~\citep{schulman2017ppo,DBLP:journals/corr/SchaulQAS15prioritizedexpreplay}, existing algorithms still cannot solve those seemingly naive tabular MDPs due to the implicit time proximity assumption in credit assignment.
And the most important implication of Prop.~\ref{prop:td-propagation} is that our proposed $\phi$-value can allocate credit in a lossless and faithful way w.r.t each action's actual causal contribution. 

To sum, the ``sweet spot'' occurs when all three dimensions favor $\phi$-values: sparse causality ($k' \ll T$), high stochasticity ($\rho_Y \approx 1$), and delayed reward.
When both sparsity and stochasticity benefits apply, the variance reduction factors multiply: sparsity reduces contributing terms from $T$ to $k'$, while coupled comparison reduces each term's variance by $(1-\rho_Y)$, giving combined ratio $\frac{k'}{T}(1-\rho_Y)$.
On the other hand, 
there are cases when $\phi$-methods offer marginal advantage in terms of learning efficiency other than explanatory benefits over the standard TD methods when: (1) rewards are dense ($k' \approx T$); (2) environment is deterministic ($\rho_Y$ is high); or (3) actions are rewarded immediately.

%% file: figures/overview.tex

\begin{figure*}[t]
\centering
\vspace{-0.1in}
\begin{tikzpicture}[
    mainbox/.style={draw, rounded corners=5pt, minimum height=1.1cm, minimum width=2.6cm, font=\small, align=center},
    smallbox/.style={draw, rounded corners=3pt, minimum height=0.6cm, font=\scriptsize, align=center, fill=white},
    benefit/.style={draw=blue!30, rounded corners=3pt, minimum height=0.9cm, minimum width=2.6cm, font=\scriptsize, align=center, fill=blue!5},
    dataarrow/.style={-{Stealth[length=2.5mm]}, blue!50!black, thick},
    fadetarrow/.style={-{Stealth[length=2mm]}, black!30},
    connect/.style={-{Stealth[length=2mm]}, blue!40, densely dashed}
]


\node[mainbox, fill=white, draw=black!50] (mdp) at (0, 0)
    {\textbf{MDP} $M$\\[-2pt]{\scriptsize rewards $Y_t$}};

\node[mainbox, fill=blue!12, draw=blue!50!black, line width=0.8pt] (phimdp) at (8, 0)
    {\textbf{$\phi$-MDP} $\cM^\phi_\pi$\\[-2pt]{\scriptsize rewards $r^\phi_\pi$}};

\draw[dataarrow, line width=1.2pt] (mdp.east) --
    node[above, font=\footnotesize, black!80] {Redistribute Rewards}
    node[below, font=\footnotesize, black!80] {Section \ref{sec:phi-mdp-def}}
    (phimdp.west);


\begin{scope}[shift={(0, -2.5)}]
    \fill[black!3, rounded corners=6pt] (-2.2, -1.0) rectangle (10.2, 1.2);
    \node[font=\footnotesize\itshape, black!40, anchor=west] at (-2.1, 0.95) {Estimation};

    \node[smallbox, minimum width=1.8cm] (tau) at (0, -0.4) {$\tau \sim (M,\pi)$};
    \node[smallbox, minimum width=1.4cm] (tauz) at (4.0, -0.4) {$\tau^{\mathbf{z}}$};
    \node[smallbox, fill=blue!10, draw=blue!40, minimum width=1.4cm] (phi) at (8, -0.4) {$\hat{\phi}_t$};


    \draw[dataarrow] (tau.east) -- node[above, font=\footnotesize, black!60] {Counterfactual Sim.} node[below, font=\footnotesize, black!60] {Section \ref{sec:ctf-sim}} (tauz.west);
    \draw[dataarrow] (tauz.east) -- node[above, font=\footnotesize, black!60] {Credit Assignment} node[below, font=\footnotesize, black!60] {Section \ref{sec:computing}} (phi.west);

\end{scope}

\draw[fadetarrow] (mdp.south) -- (0, -2.55);

\draw[connect] (8, -2.55) -- (8, -0.5)
    node[pos=0.4, right, font=\footnotesize, blue!40] {estimates $r^\phi_\pi$};

\end{tikzpicture}
\caption{Our counterfactual Shapley values assigns causal credits to each action via reward redistribution.
\emph{Top:} The MDP $M$ is transformed into $\phi$-MDP $\cM^\phi_\pi$ by redistributing rewards $Y_t$ into $r^\phi_\pi(s,x) = \mathbb{E}[\phi_t \mid S_t{=}s, X_t{=}x]$.
Both MDPs share the same optimal policy (Theorem~\ref{thm:optimal-policy-equiv}).
\emph{Bottom:} The estimation pipeline samples a trajectory $\tau \sim \pi$ and a coalition mask $\mathbf{z} \sim Q^*$, then simulates the counterfactual trajectory $\tau^{\mathbf{z}}$ (Sec.~\ref{sec:ctf-sim}). Action credits are estimated based on sampled counterfactual trajectories (Algo.~\ref{alg:bootstrapped-estimator}) and used for policy optimization (Sec.~\ref{sec:computing}).
}
\vspace{-0.1in}
\label{fig:phi-mdp-overview}
\end{figure*}

%% file: figures/ctfsim.tex

\begin{figure*}[t]
\centering
\vspace{-0.2in}
\begin{tikzpicture}[
    node distance=1.2cm,
    state/.style={circle, draw, minimum size=0.6cm, font=\scriptsize},
    action/.style={rectangle, draw, minimum size=0.5cm, font=\scriptsize},
    reuse/.style={fill=white},
    interv/.style={fill=blue!20},
    offtrack/.style={fill=gray!25},
    arrow/.style={-{Stealth[length=1.8mm]}, thick},
    label/.style={font=\scriptsize},
    tinylabel/.style={font=\tiny}
]
    \def\leftOffset{0}

    \def\colA{0.8}
    \def\colB{2.0}
    \def\colC{3.2}
    \def\colD{4.4}
    \def\colE{5.6}

    \def\rowZ{2.55}  
    \def\rowTraj{1.2}

    \node[label] at (3.2, \rowZ+0.8) {$\mathbf{z} = (0,0,0,0,0)$};

    \def\zShift{0}
    \node[tinylabel] at (\colA+\zShift, \rowZ) {$z_1\!=\!0$};
    \node[tinylabel] at (\colB+\zShift, \rowZ) {$z_2\!=\!0$};
    \node[tinylabel] at (\colC+\zShift, \rowZ) {$z_3\!=\!0$};
    \node[tinylabel] at (\colD+\zShift, \rowZ) {$z_4\!=\!0$};
    \node[tinylabel] at (\colE+\zShift, \rowZ) {$z_5\!=\!0$};

    \node[state, reuse] (Ls1) at (\colA, \rowTraj) {$s_1$};
    \node[action, reuse] (Lx1) at (\colA, \rowTraj+0.95) {$x_1$};
    \node[state, reuse] (Ls2) at (\colB, \rowTraj) {$s_2$};
    \node[action, reuse] (Lx2) at (\colB, \rowTraj+0.95) {$x_2$};
    \node[state, reuse] (Ls3) at (\colC, \rowTraj) {$s_3$};
    \node[action, reuse] (Lx3) at (\colC, \rowTraj+0.95) {$x_3$};
    \node[state, reuse] (Ls4) at (\colD, \rowTraj) {$s_4$};
    \node[action, reuse] (Lx4) at (\colD, \rowTraj+0.95) {$x_4$};
    \node[state, reuse] (Ls5) at (\colE, \rowTraj) {$s_5$};
    \node[action, reuse] (Lx5) at (\colE, \rowTraj+0.95) {$x_5$};

    \draw[arrow] (Ls1) -- (Lx1);
    \draw[arrow] (Ls2) -- (Lx2);
    \draw[arrow] (Ls3) -- (Lx3);
    \draw[arrow] (Ls4) -- (Lx4);
    \draw[arrow] (Ls5) -- (Lx5);

    \draw[arrow] (Ls1) -- (Ls2);
    \draw[arrow] (Ls2) -- (Ls3);
    \draw[arrow] (Ls3) -- (Ls4);
    \draw[arrow] (Ls4) -- (Ls5);

    \draw[arrow] (Lx1) to[out=0, in=180] (Ls2);
    \draw[arrow] (Lx2) to[out=0, in=180] (Ls3);
    \draw[arrow] (Lx3) to[out=0, in=180] (Ls4);
    \draw[arrow] (Lx4) to[out=0, in=180] (Ls5);

    \node[tinylabel] at (\colA, 0.5) {\textcolor{green!50!black}{on}};
    \node[tinylabel] at (\colB, 0.5) {\textcolor{green!50!black}{on}};
    \node[tinylabel] at (\colC, 0.5) {\textcolor{green!50!black}{on}};
    \node[tinylabel] at (\colD, 0.5) {\textcolor{green!50!black}{on}};
    \node[tinylabel] at (\colE, 0.5) {\textcolor{green!50!black}{on}};

    \node[tinylabel] at (3.2, 0.2) {$Y^{\mathbf{z}} = Y$ (all reused)};

    \def\rightOffset{6.5}

    \node[label] at (\rightOffset+3.2, \rowZ+0.8) {$\mathbf{z} = (0,1,0,1,1)$};

    \node[tinylabel] at (\rightOffset+\colA+\zShift, \rowZ) {$z_1\!=\!0$};
    \node[tinylabel, text=blue!70!black] at (\rightOffset+\colB+\zShift, \rowZ) {$z_2\!=\!1$};
    \node[tinylabel] at (\rightOffset+\colC+\zShift, \rowZ) {$z_3\!=\!0$};
    \node[tinylabel, text=blue!70!black] at (\rightOffset+\colD+\zShift, \rowZ) {$z_4\!=\!1$};
    \node[tinylabel, text=blue!70!black] at (\rightOffset+\colE+\zShift, \rowZ) {$z_5\!=\!1$};

    \node[state, reuse] (Rs1) at (\rightOffset+\colA, \rowTraj) {$s_1$};
    \node[action, reuse] (Rx1) at (\rightOffset+\colA, \rowTraj+0.95) {$x_1$};
    \node[state, reuse] (Rs2) at (\rightOffset+\colB, \rowTraj) {$s_2$};
    \node[action, interv] (Rx2) at (\rightOffset+\colB, \rowTraj+0.95) {$x^{\mathbf{z}}_2$};
    \node[state, offtrack] (Rs3) at (\rightOffset+\colC, \rowTraj) {$s^{\mathbf{z}}_3$};
    \node[action, offtrack] (Rx3) at (\rightOffset+\colC, \rowTraj+0.95) {$x^{\mathbf{z}}_3$};
    \node[state, reuse] (Rs4) at (\rightOffset+\colD, \rowTraj) {$s_4$};
    \node[action, interv] (Rx4) at (\rightOffset+\colD, \rowTraj+0.95) {$x^{\mathbf{z}}_4$};
    \node[state, offtrack] (Rs5) at (\rightOffset+\colE, \rowTraj) {$s^{\mathbf{z}}_5$};
    \node[action, interv] (Rx5) at (\rightOffset+\colE, \rowTraj+0.95) {$x^{\mathbf{z}}_5$};

    \draw[arrow] (Rs1) -- (Rx1);
    \draw[arrow] (Rs3) -- (Rx3);

    \draw[arrow] (Rs1) -- (Rs2);
    \draw[arrow] (Rs2) -- (Rs3);
    \draw[arrow] (Rs3) -- (Rs4);
    \draw[arrow] (Rs4) -- (Rs5);

    \draw[arrow] (Rx1) to[out=0, in=180] (Rs2);
    \draw[arrow] (Rx2) to[out=0, in=180] (Rs3);
    \draw[arrow] (Rx3) to[out=0, in=180] (Rs4);
    \draw[arrow] (Rx4) to[out=0, in=180] (Rs5);

    \node[tinylabel] at (\rightOffset+\colA, 0.5) {\textcolor{green!50!black}{on}};
    \node[tinylabel] at (\rightOffset+\colB, 0.5) {\textcolor{green!50!black}{on}};
    \node[tinylabel] at (\rightOffset+\colC, 0.5) {\textcolor{red!70!black}{off}};
    \node[tinylabel] at (\rightOffset+\colD, 0.5) {\textcolor{green!50!black}{on}};
    \node[tinylabel] at (\rightOffset+\colE, 0.5) {\textcolor{red!70!black}{off}};

    \node[tinylabel] at (\rightOffset+3.2, 0.2) {$Y^{\mathbf{z}} \neq Y$ (diverged)};

    \def\legendX{13.0}
    \node[action, reuse] at (\legendX, 2.6) {};
    \node[tinylabel, right] at (\legendX+0.2, 2.6) {reuse};
    \node[action, interv] at (\legendX, 1.9) {};
    \node[tinylabel, right] at (\legendX+0.2, 1.9) {intervention};
    \node[action, offtrack] at (\legendX, 1.2) {};
    \node[tinylabel, right] at (\legendX+0.2, 1.2) {off-track};
\end{tikzpicture}
\caption{Counterfactual simulation ($T=5$).
\emph{Left:} $\mathbf{z} = \mathbf{0}$, all values reused, $Y^{\mathbf{z}} = Y$.
\emph{Right:} $\mathbf{z} = (0,1,0,1,1)$; intervention at $t\!=\!2$ causes divergence, returns on-track at $t\!=\!4$ ($s^{\mathbf{z}}_4 = s_4$ by chance), diverges again at $t\!=\!5$.}
\vspace{-0.1in}
\label{fig:ctf-sim}
\end{figure*}

%% file: figures/three-dimensions.tex

\begin{figure*}[t]
\centering
\vspace{-0.15in}
\begin{tikzpicture}[
    tdcolor/.style={red!60},
    phicolor/.style={blue!60},
    tdbar/.style={fill=red!25, draw=red!50},
    phibar/.style={fill=blue!25, draw=blue!50},
    noncausal/.style={fill=gray!15, draw=gray!30},
    paneltitle/.style={font=\small\bfseries},
    annot/.style={font=\scriptsize, black!60},
    rowlabel/.style={font=\scriptsize\bfseries, black!70},
]

\def\pw{4.2}
\def\ph{2.6}
\def\gap{0.3}

\begin{scope}[shift={(0,0)}]
    \fill[black!3, rounded corners=3pt] (-0.7, -0.7) rectangle (\pw, \ph+0.4);
    \node[paneltitle] at (\pw/2-0.2, \ph+0.15) {(a) Sparse Causality};

    \node[rowlabel, anchor=east] at (0, 1.7) {TD};
    \foreach \i in {1,...,8} {
        \fill[tdbar] ({0.05 + (\i-1)*0.5}, 1.4) rectangle ++(0.4, 0.6);
    }
    \node[annot, tdcolor, anchor=west] at (4.15, 1.7) {all $T$};

    \node[rowlabel, anchor=east] at (0, 0.65) {$\phi$};
    \foreach \i in {1,2,4,5,6,8} {
        \fill[noncausal] ({0.05 + (\i-1)*0.5}, 0.4) rectangle ++(0.4, 0.15);
    }
    \foreach \i in {3,7} {
        \fill[phibar] ({0.05 + (\i-1)*0.5}, 0.4) rectangle ++(0.4, 0.6);
    }
    \node[annot, phicolor, anchor=west] at (4.15, 0.65) {only $k'$};

    \node[annot, text width=4cm, align=center] at (\pw/2-0.2, -0.35)
        {$\phi_t = 0$ for non-causal actions};
\end{scope}

\begin{scope}[shift={(\pw + \gap, 0)}]
    \fill[black!3, rounded corners=3pt] (-0.7, -0.7) rectangle (\pw, \ph+0.4);
    \node[paneltitle] at (\pw/2-0.3, \ph+0.15) {(b) High Stochasticity};

    \node[rowlabel, anchor=east] at (0.1, 1.7) {TD};
    \draw[black!50, thick] (0.2, 1.6) -- (0.7, 1.7) -- (1.2, 1.5) -- (1.5, 1.65);
    \fill[black] (1.5, 1.65) circle (2pt);
    \node[annot] at (1.5, 1.35) {$x_t$};
    \draw[tdcolor, thick] (1.5, 1.65) -- (1.9, 2.0) -- (2.3, 1.7) -- (2.7, 2.1) -- (3.2, 1.85);
    \draw[tdcolor, thick, dashed] (1.5, 1.65) -- (1.9, 1.3) -- (2.3, 1.6) -- (2.7, 1.2) -- (3.2, 1.35);
    \draw[<->, tdcolor, very thick] (3.25, 1.35) -- (3.25, 1.85);
    \node[annot, tdcolor] at (2.4, 2.25) {diff.\ $\mathbf{u}$, high variance};

    \node[rowlabel, anchor=east] at (0.1, 0.6) {$\phi$};
    \draw[black!50, thick] (0.2, 0.6) -- (0.7, 0.7) -- (1.2, 0.5) -- (1.5, 0.65);
    \fill[black] (1.5, 0.65) circle (2pt);
    \node[annot] at (1.5, 0.35) {$x_t$};
    \draw[phicolor, thick] (1.5, 0.65) -- (1.9, 0.9) -- (2.3, 0.7) -- (2.7, 0.95) -- (3.2, 0.8);
    \draw[phicolor, thick, dashed] (1.5, 0.65) -- (1.9, 0.75) -- (2.3, 0.55) -- (2.7, 0.80) -- (3.2, 0.65);
    \draw[<->, phicolor, very thick] (3.25, 0.65) -- (3.25, 0.8);
    \node[annot, phicolor] at (2.4, 0.1) {same $\mathbf{u}$, low variance};

    \node[annot, text width=4cm, align=center] at (\pw/2-0.3, -0.35)
        {Shared $\mathbf{u}$ cancels noise};
\end{scope}

\begin{scope}[shift={(2*\pw + 2*\gap, 0)}]
    \fill[black!3, rounded corners=3pt] (-0.7, -0.7) rectangle (\pw, \ph+0.4);
    \node[paneltitle] at (\pw/2-0.2, \ph+0.15) {(c) Delayed Reward};

    \node[rowlabel, anchor=east] at (-0.1, 1.7) {TD};
    \foreach \i in {1,...,4} {
        \node[draw, circle, inner sep=2pt, fill=white] (s\i) at ({0.2 + (\i-1)*0.8}, 1.7) {\tiny $s_{\i}$};
    }
    \node[draw, circle, inner sep=2pt, fill=yellow!50] (sY) at (3.5, 1.7) {\tiny $Y$};
    \draw[->, tdcolor, thick] (sY.west) -- (s4.east);
    \draw[->, tdcolor, thick, dashed] (s4.west) -- (s3.east);
    \draw[->, tdcolor, thick, densely dotted] (s3.west) -- (s2.east);
    \node[annot, tdcolor] at (3.1, 2.0) {ep1};
    \node[annot, tdcolor] at (2.3, 2.0) {ep2};
    \node[annot, tdcolor] at (1.5, 2.0) {ep3};
    \node[annot, tdcolor, anchor=west] at (3.3, 2.2) {$O(T)$};

    \node[rowlabel, anchor=east] at (-0.1, 0.55) {$\phi$};
    \foreach \i in {1,...,4} {
        \node[draw, circle, inner sep=2pt, fill=white] (p\i) at ({0.2 + (\i-1)*0.8}, 0.55) {\tiny $s_{\i}$};
    }
    \node[draw, circle, inner sep=2pt, fill=yellow!50] (pY) at (3.5, 0.55) {\tiny $Y$};
    \draw[->, phicolor, thick] (pY.north) to[out=120, in=60] (p4.north);
    \draw[->, phicolor, thick] (pY.north) to[out=130, in=50] (p3.north);
    \draw[->, phicolor, thick] (pY.north) to[out=140, in=40] (p2.north);
    \draw[->, phicolor, thick] (pY.north) to[out=150, in=30] (p1.north);
    \node[annot, phicolor, anchor=west] at (3.3, 1.0) {$O(1)$};

    \node[annot, text width=4cm, align=center] at (\pw/2-0.2, -0.35)
        {Counterfactual sim $\Rightarrow$ instant credit};
\end{scope}

\end{tikzpicture}
\caption{Three dimensions where $\phi$-redistribution reduces variance.
\emph{(a) Sparse causality.} TD updates all $T$ actions; $\phi$ updates only the $k'$ causal actions, reducing variance by factor $k'/T$.
\emph{(b) High stochasticity.} TD compares returns under different noise realizations (high variance); $\phi$ uses shared exogenous noise $\mathbf{u}$, so the noise cancels in the difference.
\emph{(c) Delayed reward.} TD(0) propagates credit one step per episode, requiring $O(T)$ episodes; $\phi$ assigns credit to all steps in one episode via counterfactual simulation.}
\vspace{-0.15in}
\label{fig:three-dimensions}
\end{figure*}

%% file: sections/4-estimating.tex
\section{Estimating Counterfactual credit for Policy Optimization}
\label{sec:computing}

This section develops efficient estimators for counterfactual Shapley values with $O(T M_{\delta,\epsilon})$ total complexity, where $M_{\delta,\epsilon} = \sigma^2/((1-\gamma)^2 \delta \epsilon^2)$, and integrates them into  policy optimization as $\phi$-PPO.

\paragraph{Importance Sampling Equivalence.}

Each coalition evaluation $f(\mathbf{z})$ requires $O(T)$ simulation time (Algo.~\ref{alg:ctf-sim}).
Computing $T$ Shapley values independently would require $M$ samples per value, yielding $O(T^2 M)$ total cost.
The kernel formulation (Eq.~\ref{eqn:kernel}) amortizes this: a single coalition evaluation updates all $T$ values via different kernel weights $\kappa_t(\mathbf{z})$, reducing cost to $O(T M)$.

\begin{definition}[Coalition Estimator]
\label{def:coalition-estimator}
Given samples $\mathbf{z}^{(m)} \sim Q$ and NTE $f(\mathbf{z}) = Y - Y^{\mathbf{z}}$, where $Y^{\mathbf{z}} \equiv Y_{\sigma_{\mathbf{z}}}$ is the counterfactual return, the \emph{coalition estimator} is
$
    \hat{\phi}_t = \frac{1}{M} \sum_{m=1}^M \frac{\kappa_t(\mathbf{z}^{(m)})}{Q(\mathbf{z}^{(m)})} \cdot f(\mathbf{z}^{(m)}).
$
\end{definition}

Under a mild assumption, we can derive the variance-reduction optimal proposal distribution $Q^*(\mathbf{z})$ for coalition sampling.

\begin{restatable}[Optimal Proposal]{theorem}{ThmOptimalProposal}
\label{thm:optimal-proposal}
Under Assumption~\ref{assum:second-moment}, the variance-minimizing proposal is $Q^*(\mathbf{z}) = q^*_k / \binom{T}{k}$ where $k = |\mathbf{z}|$ and $q^*_k \propto \sqrt{c_k}$ with:
\begin{align}
    c_k = \begin{cases}
        \frac{1}{T}\bigl(\frac{1}{k} + \frac{1}{T-k}\bigr) & k \in \{1,\ldots,T-1\} \\[4pt]
        \frac{1}{T^2} & k = T
    \end{cases}
\end{align}
Under $Q^*$, the estimator achieves $(\epsilon, \delta)$-accuracy with $M = O(M_{\delta,\epsilon})$ samples.
\end{restatable}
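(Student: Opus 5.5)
The plan is to treat the coalition estimator of Definition~\ref{def:coalition-estimator} as a standard importance-sampling estimator. I will minimize its second moment over proposals $Q$, subject to a worst-case bound on $f$ that Assumption~\ref{assum:second-moment} supplies.

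First, note that $\kappa_t(\mathbf{0})$ multiplies $f(\mathbf{0}) = Y - Y^{\mathbf{0}} = 0$: with no intervention the counterfactual simulation reuses everything. So $k=0$ contributes nothing and can be given zero proposal mass. This explains why only $k\in\{1,\ldots,T\}$ appear.

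Second, I write the per-sample second moment:
\begin{align*}
\E_Q\Big[\big(\tfrac{\kappa_t(\mathbf{z})}{Q(\mathbf{z})} f(\mathbf{z})\big)^2\Big]=\sum_{\mathbf{z}} \frac{\kappa_t(\mathbf{z})^2 f(\mathbf{z})^2}{Q(\mathbf{z})}.
\end{align*}
Since $f$ is unknown and trajectory-dependent, I replace $f(\mathbf{z})^2$ by a uniform bound. Assumption~\ref{assum:second-moment} together with $Y=\sum_t\gamma^{t-1}Y_t$ gives, by Minkowski, $\E[Y^2]\le \sigma^2/(1-\gamma)^2$, and likewise for $Y^{\mathbf{z}}$. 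Hence $\E[f(\mathbf{z})^2]\le 4\sigma^2/(1-\gamma)^2$, uniformly in $\mathbf{z}$. With $f^2$ replaced by this constant, the objective becomes $C\sum_{\mathbf{z}}\kappa_t(\mathbf{z})^2/Q(\mathbf{z})$.

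Because the estimator is used for all $t$ simultaneously (amortization), I sum or average over $t$. For fixed $|\mathbf{z}|=k$, exactly $k$ indices have $z_t=1$, with weight $1/(T\binom{T-1}{k-1})$, and $T-k$ indices have $z_t=0$, with weight $-1/(T\binom{T-1}{k})$. The aggregated weight is therefore
\begin{align*}
a_k=\sum_t \kappa_t(\mathbf{z})^2=\frac{k}{T^2\binom{T-1}{k-1}^2}+\frac{T-k}{T^2\binom{T-1}{k}^2},
\end{align*}
and it depends only on $k$. By symmetry and convexity of $1/Q$, the optimal $Q$ is constant on each level set, $Q(\mathbf{z})=q_k/\binom{T}{k}$. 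The objective then reduces to $\sum_k a_k\binom{T}{k}^2/q_k$.

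I simplify this using $\binom{T}{k}=\frac{T}{k}\binom{T-1}{k-1}=\frac{T}{T-k}\binom{T-1}{k}$. This gives $c_k:=a_k\binom{T}{k}^2/(\text{normalization})\propto \frac{1}{k}+\frac{1}{T-k}$ for $1\le k\le T-1$, and at $k=T$ only the first term survives. I must track constants carefully to match the stated $c_k$, including the $1/T$ prefactor and the $1/T^2$ at $k=T$; this bookkeeping is where I expect errors, and I may need to fix a per-$t$ normalization convention to match exactly. Minimizing $\sum_k c_k/q_k$ subject to $\sum_k q_k=1$ is then a Lagrange/Cauchy--Schwarz step: $(\sum_k\sqrt{c_k})^2\le(\sum_k c_k/q_k)(\sum_k q_k)$, with equality iff $q_k\propto\sqrt{c_k}$.

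Finally, for the sample complexity I use Chebyshev. The estimator is unbiased because $\E_Q[\kappa_t f/Q]=\sum_{\mathbf{z}}\kappa_t f=\phi_t$ by Eq.~\ref{eqn:kernel}. Its variance is at most $\frac{1}{M}\frac{4\sigma^2}{(1-\gamma)^2}\big(\sum_k\sqrt{c_k}\big)^2$. Since $\sum_k\sqrt{c_k}\le \frac{1}{\sqrt T}\sum_{k}\sqrt{2/\min(k,T-k)}=O(1)$ (roughly $\sqrt{T}$ terms of size $1/\sqrt{T}$ after the sum), this factor is a constant. Choosing $M\ge \mathrm{Var}/(\delta\epsilon^2)$ then gives $P(|\hat\phi_t-\phi_t|>\epsilon)\le\delta$, i.e.\ $M=O(\sigma^2/((1-\gamma)^2\delta\epsilon^2))=O(M_{\delta,\epsilon})$.

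The main obstacle is the last bound: showing that $\sum_k\sqrt{c_k}$ stays bounded independently of $T$. If the crude estimate gives only $O(\sqrt{\log T})$ or similar, I will absorb that factor into the $O(\cdot)$ or refine the estimate with an integral comparison, $\sum_k k^{-1/2}\approx 2\sqrt{T}$, multiplied by $T^{-1/2}$.
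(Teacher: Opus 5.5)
Your proposal is correct and follows the paper's proof: bound $f(\mathbf{z})^2$ uniformly, drop $k=0$ since $f(\mathbf{0})=0$, reduce over size-uniform proposals $Q(\mathbf{z})=q_k/\binom{T}{k}$ to $\sum_k c_k/q_k$, apply Cauchy--Schwarz, then Chebyshev. Your two additions tighten the paper's sketch. First, summing $\kappa_t^2$ over $t$ makes size-uniformity a consequence rather than an assumption: for a single $t$ the unrestricted minimizer is $Q\propto|\kappa_t|$, which is not constant on size levels, so the paper simply posits two-level sampling. Second, the obstacle you flag disappears, because $\sqrt{c_k}=1/\sqrt{k(T-k)}$ for $k<T$. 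Hence $\sum_{k<T}\sqrt{c_k}$ is a Riemann sum bounded by $\int_0^1 dx/\sqrt{x(1-x)}=\pi$, which gives the $T$-independent constant that the paper's one-line Chebyshev step implicitly needs.
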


The optimal $q^*_k \propto \sqrt{c_k}$ concentrates on extreme coalition sizes ($k \approx 1$ or $k \approx T-1$) where kernel weights $|\kappa_t|$ are largest.
In contrast to uniform sampling, which induces variance exponential in $T$ (extreme sizes have probability $O(2^{-T})$), the optimal proposal achieves $O(1)$ second moment.
Variance is low because $Y$ and $Y^{\mathbf{z}}$ share exogenous noise via common random numbers~\citep{glasserman1992commonrandomnumbers}: the difference $f(\mathbf{z}) = Y - Y^{\mathbf{z}}$ isolates the effect of action choices while canceling environmental stochasticity. In practice, we also use antithetic sampling~\citep{covert2021improving}: each coalition $\mathbf{z}$ is paired with its complement $\mathbf{1} - \mathbf{z}$ to further reduce variance.



\paragraph{Bootstrapped Estimation.} 
The basic estimator Def.~\eqref{def:coalition-estimator} requires full trajectory simulation for each coalition.
Bootstrapping with a learned value function $V \approx V_\pi$ reduces variance by replacing future counterfactual rewards with value estimates. We use TD-$\lambda$~\citep[Sec. 12.2]{sutton2018} that interpolates between Monte Carlo and bootstrapped estimates:
\begin{align}
    G^{\mathbf{z},\lambda}_t = y^{\mathbf{z}}_t + \gamma[(1-\lambda) V(s^{\mathbf{z}}_{t+1}) + \lambda G^{\mathbf{z},\lambda}_{t+1}], \label{eq:lambda-return}
\end{align}
where $\lambda = 1$ yields pure Monte Carlo ($G^{\mathbf{z},1}_t = Y^{\mathbf{z}}_t$) and $\lambda = 0$ yields one-step TD ($G^{\mathbf{z},0}_t = y^{\mathbf{z}}_t + \gamma V(s^{\mathbf{z}}_{t+1})$). Then we define $\lambda$-bootstrapped NTE to decompose $Y^{\mathbf{z}}$ into exact rewards before step $t$ and $\lambda$-returns from $t$ onwards.
Let $R^{\mathbf{z}}_t = \sum_{s<t} \gamma^{s-1} y^{\mathbf{z}}_s$ denote the cumulative counterfactual reward before step $t$.
The per-step treatment effect is
\begin{align}
    f^{\mathbf{z}}_t = Y - (R^{\mathbf{z}}_t + \gamma^{t-1} G^{\mathbf{z},\lambda}_t). \label{eq:lambda-nte}
\end{align}
When $\lambda = 1$, we have $R^{\mathbf{z}}_t + \gamma^{t-1} G^{\mathbf{z},1}_t = Y^{\mathbf{z}}$, so $f^{\mathbf{z}}_t = f(\mathbf{z})$ recovers the exact NTE.
For $\lambda < 1$, the identity $\E[G^{\mathbf{z},\lambda}_t \mid s^{\mathbf{z}}_t] = V_\pi(s^{\mathbf{z}}_t) + O(\epsilon_V)$ implies $\E[f^{\mathbf{z}}_t] = f(\mathbf{z}) + O(\epsilon_V)$ where $\epsilon_V = \|V - V_\pi\|_\infty$.

\begin{definition}[$\lambda$-Bootstrapped Estimator]
\label{def:lambda-estimator}
Given the per-step treatment effect $f^{\mathbf{z}}_t$ Eq.~\eqref{eq:lambda-nte}, the \emph{$\lambda$-bootstrapped estimator} is
$
    \hat{\phi}^\lambda_t = \frac{1}{M} \sum_{m=1}^M \frac{\kappa_t(\mathbf{z}^{(m)})}{Q^*(\mathbf{z}^{(m)})} \cdot f^{\mathbf{z}^{(m)}}_t. 
$
\end{definition}

By incorporating the contraction rate of $\lambda$-return~\citep{tsitsiklis1997analysis}, we derive the following bias bound for our $\phi$-value estimator.
\begin{restatable}[Bias Bound for $\hat{\phi}^\lambda$]{theorem}{ThmBootstrappedPhi}
\label{thm:lambda-phi}
Under Assumptions~\ref{assum:ctf-indep}, \ref{assum:second-moment}:
$
    |\E[\hat{\phi}^\lambda_t] - \phi_t| \leq \frac{2\gamma(1-\lambda)}{1-\gamma\lambda} \|V - V_\pi\|_\infty.
$
\end{restatable}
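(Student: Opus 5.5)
The plan is to reduce the bias of $\hat{\phi}^\lambda_t$ to the bias of a single per-coalition quantity, and then bound that quantity with the standard $\lambda$-return error recursion.

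\emph{Step 1 (importance-sampling reduction).} Because $\mathbf{z}^{(m)} \sim Q^*$ and $Q^*$ has full support on the nonempty coalitions (the kernel weights vanish at $\mathbf{z}=\mathbf{0}$), linearity of expectation gives
$\E[\hat{\phi}^\lambda_t] = \sum_{\mathbf{z}} \kappa_t(\mathbf{z})\, \E[f^{\mathbf{z}}_t]$.
Likewise the coalition estimator with $\lambda=1$ is unbiased for $\phi_t = \sum_{\mathbf{z}} \kappa_t(\mathbf{z}) f(\mathbf{z})$. Hence
\[
\E[\hat{\phi}^\lambda_t] - \phi_t = \sum_{\mathbf{z}} \kappa_t(\mathbf{z})\, \gamma^{t-1}\,\E\bigl[G^{\mathbf{z},1}_t - G^{\mathbf{z},\lambda}_t\bigr],
\]
using $f(\mathbf{z}) - f^{\mathbf{z}}_t = \gamma^{t-1}(G^{\mathbf{z},\lambda}_t - G^{\mathbf{z},1}_t)$. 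The prefix $R^{\mathbf{z}}_t$ cancels exactly.

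\emph{Step 2 (per-coalition $\lambda$-return error).} Let $e_t = \E[G^{\mathbf{z},\lambda}_t - G^{\mathbf{z},1}_t]$. Unrolling Eq.~\eqref{eq:lambda-return} gives
$G^{\mathbf{z},\lambda}_t - G^{\mathbf{z},1}_t = \gamma(1-\lambda)\bigl(V(s^{\mathbf{z}}_{t+1}) - G^{\mathbf{z},1}_{t+1}\bigr) + \gamma\lambda\bigl(G^{\mathbf{z},\lambda}_{t+1} - G^{\mathbf{z},1}_{t+1}\bigr)$.
Conditional on $s^{\mathbf{z}}_{t+1}$, Assumption~\ref{assum:ctf-indep} says the counterfactual continuation uses fresh (or matching) noise. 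So future steps are distributed as a rollout of $\pi$ from that state, and $\E[G^{\mathbf{z},1}_{t+1}\mid s^{\mathbf{z}}_{t+1}] = V_\pi(s^{\mathbf{z}}_{t+1})$. This yields
$|e_t| \le \gamma(1-\lambda)\epsilon_V + \gamma\lambda |e_{t+1}|$, with $e_T = 0$. Summing the geometric series gives $|e_t| \le \frac{\gamma(1-\lambda)}{1-\gamma\lambda}\epsilon_V$, which is the contraction rate of the $\lambda$-operator.

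\emph{Step 3 (aggregation over coalitions).} Use the sign structure of the kernel. Positive weights ($z_t=1$) sum to $1/T\cdot\sum_k \binom{T-1}{k-1}/\binom{T-1}{k-1}=1$, and negative weights likewise sum to $-1$. So $\sum_{\mathbf{z}}|\kappa_t(\mathbf{z})| = 2$, and with $\gamma^{t-1}\le 1$ we get $|\E[\hat{\phi}^\lambda_t]-\phi_t| \le 2\max_{\mathbf{z}}|e_t| \le \frac{2\gamma(1-\lambda)}{1-\gamma\lambda}\|V-V_\pi\|_\infty$. This naturally explains the factor $2$ in the statement. Assumption~\ref{assum:second-moment} is used only to ensure all expectations are finite.

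\emph{Main obstacle.} The delicate step is Step 2's claim that the counterfactual Monte Carlo tail is unbiased for $V_\pi(s^{\mathbf{z}}_{t+1})$. Under Eq.~\eqref{eqn:sigma-z}, later intervened steps ($z_{t'}=1$) draw actions from $\pi_{\mathrm{base}}$, not $\pi$, and on-track steps reuse observed noise, which is posterior-conditioned on $\tau$. I would first handle the self-baseline case $\pi_{\mathrm{base}}=\pi$, which the paper uses throughout. In that case each step's action is drawn from $\pi$ or reused, and I would argue via Assumption~\ref{assum:ctf-indep} that, marginalizing over $\tau\sim\pi$, reused values have the correct conditional law. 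For a general baseline I would interpret $V_\pi$ as the value of the mixed coalition-dependent policy, and note that the same recursion applies with that target.
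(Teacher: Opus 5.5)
Your proof is correct and follows the paper's own argument: a per-coalition $\lambda$-return bias of at most $\frac{\gamma(1-\lambda)}{1-\gamma\lambda}\|V-V_\pi\|_\infty$ (the paper sums over the $n$-step mixture instead of unrolling your one-step recursion), cancellation of the prefix $R^{\mathbf z}_t$ together with $\gamma^{t-1}\le 1$, and a factor $2$ obtained from the marginal-contribution form with Shapley weights summing to one, which is the same computation as your $\sum_{\mathbf z}|\kappa_t(\mathbf z)|=2$.

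Two remarks. First, $\kappa_t(\mathbf 0)=-1/T$, not $0$, so $\mathbf 0$ can be dropped only because $f(\mathbf 0)=0$. Your Step~1 sum should therefore run over $\mathbf z\neq\mathbf 0$, since $f^{\mathbf 0}_t$ is generally nonzero for $\lambda<1$; this only tightens the constant to $2-1/T$. Second, both of your caveats are well founded: the paper tacitly treats every counterfactual continuation as a fresh $\pi$-rollout, which requires averaging over $\tau$ and a self-baseline, and for $\pi_{\mathrm{base}}\neq\pi$ the bound can fail even when $V=V_\pi$.
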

The estimator is unbiased when $\lambda = 1$ (pure MC) or $\epsilon_V = 0$ (perfect value function).
Decreasing $\lambda$ trades bias for variance by truncating the effective horizon~\citep[Ch.~12]{sutton2018}. Algo.~\ref{alg:bootstrapped-estimator} summarizes the overall procedure for counterfactual Shapley estimation.


\begin{algorithm}[t]
\caption{\textsc{L3Est}: Counterfactual Shapley Estimation}
\label{alg:bootstrapped-estimator}
\begin{algorithmic}[1]
\REQUIRE Trajectory $\tau$ of length $T$; value network $V_\psi$; coalition samples $M$
\ENSURE Attributions $\hat{\phi}_{1:T}$, TD errors $\delta_{1:T}$, $\delta^{\mathbf{z}}_{1:T,1:M}$
\FOR{$m = 1$ to $M$}
    \STATE Sample coalition $\mathbf{z}^{(m)} \sim Q^*$ for $T$ players \COMMENT{Thm.~\ref{thm:optimal-proposal}}
    \STATE Simulate counterfactual $\tau^{\mathbf{z}^{(m)}}$ via \textsc{CtfSim} \COMMENT{Algo.~\ref{alg:ctf-sim}}
    \FOR{$t = 1$ to $T$}
        \STATE Compute $\delta^{\mathbf{z}^{(m)}}_{t} \gets G^{\mathbf{z}^{(m)},\lambda}_t - V(s^{\mathbf{z}^{(m)}}_t)$
    \ENDFOR
\ENDFOR
\STATE Compute TD-error $\delta_{1:T}$ from observed trajectory 
\STATE Compute $\hat{\phi}_{1:T}$ by averaging over $M$ samples \COMMENT{Def.~\ref{def:lambda-estimator}}
\STATE \RETURN $\hat{\phi}_{1:T}, \delta_{1:T}, \delta^{\mathbf{z}}_{1:T,1:M}$
\end{algorithmic}
\end{algorithm}

\paragraph{$\phi$-PPO.}
$\phi$-PPO integrates Shapley credit assignment into PPO~\citep{schulman2017ppo} via two mechanisms.
First, the $\phi$-MDP $\cM^\phi_\pi$ (Definition~\ref{def:phi-mdp}) reformulates credit as per-step reward $r^\phi_\pi(s,x) = \E[\phi_t \mid S_t = s, X_t = x]$, enabling standard actor-critic methods.
Second, Prioritized Trajectory Replay (PTR) focuses updates on high-$|\phi|$ subtrajectories rather than individual transitions as in PER~\citep{DBLP:journals/corr/SchaulQAS15prioritizedexpreplay}.
See Algo.~\ref{alg:phi-ppo} and Sec.~\ref{sec:phi-ppo} in appendix for the details.
In each iteration, $\phi$-PPO samples sub-trajectories via PTR, computes $\phi$-estimates, and updates both values and policy networks.
Advantages are normalized by batch standard deviation (not z-scored, to preserve $\phi=0$ as the no-effect baseline).

We admit that Algo.~\ref{alg:bootstrapped-estimator} requires $O(T  M)$ value network evaluations per trajectory (one per counterfactual state per coalition sample) while standard PPO requires $O(T)$ evaluations.
But this overhead is acceptable when: (1) value inference is cheaper than environment simulation for complex environments; (2) the $M$ evaluations can be batched for GPU efficiency.
In practice, we use $M = 1$ coalition sample per trajectory, matching standard actor-critic overhead while still benefiting from counterfactual variance reduction.

%% file: sections/5-experiments.tex
\section{Experiments}
\label{sec:experiments}

We evaluate $\phi$-PPO on two training benchmarks: SkillLuck (Sec.~\ref{sec:skill-luck}) and Combinatorial Lock (Sec.~\ref{sec:lock}), illustrate $\phi$-value attributions on DoorKey (Sec.~\ref{sec:doorkey}), and provide a controlled ablation on the Fork MDP in Appendix~\ref{sec:fork-exp}.
Throughout, a run \emph{succeeds} iff $P(\text{optimal}) > 0.9$ at all causal states for $200$ consecutive episodes.
We compare all methods on total environment steps: $\phi$-PPO uses $(1{+}M)T$ steps per episode ($1$ observed trajectory plus $M$ counterfactual rollouts); all baselines use $T$ steps per episode.
No method receives extra simulator access (full hyperparameters in Table~\ref{tab:hyperparams}).
We compare against PPO, REINFORCE, and eight prior credit assignment methods: HCA-state, HCA-return~\citep{harutyunyan2019hindsight}, CCA~\citep{pmlr-v139-mesnard21a}, CoCoA~\citep{meulemans2023would}, QCA~\citep{mesnard2023quantile}, RUDDER~\citep{arjona2019rudder}, Synthetic Returns~\citep{raposo2021synthetic}, and H-DICE~\citep{velu2024hindsightdice}.
\begin{enumerate}[label=\textbf{Q\arabic*:}, leftmargin=23pt, topsep=0pt, parsep=0pt, itemsep=1pt]
    \item Do $\phi$-values assign credit accurately, and does this translate into a learning advantage?
    \item Can $\phi$-PPO learn the optimal policy under delayed, sparse, and highly stochastic rewards?
    \item How sample efficient is $\phi$-PPO compared to vanilla PPO and prior credit assignment methods?
\end{enumerate}

\subsection{Skill vs.\ Luck}
\label{sec:skill-luck}

The SkillLuck MDP separates skill-based from luck-based rewards to test whether correct causal attributions translate into learning (\textbf{Q1}, \textbf{Q2}, \textbf{Q3}).
The agent chooses from $\mathcal{A} = \{0, 1\}$ at each of $T{=}100$ steps ($\gamma{=}0.999$, $M{=}1$).
Every reward includes i.i.d.\ noise $\mathcal{N}(0, \sigma^2)$; we sweep $\sigma \in \{0, 3\}$ to test robustness to stochasticity.
The first action $x_1$ selects one of two branches.
Actions at intermediate steps ($2 \leq t \leq T{-}2$) do not affect the trajectory or reward.
On the \emph{skill} branch, the final action determines the reward: $x_{T-1} {=} 1$ yields reward $1$, while $x_{T-1} {=} 0$ yields $0$.
On the \emph{luck} branch, the agent receives a reward of $\operatorname{Bern}(0.5) \cdot \gamma^{T-3}$ at step $2$, regardless of any action.
The $\gamma^{T-3}$ scaling ensures both branches yield returns of comparable magnitude.
Under the optimal final action, the skill branch has expected discounted return $\gamma^{T-2}$ while the luck branch has $0.5\gamma^{T-2}$, so the optimal policy chooses skill.
On the skill branch, two actions are causal: $x_1$ and $x_{T-1}$.
On the luck branch, only $x_1$ is causal because it determines the branch; all subsequent actions have zero causal effect.

\paragraph{Predictions.}
$\phi$-values should assign zero credit to $x_{T-1}$ on luck (\textbf{D1}) and more credit to $x_1$ on skill than luck (\textbf{D4}); Figure~\ref{fig:intro example} confirms this.
HCA fails because skill and luck branches share the same terminal observation.
At $T{=}100$, methods that cannot propagate delayed credit default to the luck branch.

\begin{figure}[t]
\centering
\begin{minipage}[c]{0.40\textwidth}
\centering
\input{figures/skillluck-mdp}
\end{minipage}%
\hfill
\begin{minipage}[c]{0.55\textwidth}
\centering
\includegraphics[width=\textwidth]{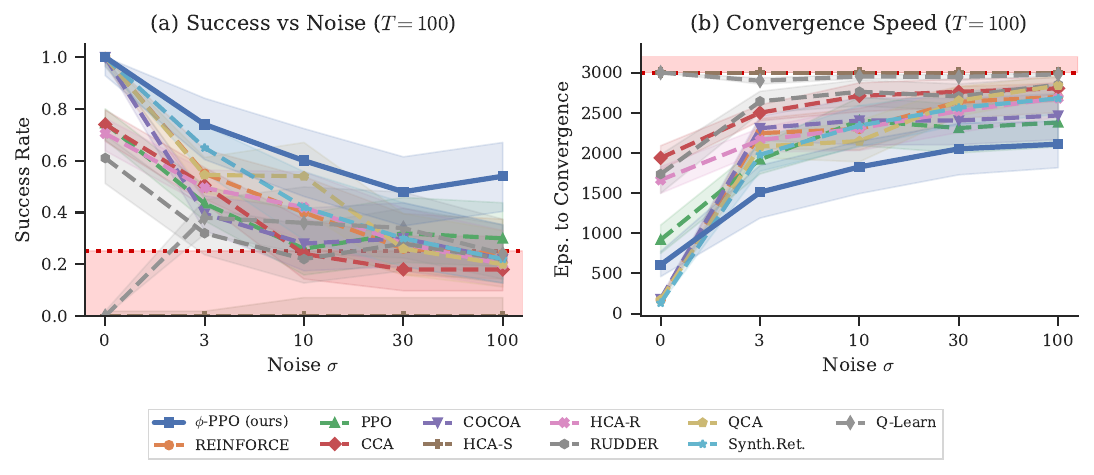}
\end{minipage}
\caption{\textbf{Skill vs.\ Luck results} ($T{=}100$, $\gamma{=}0.999$, $M{=}1$, $n{=}50$ seeds per condition).
\emph{Left:} MDP structure; the first action selects skill or luck, and only the final action on the skill branch affects the reward.
\emph{(a)} Success rate with Wilson 95\% CIs across noise levels $\sigma \in \{0, 3, 10, 30, 100\}$.
\emph{(b)} Mean episodes to convergence with bootstrap 95\% CIs; failures imputed at the budget limit ($3{,}000$ episodes).}
\label{fig:skill-luck-benchmark}
\end{figure}

\paragraph{Results (Figure~\ref{fig:skill-luck-benchmark}).}
At $\sigma{=}0$, several baselines match or exceed $\phi$-PPO because the noiseless setting poses no variance challenge; $\phi$-PPO's slight shortfall reflects $M{=}1$ approximation noise.
HCA fails as predicted.
At $\sigma{=}3$, the ranking reverses: $\phi$-PPO outperforms all baselines because shared noise cancels in the counterfactual difference (Proposition~\ref{prop:stochasticity}), while all other methods degrade.
REINFORCE and PPO both fail, consistent with predictions.
Among successful runs (right panel), $\phi$-PPO also converges fastest.

\subsection{Combinatorial Lock}
\label{sec:lock}

\begin{figure}[t]
\centering
\begin{minipage}[c]{0.40\textwidth}
\centering
\input{figures/lock-mdp}
\end{minipage}%
\hfill
\begin{minipage}[c]{0.55\textwidth}
\centering
\includegraphics[width=\textwidth]{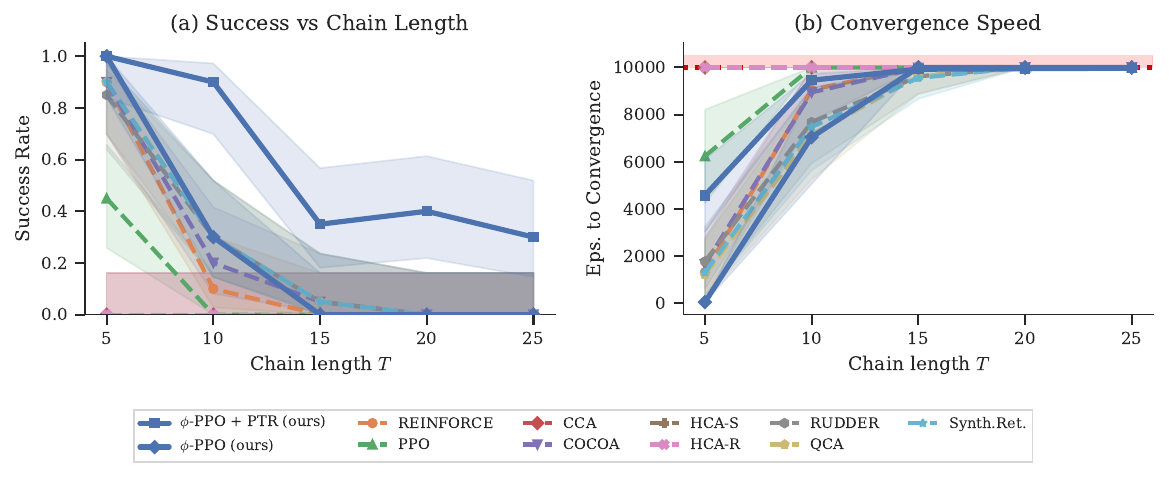}
\end{minipage}
\caption{\textbf{Combinatorial Lock results} ($T \in \{5, 10, 15, 20, 25\}$, $n{=}20$ seeds, each seeded with one optimal trajectory).
\emph{Left:} MDP structure; \textsc{Correct} advances one step (zero reward) until the final transition yields $Y{=}1$ (discounted return $\gamma^{T-1}$, twice any shortcut's $\gamma^T/2$), while \textsc{Shortcut} from position $p$ terminates with $\gamma^{T-p}/2$.
\emph{(a)} Success rate with Wilson 95\% CIs.
\emph{(b)} Mean episodes to convergence with bootstrap 95\% CIs; failures imputed at the budget limit ($10{,}000$ episodes).
Solid lines are $\phi$-PPO variants; dashed lines are baselines.}
\label{fig:lock}
\end{figure}

The Combinatorial Lock is a $T$-step chain with tempting immediate rewards that tests delayed credit assignment (\textbf{Q2}, \textbf{Q3}).
The agent chooses from $\mathcal{A} = \{0, 1\}$ at each position $p \in \{0, \ldots, T{-}1\}$: action $0$ (\textsc{Correct}) advances to $p{+}1$ with zero immediate reward; action $1$ (\textsc{Shortcut}) terminates with immediate reward $\gamma^{T-p}/2$.
Reaching position $T$ via $T$ consecutive \textsc{Correct} actions yields reward $+1$ (discounted return $\gamma^{T-1}$).
The ratio of shortcut to optimal return is $\gamma/2 < 1$ at every position, so the shortcut is always suboptimal despite positive immediate reward.
We sweep $T \in \{5, 10, 15, 20, 25\}$ with $\gamma = 0.99$ and $M{=}16$ coalition samples.
Every run begins with one optimal trajectory in the replay buffer so that all methods start from the same initial information.
This isolates \emph{exploitation} (can the method propagate credit from a known success?) from exploration, which becomes vanishingly unlikely as $T$ grows.

\paragraph{Predictions.}
Every action along the optimal path is causal ($k'/T = 1$), so the sparse causality benefit (Proposition~\ref{prop:sparse-causality}) does not apply.
Combinatorial Lock isolates the delayed reward benefit (Proposition~\ref{prop:td-propagation}):
(1)~$\phi$-PPO should propagate credit from the terminal reward to position $0$ in $O(1)$ episodes by evaluating full counterfactual returns;
(2)~TD-based methods require $O(T)$ bootstrap steps, and the shortcut temptation should lock the policy before credit reaches position $0$, causing baseline success to collapse as $T$ grows.

\paragraph{Results (Figure~\ref{fig:lock}).}
Both predictions are confirmed.
In the left panel, $\phi$-PPO + PTR (solid blue) is the only method that scales beyond $T{=}10$; every baseline (dashed) collapses by $T{=}15$.
The right panel confirms that $\phi$-PPO + PTR converges at a roughly constant episode count across chain lengths, consistent with $O(1)$ propagation, while baselines flatline at the budget ceiling.
Without PTR (solid orange), the seed trajectory is diluted by failures; PPO + PTR/PER (dashed) also fail, confirming that neither prioritized replay nor $\phi$-values alone suffice—both are necessary.
Performance degrades at large $T$ because random coalition swaps are exponentially unlikely to preserve the full optimal path, so most counterfactual rollouts return zero and estimator variance increases.

\subsection{DoorKey}
\label{sec:doorkey}

\begin{figure}[t]
  \centering
  \includegraphics[width=\linewidth]{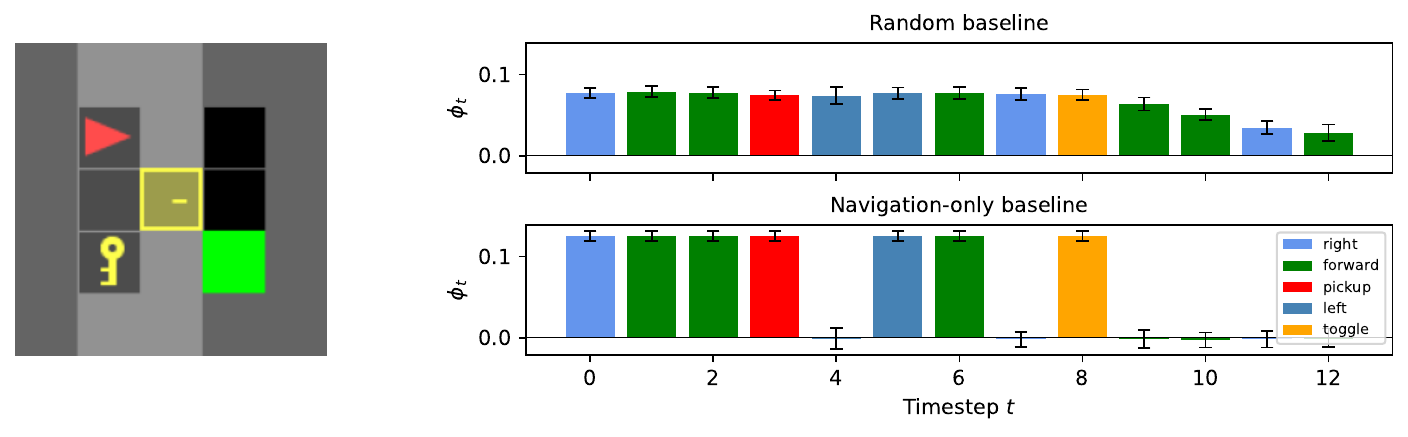}
  \caption{\textbf{DoorKey attributions} ($M{=}4096$, $\gamma{=}0.99$, 20 seeds).
  \emph{Left:} the agent (red) must collect the key, unlock the door, and reach the goal (green).
  \emph{Top right:} uniform random baseline.
  \emph{Bottom right:} navigation-only baseline that follows the shortest path ignoring walls.
  }
  \label{fig:doorkey-phi}
\end{figure}

We compute $\phi$-value attributions along an optimal trajectory in the $5{\times}5$ DoorKey environment from MiniGrid~\citep{gym_minigrid} (\textbf{Q1}).
The agent selects from 7 actions (navigation, pickup, toggle, etc.) across $|\mathcal{S}| = 400$ states ($5 {\times} 5$ grid $\times$ 4 orientations $\times$ 2 key states $\times$ 2 door states).
The optimal trajectory reaches the goal in 13 steps.

\paragraph{Baseline choice determines the causal story.}
Under a uniform random baseline (top right), $\phi_t$ is nearly uniform because DoorKey is a serial dependency chain in which every step must be correct for success.
Values taper after the door toggle ($t{=}8$) because a random agent increasingly reaches the goal by chance from nearby states.
Under a navigation-only baseline (bottom right), the causal story changes entirely.
The navigator already takes the correct action at post-door states, so those actions receive $\phi_t \approx 0$ (D1, causal admissibility).
Pickup, toggle, and first-room navigation steps that the navigator cannot replicate receive nonzero $\phi_t$ (D2, causal power), splitting the total credit equally.
$\phi$-values thus measure credit \emph{relative to the baseline policy}, not in absolute terms.
Notably, the key pickup ($t{=}3$) receives comparable $\phi_t$ to the door toggle ($t{=}8$) under both baselines, despite zero immediate reward; its value reflects the downstream effect of enabling the toggle five steps later, confirming that $\phi$-values propagate credit through dependent subgoal chains (Proposition~\ref{prop:td-propagation}).

Across all benchmarks, $\phi$-PPO converges in fewer episodes than every baseline.
At $M{=}1$ (SkillLuck), the $2{\times}$ per-episode cost is offset by ${\geq}4{\times}$ fewer episodes; at $M{=}16$ (Lock), $\phi$-PPO wins on episode efficiency but not total simulation budget, motivating future variance reduction work.

%% file: figures/skillluck-mdp.tex

\begin{tikzpicture}[
    state/.style={circle, draw, minimum size=6mm, font=\scriptsize},
    start/.style={state, fill=white},
    mid/.style={state, fill=gray!10, draw=gray!40},
    term/.style={state, fill=yellow!30, draw=orange!50},
    arr/.style={->, thick, >=stealth},
    opt/.style={arr, blue!70},
    sub/.style={arr, red!50},
    lbl/.style={font=\tiny, midway},
    ann/.style={font=\tiny, text=black!60}
]

\node[start] (s1) at (0, 0) {$s_1$};

\node[mid] (sk2) at (1.2, 1.0) {$s_2$};
\node[ann] at (2.0, 1.0) {$\cdots$};
\node[mid] (skN) at (2.8, 1.0) {$s_{T\!-\!1}$};

\node[mid] (lk2) at (1.2, -1.0) {$s_2'$};
\node[ann] at (2.0, -1.0) {$\cdots$};
\node[mid] (lkN) at (2.8, -1.0) {$s_{T\!-\!1}'$};

\node[term] (sF) at (4.8, -0.5) {$s_F$};
\node[mid]  (sW) at (4.8, 1.0) {$s_W$};

\draw[opt] (s1) -- node[lbl, above, sloped] {$x_1{=}0$} (sk2);
\draw[sub] (s1) -- node[lbl, below, sloped] {$x_1{=}1$} (lk2);

\draw[arr, gray!40] (sk2) -- (1.65, 1.0);
\draw[arr, gray!40] (2.35, 1.0) -- (skN);
\draw[arr, gray!40] (lk2) -- (1.65, -1.0);
\draw[arr, gray!40] (2.35, -1.0) -- (lkN);

\draw[opt] (skN) -- node[lbl, below left] {\tiny$x_{T\!-\!1}{=}1,\;Y{=}1$} (sF);
\draw[arr, gray!40] (skN) -- node[lbl, above] {\tiny$x_{T\!-\!1}{=}0$} (sW);

\draw[sub] (lkN) -- node[lbl, below, sloped] {\tiny$\forall x_{T\!-\!1}$} (sF);

\node[ann, blue!70, above=1pt of sk2] {\emph{skill}};
\node[ann, red!50, below=1pt of lk2] {\emph{luck}};
\node[ann, below left=1pt of lk2, red!50] {$Y{\sim}\operatorname{Bern}(\tfrac{1}{2})$};

\end{tikzpicture}

%% file: figures/lock-mdp.tex

\begin{tikzpicture}[
    state/.style={circle, draw, minimum size=5.5mm, font=\scriptsize},
    start/.style={state, fill=white},
    mid/.style={state, fill=gray!10, draw=gray!40},
    term/.style={state, fill=yellow!30, draw=orange!50},
    arr/.style={->, thick, >=stealth},
    opt/.style={arr, blue!70},
    sub/.style={arr, red!50, dashed},
    lbl/.style={font=\tiny, midway},
    ann/.style={font=\tiny, text=black!60}
]

\node[start] (s0) at (0, 0) {$0$};
\node[mid]   (s1) at (1.0, 0) {$1$};
\node[ann]        at (1.7, 0) {$\cdots$};
\node[mid]  (sT1) at (2.4, 0) {};
\node[term]  (sT) at (3.4, 0) {$T$};

\draw[opt] (s0) -- (s1);
\draw[arr, blue!70] (s1) -- (1.35, 0);
\draw[arr, blue!70] (2.05, 0) -- (sT1);
\draw[opt] (sT1) -- node[lbl, above] {$Y{=}1$} (sT);

\draw[sub] (s0) to[out=-60, in=-170] node[lbl, below, pos=0.3] {$\frac{\gamma^T}{2}$} (sT);
\draw[sub] (s1) to[out=-55, in=-150] node[lbl, below, pos=0.4] {$\frac{\gamma^{T\!-\!1}}{2}$} (sT);

\node[ann, blue!70, above=1pt of s0, xshift=5pt] {correct ($Y{=}0$)};
\node[ann, red!50] at (1.7, -1.3) {shortcut};

\end{tikzpicture}

%% file: sections/6-conclusion.tex
\section{Conclusion}
\label{sec:conclusion}


We introduced Counterfactual Shapley Credit Assignment, a principled framework that addresses the fundamental temporal credit assignment problem by grounding reward attribution in causal theory via Structural Causal Models. By utilizing the Counterfactual Shapley Value ($\phi$-value), we isolate an agent’s true causal contribution from environmental stochasticity and spurious correlations, inducing a dual $\phi$-MDP that preserves the original optimal policy while providing significantly clearer learning signals. Our framework specifically addresses the challenges of high stochasticity, sparse causality, and delayed rewards through a computationally efficient linear-time estimator and the resulting $\phi$-PPO algorithm. Empirical results across challenging benchmarks demonstrate that $\phi$-PPO achieves superior sample efficiency and convergence compared to existing baselines, offering a scalable and theoretically sound path toward more efficient and explainable reinforcement learning.

%% file: sections/E-full-pseudocode.tex
\section{Counterfactual Simulation}
\label{sec:full code}

\begin{algorithm}[ht]
\caption{\textsc{CtfSim}: Counterfactual Trajectory Simulation}
\label{alg:ctf-sim}
\begin{algorithmic}[1]
\REQUIRE Coalition $\mathbf{z} \in \{0,1\}^T$, trajectory $\tau = (s_{1:T}, x_{1:T}, y_{1:T})$, policies $\pi$, $\pi_{\mathrm{base}}$
\ENSURE $(s^{\mathbf{z}}_{1:T}, y^{\mathbf{z}}_{1:T}, Y^{\mathbf{z}})$: counterfactual states, rewards, and total return
\STATE $s^{\mathbf{z}}_1 \gets s_1$
\FOR{$t = 1$ to $T$}
    \IF{$z_t = 0$ \AND $s^{\mathbf{z}}_t = s_t$}
        \STATE $x^{\mathbf{z}}_t \gets x_t$ \COMMENT{On-track: reuse observed}
    \ELSIF{$z_t = 0$}
        \STATE $x^{\mathbf{z}}_t \sim \pi(\cdot \mid s^{\mathbf{z}}_t)$ \COMMENT{Off-track: sample from $\pi$}
    \ELSE
        \STATE $x^{\mathbf{z}}_t \sim \pi_{\mathrm{base}}(\cdot \mid s^{\mathbf{z}}_t)$ \COMMENT{Intervention}
    \ENDIF
    \IF{$(s^{\mathbf{z}}_t, x^{\mathbf{z}}_t) = (s_t, x_t)$}
        \STATE $(s^{\mathbf{z}}_{t+1}, y^{\mathbf{z}}_t) \gets (s_{t+1}, y_t)$ \COMMENT{Reuse observed}
    \ELSE
        \STATE $s^{\mathbf{z}}_{t+1} \sim P(\cdot \mid s^{\mathbf{z}}_t, x^{\mathbf{z}}_t)$
        \STATE $y^{\mathbf{z}}_t \sim r(s^{\mathbf{z}}_t, x^{\mathbf{z}}_t, \cdot)$
    \ENDIF
\ENDFOR
\STATE $Y^{\mathbf{z}} \gets \sum_{t=1}^T \gamma^{t-1} y^{\mathbf{z}}_t$ \COMMENT{Discounted return}
\STATE \RETURN $(s^{\mathbf{z}}_{1:T}, y^{\mathbf{z}}_{1:T}, Y^{\mathbf{z}})$
\end{algorithmic}
\end{algorithm}

%% file: sections/D-phiPPO.tex
\section{$\phi$-PPO}
\label{sec:phi-ppo}

$\phi$-PPO integrates Shapley credit assignment into PPO~\citep{schulman2017ppo} via two mechanisms.
First, the $\phi$-MDP $\cM^\phi_\pi$ (Definition~\ref{def:phi-mdp}) reformulates credit as per-step reward $r^\phi_\pi(s,x) = \E[\phi_t \mid S_t = s, X_t = x]$, enabling standard actor-critic methods.
Second, Prioritized Trajectory Replay (PTR) focuses policy updates on high-$|\phi|$ actions when credit is sparse.

\paragraph{Intuition.}
Compared to standard PPO, $\phi$-PPO makes three changes: (1) replace per-step rewards $r_t$ with Shapley values $\hat{\phi}_t$, which isolate causal contribution from return noise; (2) add a second value network $V_\psi$ for bootstrapping counterfactual simulations (Algo.~\ref{alg:bootstrapped-estimator}); the standard PPO value network $V^\phi_\omega$ remains, now estimating advantages on the $\phi$-MDP; (3) use PTR to select trajectories for replay.
Compared to standard PER~\citep{schaul2015per}, PTR prioritizes by $|\phi_t|$ (causal impact) rather than TD error $|\delta_t|$.
Consider Pong: when the ball moves away from the paddle, actions have no causal effect on the outcome, yet TD error may be high if the value network is inaccurate.
TD-error PER would wastefully prioritize these non-causal states; PTR correctly assigns low priority since $|\phi_t| \approx 0$.
Conversely, if the policy network learns slower than the value network, TD error is low but the policy remains suboptimal at causally important states.
PTR correctly prioritizes these states because $|\phi_t|$ reflects whether the action matters, not whether the value estimate is accurate.

Realized estimates $\hat{\phi}_t$ yield unbiased policy gradients when substituted for $r^\phi_\pi$.
By Theorem~\ref{thm:lambda-phi} with $\lambda=1$, $\E[\hat{\phi}_t \mid \tau] = \phi_t$, where $\tau = (s_{1:T}, x_{1:T}, y_{1:T})$ denotes the trajectory.
The law of iterated expectations then gives $\E[\hat{\phi}_t \mid s_t, x_t] = r^\phi_\pi(s_t, x_t)$.

The algorithm maintains two value networks: $V_\psi \approx V_\pi$ for bootstrapping counterfactual simulations in Algo.~\ref{alg:bootstrapped-estimator}, and $V^\phi_\omega \approx V^\phi_\pi$ for computing $\phi$-advantages.

\paragraph{$\phi$-Advantages.}
Let $\hat{\phi}_t$ denote the Shapley estimate from Algo.~\ref{alg:bootstrapped-estimator}.
The \emph{$\phi$-TD error} measures one-step prediction error using $\hat{\phi}_t$ as reward:
\begin{align}
    \delta_t^\phi = \hat{\phi}_t + V^\phi_\omega(s_{t+1}) - V^\phi_\omega(s_t). \label{eq:phi-td}
\end{align}
The \emph{$\phi$-advantage} generalizes GAE~\citep{schulman2015high} to the $\phi$-MDP:
\begin{align}
    A_t^{\phi} = \sum_{\ell=0}^{T-t} (\lambda^\phi)^\ell \delta_{t+\ell}^\phi, \label{eq:phi-gae}
\end{align}
where $\lambda^\phi \in [0,1]$ controls the bias-variance tradeoff (distinct from $\lambda$ in Algo.~\ref{alg:bootstrapped-estimator}, which controls bootstrapping depth).
The $\phi$-return is
\begin{align}
    G_t^\phi = \sum_{\ell=0}^{T-t} \hat{\phi}_{t+\ell}, \label{eq:phi-return}
\end{align}
which serves as the training target for $V^\phi_\omega$ (Eq.~\ref{eq:vphi-loss}).
Shapley efficiency ensures $\sum_t \phi_t = Y - Y_{\emptyset}$, where $Y_{\emptyset}$ is the baseline return (all-default actions); the sum of per-step credit equals the total reward improvement.

\paragraph{Adaptive fresh/replay ratio.}
Each training iteration mixes fresh rollouts with replay from the priority buffer.
A static fresh probability $\xi$ is suboptimal: early in training, fresh rollouts discover high-$|\phi|$ states; later, replay exploits known high-impact actions.
We adapt $\xi$ via Thompson sampling \citep{russo2018tutorial}.

Let $p_j = \max_t p_{j,t}$ denote the trajectory-level priority, where $p_{j,t}$ is the priority from Eq.~\eqref{eq:ptr-priority} below.
A fresh trajectory ``wins'' if $p_j > \bar{p}_{\mathcal{B}}$.
Here $\bar{p}_{\mathcal{B}} = \sum_{i \in \mathcal{B}} p_i^2 / \sum_{i \in \mathcal{B}} p_i$ is the expected priority under proportional sampling.
Let $u \in [0,1]$ denote the fraction of fresh trajectories that win.
We track $(\alpha, \beta)$, initialized to $(1, 1)$, via exponential moving averages with decay $\gamma_\xi < 1$:
\begin{align}
    \alpha \gets \gamma_\xi \alpha + u, \quad \beta \gets \gamma_\xi \beta + (1 - u). \label{eq:xi-update}
\end{align}
The decay discounts old observations to adapt to non-stationarity.
At steady state $\alpha + \beta \to 1/(1 - \gamma_\xi)$, bounding the effective sample size.

At each iteration, sample $\xi \sim \mathrm{Beta}(\alpha + 1, \beta + 1)$ and collect $n_{\mathrm{fresh}} = \mathrm{clamp}(\lfloor \xi B \rfloor, 1, B{-}1)$ fresh rollouts.
Stratified sampling reduces variance in the fresh/replay ratio compared to per-trajectory Bernoulli sampling.

\paragraph{Importance-sampling corrections.}
The IS correction depends on the trajectory source.
Both fresh and replay trajectories use the PPO ratio for multi-epoch updates:
\begin{align}
    r_{j,t} = \pi_\theta(x_{j,t} \mid s_{j,t}) / \pi_{\theta_{\mathrm{old}}}(x_{j,t} \mid s_{j,t}), \label{eq:ppo-ratio}
\end{align}
where $\pi_{\theta_{\mathrm{old}}}$ is the policy snapshot at the start of the current training step.
The clipped surrogate objective~\citep{schulman2017ppo} prevents large policy updates:
\begin{align}
    \ell(r, A) = -\min\bigl(r A,\; \mathrm{clip}(r, 1{-}\epsilon_{\mathrm{clip}}, 1{+}\epsilon_{\mathrm{clip}}) A\bigr). \label{eq:ppo-clip}
\end{align}
\emph{Replay trajectories} additionally require a PTR importance-sampling weight to correct for priority-based sampling.
Let $P_j = p_j / \sum_{i \in \mathcal{B}} p_i$ denote the sampling probability for trajectory $j$ from buffer $\mathcal{B}$.
The normalized IS weight is:
\begin{align}
    w_j = (|\mathcal{B}| \cdot P_j)^{-\beta_{\mathrm{is}}} \big/ \max_{j' \in \mathcal{B}} (|\mathcal{B}| \cdot P_{j'})^{-\beta_{\mathrm{is}}}. \label{eq:ptr-weights}
\end{align}
The exponent $\beta_{\mathrm{is}}$ anneals from $0$ to $1$ over training; at $\beta_{\mathrm{is}} = 1$, the weights fully correct for non-uniform sampling.
Since $\phi$-values are \emph{recomputed} for replay trajectories using the current policy and value function, no policy importance weight is needed.

The full policy loss applies the PTR weight only to replay trajectories:
\begin{align}
    \mathcal{L}_\theta^{\mathrm{PTR}} = \frac{1}{B} \sum_{j=1}^B \tilde{w}_j \sum_{t=1}^{T} \ell(r_{j,t}, A_t^\phi), \quad \tilde{w}_j = \begin{cases} 1 & \text{if } j \text{ is fresh} \\ w_j & \text{if } j \text{ is replay} \end{cases}. \label{eq:ptr-loss}
\end{align}

\paragraph{PTR: Priority computation.}
When credit is sparse, PTR focuses updates on high-impact actions.
The priority for state $s_t$ is
\begin{align}
    p_t = (\phi^{(2)}_t)^{\alpha_{\mathrm{ptr}}/2} + \epsilon_{\mathrm{ptr}}, \label{eq:ptr-priority}
\end{align}
where $\phi^{(2)}_t$ is a bias-corrected exponential moving average~\citep{kingma2014adam} of squared Shapley estimates.
The priority $\sqrt{\phi^{(2)}}$ is an exponentially-weighted root-mean-square (RMS), capturing both mean magnitude and variance: $\sqrt{\E[\phi^2]} = \sqrt{\mu^2 + \sigma^2}$.
This prioritizes states where (i) actions have consistent causal impact (high $|\mu|$), and (ii) the action choice matters but the policy has not yet converged (high $\sigma$ with $\mu \approx 0$).
The exponent $\alpha_{\mathrm{ptr}}$ controls priority sharpness; $\epsilon_{\mathrm{ptr}}$ prevents starvation.

\emph{Priority decay.}
Priorities become stale as the policy improves.
Each iteration, all priorities decay by $p_j \gets \gamma_p p_j$, where $\gamma_p \in [0.99, 0.9999]$.
Trajectories with $p_j < \epsilon_{\mathrm{evict}}$ are evicted, implicitly bounding the buffer size to $O(\log(\epsilon_{\mathrm{evict}}) / \log(\gamma_p))$ iterations of data.

\paragraph{Training.}
Three components train jointly.
The value network $V_\psi$ for counterfactual bootstrapping trains on both actual and counterfactual TD errors from Algo.~\ref{alg:bootstrapped-estimator}:
\begin{align}
    \mathcal{L}_{V} = \textstyle\frac{1}{(M{+}1)BT}\Bigl(\sum_{j,t} \delta_{j,t}^2 + \sum_{j,t,m} (\delta^{\mathbf{z}^{(m)}}_{j,t})^2\Bigr). \label{eq:v-loss}
\end{align}
The $\phi$-critic $V^\phi_\omega$ regresses to the $\phi$-return:
\begin{align}
    \mathcal{L}_{V^\phi} = \textstyle\frac{1}{BT}\sum_{j,t} (G_{j,t}^\phi - V^\phi_\omega(s_{j,t}))^2. \label{eq:vphi-loss}
\end{align}
Following \citet{schulman2017ppo}, all three components train via a single combined loss with gradient clipping (norm $g_{\max}$):
\begin{align}
    \mathcal{L}^* = \mathcal{L}_\theta^{\mathrm{PTR}} + c_1 \mathcal{L}_{V} + c_2 \mathcal{L}_{V^\phi} - c_3 H(\pi_\theta), \label{eq:phi-ppo-loss}
\end{align}
where $c_1, c_2$ weight the value losses and $c_3$ weights the entropy bonus $H(\pi_\theta) = -\E_{s \sim d_\pi}\bigl[\sum_x \pi_\theta(x \mid s) \log \pi_\theta(x \mid s)\bigr]$~\citep{schulman2017ppo} (Table~\ref{tab:hyperparams}).

\paragraph{Algorithm.}
Algo.~\ref{alg:phi-ppo} summarizes $\phi$-PPO.
Each iteration samples subtrajectories via PTR, computes $\phi$-estimates, and updates all three networks.
Advantages are normalized by batch standard deviation (not z-scored, to preserve $\phi=0$ as the no-effect baseline).

\begin{algorithm}[t]
\caption{$\phi$-PPO}
\label{alg:phi-ppo}
\begin{algorithmic}[1]
\REQUIRE Policy $\pi_\theta$; value networks $V_\psi$, $V^\phi_\omega$; batch size $B$; coalition samples $M$; epochs $K$; decays $\gamma_\xi, \gamma_p$
\ENSURE Updated parameters $\theta, \psi, \omega$
\STATE Initialize priority buffer $\mathcal{B} \gets \emptyset$; pseudo-counts $\alpha, \beta \gets 0$
\FOR{iteration $= 1, 2, \ldots$}
    \STATE Sample $\xi \sim \mathrm{Beta}(\alpha + 1, \beta + 1)$; set $n_{\mathrm{fresh}} \gets \mathrm{clamp}(\lfloor \xi B \rfloor, 1, B{-}1)$
    \STATE $\forall j \in \mathcal{B}$: $p_j \gets \gamma_p p_j$; evict if $p_j < \epsilon_{\mathrm{evict}}$ \COMMENT{priority decay}
    \FOR{$j = 1$ to $B$}
        \IF{$\mathcal{B} = \emptyset$ \OR $j \le n_{\mathrm{fresh}}$}
            \STATE Roll out fresh $\tau \sim \pi_\theta$; set $\tilde{w}_j \gets 1$
        \ELSE
            \STATE Sample $\tau \propto p_\tau$ from $\mathcal{B}$; set $\tilde{w}_j$ via Eq.~\ref{eq:ptr-weights}
        \ENDIF
        \STATE Compute $\hat{\phi}_{1:T}, \delta_{1:T}, \delta^{\mathbf{z}}_{1:T,1:M} \gets \textsc{L3Est}(\tau, M)$ \COMMENT{Algo.~\ref{alg:bootstrapped-estimator}}
        \STATE Compute $\phi$-advantages $A^\phi_{1:T}$ \COMMENT{Eq.~\ref{eq:phi-gae}}
        \STATE $A^\phi_{1:T} \gets A^\phi_{1:T} / \mathrm{std}(A^\phi)$ \COMMENT{normalize, preserve $\phi{=}0$}
        \STATE Compute priorities $p^\tau_{1:T}$; update $\mathcal{B}$ \COMMENT{Eq.~\ref{eq:ptr-priority}}
    \ENDFOR
    \STATE Update $\alpha \gets \gamma_\xi \alpha + u$, $\beta \gets \gamma_\xi \beta + (1 - u)$ \COMMENT{$u$: fresh win rate}
    \STATE Optimize $\mathcal{L}^*$ w.r.t.\ $\theta, \psi, \omega$ for $K$ epochs \COMMENT{Eq.~\ref{eq:phi-ppo-loss}}
\ENDFOR
\end{algorithmic}
\end{algorithm}

\paragraph{Guarantees.}
$\phi$-PPO inherits theoretical properties from its components.

\emph{Gradient equivalence.}
By Lemma~\ref{lem:gradient-equiv}, $\phi$-values produce the same expected policy gradient as original returns: $\E[\sum_t \phi_t \nabla \log \pi_t] = \E[Y \sum_t \nabla \log \pi_t]$.
Thus $\phi$-PPO targets the same optimum as standard PPO.

\emph{Bootstrap bias.}
The $\lambda$-return (Eq.~\ref{eq:lambda-return}) introduces bias bounded by $O(\gamma(1{-}\lambda)\epsilon_V / (1{-}\gamma\lambda))$ where $\epsilon_V = \|V - V_\pi\|_\infty$ (Theorem~\ref{thm:lambda-phi}).
This bias vanishes as $V_\psi \to V_\pi$.

\emph{Trajectory replay.}
Because $\phi$-values are defined over trajectories (Definition~\ref{def:phi-mdp}), PTR replays entire subtrajectories rather than individual transitions.
This is sound by Corollary~\ref{cor:chunked-phi}: for any state $s$ and horizon $n$, the optimal policy in the undiscounted truncated $\phi$-MDP $\cM^\phi_n(s)$ equals the optimal policy in $\cM$.
Since $V^{\phi,*} = 0$ under self-baseline at optimality, no bootstrap is needed in the exact case; in practice, $\hat{V}^\phi \approx 0$ during training.
IS weights (Eq.~\ref{eq:ptr-weights}) correct for the prioritized sampling distribution, ensuring unbiased gradient estimates.

Hyperparameters are in Table~\ref{tab:hyperparams}; proofs in Appendix~\ref{app:estimation}.

%% file: sections/C-theorydetails.tex
\section{Theory Details}
\label{sec:theory detail}
\begin{definition}[Functional Dependence]
Given world $(\mathcal{M}, \mathbf{u})$, the functional dependence of $Y$ on $X$
under baseline $z = (x', \mathbf{z}')$ is:
\begin{align}
  c(\mathcal{M}, \mathbf{u}, X, Y, z) = Y_{\mathbf{z}'}(\mathbf{u}) - Y_{\mathbf{z}', x'}(\mathbf{u})
\end{align}
The baseline space is $\mathcal{Z} = \{(x', \mathbf{z}') : x' \in \mathcal{D}_X,
\mathbf{Z} \subseteq \mathbf{X} \setminus \{X\}, \mathbf{z}' \in \mathcal{D}_{\mathbf{Z}}\}$
equipped with probability measure $P^\mathcal{M}(\mathbf{Z}, X)$.
\end{definition}

This measures the outcome difference when $X$ changes from baseline $x'$ to its actual
value, with other variables held at $\mathbf{z}'$.
When $\mathbf{z}' = \emptyset$, this reduces to the unit-level total effect
$Y(\mathbf{u}) - Y_{x'}(\mathbf{u})$.
Functional dependence captures the counterfactual test: if changing $X$ changes $Y$,
then $X$ is responsible for $Y$.
This condition is necessary for causation across major definitions in the actual causation literature~\citep{halpern2016actual,beckers2018principled,beckers2021counterfactual}.

To formalize the condition, we introduce the \emph{why-query} $\mathrm{Why}(Y \mid \mathbf{e}; \mathbf{X})$, composed of three parts:
\begin{enumerate}[label={(\arabic*)}, leftmargin=2.2em, topsep=0pt, parsep=0pt, itemsep=1pt]
\item the event being explained, or \emph{explanandum}, $Y = y$, inducing alternate values, or \emph{foils}, $Y = y' \neq y$;
\item the \emph{evidence} $\mathbf{E} = \mathbf{e}$, the observed context used to infer causes; and
\item the \emph{explanatory variables} $\mathbf{X} \subseteq \mathbf{V}$, which prescribe the subset of variables to attribute to.\footnote{Explicitly selecting the set of explanatory variables as a subset of observed variables allows us to exclude irrelevant or unobserved causes such as ``Alice arriving'' or ``the Big Bang'', even though it is true that if Alice did not arrive, or the Big Bang did not occur, the funding outcome would be different! }
\end{enumerate}
We write $\mathrm{Why}(y \mid \mathbf{e})$ as shorthand for $\mathrm{Why}(Y{=}y \mid \mathbf{e}; \mathbf{X})$ when $\mathbf{X}$ is clear from context.

Given a why-query, an Explanatory Variable Attribution (EVA) assigns a numerical attribution to each explanatory variable in $\mathbf{X}$.

\begin{definition}[Explanatory Variable Attribution] \label{def:eva}
An explanatory variable attribution (EVA) is a function $\phi: \Omega \times \mathbb{W} \times \mathbf{X} \to \mathbb{R}$, where $\Omega$ is the space of SCMs over $\mathbf{V}$ and $\mathbb{W}$ is the space of why-queries $\mathrm{Why}(Y \mid \mathbf{e}; \mathbf{X})$ asking why $Y = y \in \mathbf{e}$, given evidence $\mathbf{E} = \mathbf{e}$, in terms of variables $\mathbf{X} \subseteq \mathbf{V} \setminus \{Y\}$. 
\end{definition}

With EVA, we are ready to formally define the Causal Credit Assignment criteria from Def.~\ref{def:causal credit assignment}.
\begin{definition}[Causal Credit Assignment, formal]
\label{def:causal credit assignment formal}
We define four desiderata as mappings $D_{1:4}: \Omega \times \mathbb{C} \times \Phi \to \{0, 1\}$,
where $\mathbb{C}$ is the space of causal measures and $\Phi$ is the space of EVAs.
Given SCM $\mathcal{M} \in \Omega$, causal measure $c \in \mathbb{C}$, and EVA $\phi \in \Phi$,
we say $D_i(\mathcal{M}, c, \phi) = 1$ when:
\begin{align}
  D_1: \quad &c = 0 \implies \phi = \mathbf{0}
    \tag{Causal Admissibility} \\
  D_2: \quad &\exists \mathbf{u}, z: c \neq 0 \implies \phi \neq \mathbf{0}
    \tag{Causal Power} \\
  D_3: \quad &c_\mathcal{M} = c_{\mathcal{M}'} \land P^\mathcal{M}_{\bar{c}}(Z) \not\equiv P^{\mathcal{M}'}_{\bar{c}}(Z)
    \land P^\mathcal{M}_{+}(z) \geq P^{\mathcal{M}'}_{+}(z) \notag \\
    &\implies \phi_\mathcal{M} > \phi_{\mathcal{M}'}
    \tag{Causal Normality} \\
  D_4: \quad &c_\mathcal{M} \geq c_{\mathcal{M}'} \land P^\mathcal{M}(Z) \equiv P^{\mathcal{M}'}(Z)
    \land P^\mathcal{M}(c_Z) \not\equiv P^{\mathcal{M}'}(c_Z) \notag \\
    &\implies \phi_\mathcal{M} > \phi_{\mathcal{M}'}
    \tag{Causal Effect Scaling}
\end{align}
Quantification: all premises universally quantified unless marked $\exists$.
Notation: $c$, $c_\mathcal{M}$ abbreviate $c(\mathcal{M}, \mathbf{u}, X, Y, z)$;
$\phi$, $\phi_\mathcal{M}$ abbreviate $\phi_X(\mathcal{M}, w)$;
$P \equiv P'$ denotes distributional equality;
$P^\mathcal{M}_{\bar{c}}(Z) \coloneqq P^\mathcal{M}(Z \mid c \neq 0)$;
$P^\mathcal{M}_{+}(z) \coloneqq \mathrm{sign}(c_\mathcal{M}) \cdot P^\mathcal{M}(z)$.
\end{definition}

\paragraph{Chunk-size Updates.} Theorem~\ref{thm:optimal-policy-equiv} establishes optimality equivalence for full-length trajectories optimization.
In practice, we chunk trajectories: each chunk begins at an arbitrary state $s$ (sampled from a replay buffer) and extends for $n$ steps with a value bootstrap at the end.
The following corollary shows that both modifications---arbitrary starting state and finite-horizon truncation---preserve the optimal policy.

\begin{restatable}[Chunked $\phi$-Learning]{corollary}{CorChunkedPhi}
\label{cor:chunked-phi}
For any state $s$ and horizon $n$, let $\cM^\phi_n(s)$ denote the undiscounted $\phi$-MDP starting from $s$ with actions $x_{1:n}$ and terminal value $V^\phi(s_n)$.
The optimal policy in $\cM^\phi_n(s)$ equals the optimal policy in $\cM$.
\end{restatable}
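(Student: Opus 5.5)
The plan is to reduce the chunked problem to the full-horizon statement of Theorem~\ref{thm:optimal-policy-equiv}, applied to a suitably defined sub-MDP, and then show that the terminal value $V^\phi(s_n)$ correctly summarizes the tail of the trajectory.

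\emph{Step 1: the sub-MDP.} Fix a start state $s$ and horizon $n$. Let $\cM_n(s)$ be the original MDP-SCM restricted to the actions $X_1,\dots,X_n$. Its initial state is $s$, and its outcome is
\begin{align}
Y^{(n)} = \sum_{t=1}^{n}\gamma^{t-1}Y_t + \gamma^{n} V_\pi(S_{n+1}),
\end{align}
so the tail return is replaced by its conditional expectation. Assumption~\ref{assum:ctf-indep} is inherited by this restriction, since it only involves the exogenous noise at steps $1,\dots,n$. The game $f(\mathbf{z})$ over the $n$ players is well defined on $\cM_n(s)$.

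\emph{Step 2: optimality on the truncated problem.} Lemma~\ref{lem:gradient-equiv} and Theorem~\ref{thm:optimal-policy-equiv} apply verbatim to $\cM_n(s)$. Hence
\begin{align}
\arg\max_\theta \E_{\pi_\theta}\Bigl[\sum_{t\le n}\phi_t\Bigr] = \arg\max_\theta \E_{\pi_\theta}\bigl[Y^{(n)}\bigr].
\end{align}
By Shapley efficiency, $\sum_{t\le n}\phi_t = Y^{(n)} - Y^{(n),\mathbf{1}}$. The key point is that the baseline term $\E[Y^{(n),\mathbf{1}}]$ is generated by fresh baseline samples. Under self-baseline it is a constant with respect to the policy at the observed actions, which Remark~\ref{rem:self-baseline-indep} makes precise. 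It therefore does not shift the argmax.

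\emph{Step 3: consistency with $\cM$ via the Bellman principle.} With $V_\pi$ replaced by the optimal $V^*$, the quantity $\E[Y^{(n)}\mid S_1=s]$ is exactly the $n$-step Bellman lookahead of $\cM$ from $s$. By the principle of optimality, the policy that is greedy on it over $n$ steps coincides with an optimal policy of $\cM$ on the states reachable from $s$.

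We then identify the terminal bonus $V^\phi(s_n)$ of $\cM^\phi_n(s)$ with the tail $\phi$-return $\E[\sum_{t>n}\phi_t\mid s_n]$. This identification makes the undiscounted chunked $\phi$-return equal to the full $\phi$-return conditioned on passing through $s$. Applying Theorem~\ref{thm:optimal-policy-equiv} to the MDP started at $s$ then finishes the argument. Since $s$ and $n$ are arbitrary, every chunk has the same optimal policy as $\cM$.

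\emph{Main obstacle.} The hardest step is justifying that the Shapley values computed on the $n$-player chunk game agree in expectation with those of the full $T$-player game. Shapley values are not additive across player subsets, so this agreement is not automatic. It depends on fixing the tail players at their policy draws, which is what the self-baseline does.

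The first thing to try is to show that, under self-baseline, the tail players' marginal contributions have zero conditional mean given the chunk. The argument would use the score-function/fresh-sample argument of Remark~\ref{rem:self-baseline-indep}. That result would give $\E[\sum_{t>n}\phi_t \mid s_n] = V^\phi(s_n)$, and at optimality this equals $0$, consistent with the paper's remark that $V^{\phi,*}=0$.

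If that fails, the fallback is to argue only at the level of argmax. Any discrepancy between the chunk-level and full-game credits is a function of $s_{n+1}$ alone, and it is absorbed into the terminal value, so it does not affect which policy is optimal.
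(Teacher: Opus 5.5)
\textbf{There is a genuine gap: the step you label the main obstacle is the one the corollary actually needs, and the proposal leaves it open.} The object $\cM^\phi_n(s)$ has rewards $r^\phi_\pi$ built from the full-episode credits $\phi_t$ and terminal value $V^\phi$. Steps 1--2 instead analyze a different $n$-player game whose outcome carries the tail $\gamma^n V_\pi(S_{n+1})$, and you never connect the two.

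Your first route names the right intermediate fact: tail players have zero conditional marginal contribution. But you neither prove it nor draw the consequence that matters. You conclude only $\E[\sum_{t>n}\phi_t\mid s_n]=V^\phi(s_n)$, which is simply the definition of $V^\phi$ (Corollary~\ref{cor:phi-bellman}). What is needed is that the credits of $x_1,\dots,x_n$ are unaffected by the tail. A vanishing tail sum does not give this, for the non-additivity reason you yourself state. The fallback (the discrepancy depends only on $s_{n+1}$ and can be absorbed into the terminal value) is asserted, not argued, and it is precisely the claim that needs proof.

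A smaller issue in Step 2: under self-baseline the full-baseline counterfactual is itself a $\pi_\theta$-rollout. So $\E_{\pi_\theta}[Y^{(n),\mathbf 1}]=\E_{\pi_\theta}[Y^{(n)}]$ is not constant in $\theta$; in fact $\E_{\pi_\theta}[\sum_{t\le n}\phi_t]=0$ for every $\theta$. Remark~\ref{rem:self-baseline-indep} only yields $\E[Y^{(n),\mathbf 1}\nabla\log\pi(X_t\mid S_t)]=0$. Step 2 should therefore be stated as the score-function identity of Lemma~\ref{lem:gradient-equiv} with the credits held fixed, not as an equality of argmaxes.

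\textbf{The missing idea, which makes your first route go through.} Take self-baseline with the reuse/resample rule of Algorithm~\ref{alg:ctf-sim}, and condition on the observed chunk $\tau_{1:n}$ (through $s_{n+1}$). At every later step, whichever branch fires, the counterfactual uses a draw from $\pi(\cdot\mid s^{\mathbf z}_t)$, and likewise for transitions and rewards. That draw is either the observed $x_t$ (on-track with $z_t=0$) or a fresh one; given $s_t$, the observed $x_t$ is such a draw, independent of everything earlier because the exogenous noise is mutually independent. Hence, marginally over the observed tail, the continuation from $s^{\mathbf z}_{n+1}$ is a $\pi$-rollout whatever the tail mask $\mathbf z_{n+1:T}$ is. This gives $\E[f(\mathbf z)\mid\tau_{1:n}]=\E[Y^{(n)}-Y^{(n),\mathbf z_{1:n}}\mid\tau_{1:n}]$. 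The conditional expected game is therefore your chunk game, with the tail players null.

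The Shapley value is linear in the game and unchanged by adjoining null players. So $\E[\phi_t\mid\tau_{1:n}]$ equals your chunk-game credit for $t\le n$ and vanishes for $t>n$. By the tower property this identifies $r^\phi_\pi$ on the chunk with your game. It also gives $V^\phi\equiv0$ for every $\pi$, not only at the optimum, so the terminal term drops out identically.

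\textbf{Comparison with the paper.} The paper never compares two games. It handles the arbitrary start state by the Markov property, removes the terminal term via $V^{\phi,*}=0$, and invokes Lemma~\ref{lem:gradient-equiv} for the $n$-step problem directly. Your tail term $\gamma^nV_\pi(S_{n+1})$, equivalently the Bellman-lookahead reasoning of your Step 3, is what keeps that $n$-step objective from being myopic. The paper's appeal to the standard $n$-step return leaves this point implicit.
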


The corollary holds for \emph{any} state $s$, not only the initial state distribution.
This justifies experience replay: sampling $(s_t, x_t)$ from a buffer and computing $\phi_{t:t+n}$ yields correct policy updates regardless of trajectory origin.
Under self-baseline at optimality, $V^{\phi,*}(s) = 0$ for all $s$ (Shapley efficiency), so no bootstrap is needed in the exact case.
In practice, $\hat{V}^\phi \approx 0$ during training, and the bootstrap error vanishes as the policy converges.
This enables Prioritized Trajectory Replay (PTR) using $|\phi_t|$ as priority.
TD-error prioritization samples states where the value function is inaccurate, but accurate values do not guarantee optimal actions.
In contrast, $|\phi_t|$ measures causal contribution: large $|\phi_t|$ indicates the action significantly affected the outcome.
Policy improvement at states with large $|\phi_t|$ affects returns the most.
When credit is sparse ($k' \ll T$ causal actions), non-causal state-action pairs have $\phi_t \approx 0$ and receive minimal priority, concentrating updates on the causally-relevant subset.

\begin{table}[t]
\centering
\caption{Three dimensions where $\phi$-methods improve sample complexity.}
\label{tab:when-helps}
\smallskip
\begin{tabular}{@{}llll@{}}
\toprule
\textbf{Dimension} & \textbf{Improvement} & \textbf{Condition} & \textbf{TD Failure} \\
\midrule
Sparse causality & $\times\, k'/T$ variance & $k' \ll T$ & Wasted updates \\
High stochasticity & $\times\, (1-\rho_Y)$ variance & $\rho_Y \approx 1$ & Noise swamps signal \\
Delayed reward & $O(T) \to O(1)$ episodes & Large delay & Slow propagation \\
\bottomrule
\end{tabular}
\end{table}

%% file: sections/A-proofs.tex
\section{Proofs}
\label{app:proofs}

This appendix contains proofs and additional material omitted from the main text.

\subsection{Notation}
\label{app:notation}

\begin{table}[t]
\centering
\caption{Notation glossary}
\label{tab:notation}
\begin{tabular}{ll}
\toprule
Symbol & Meaning \\
\midrule
\multicolumn{2}{l}{\emph{SCM and MDP}} \\
$\V, \U$ & Endogenous, exogenous variables \\
$\tau$ & Trajectory: $(s_{1:T}, x_{1:T}, y_{1:T})$ \\
$S_t, X_t, Y_t$ & State, action, reward at time $t$ \\
$\pi(x \mid s)$ & Policy (action distribution given state) \\
$\gamma \in (0,1)$ & Discount factor \\
$T$ & Horizon (episode length) \\
\midrule
\multicolumn{2}{l}{\emph{Coalitions and Counterfactuals}} \\
$\mathbf{z} \in \{0,1\}^T$ & Coalition mask ($z_t = 1 \Leftrightarrow X_t$ intervened) \\
$|\mathbf{z}| = k$ & Coalition size \\
$s^{\mathbf{z}}, Y^{\mathbf{z}}$ & Counterfactual state, outcome under mask $\mathbf{z}$ \\
\midrule
\multicolumn{2}{l}{\emph{Shapley Values and Credit}} \\
$\phi_t$ & Counterfactual Shapley value of action $X_t$ \\
$\kappa_t(\mathbf{z})$ & Shapley kernel weight \\
$G_t$ & Observed return from time $t$: $\sum_{k \geq 0} \gamma^k y_{t+k}$ \\
$G_t^\phi$ & $\phi$-return from time $t$: $\sum_{\tau \geq t} \gamma^{\tau-t} \phi_\tau$ \\
$\cM^{\phi}_\pi$ & $\phi$-MDP (reward-redistributed MDP) \\
$r^\phi_\pi(s,x)$ & $\phi$-MDP reward: $\E[\phi_t \mid S_t = s, X_t = x]$ \\
\midrule
\multicolumn{2}{l}{\emph{$\phi$-PPO (Algo.~\ref{alg:phi-ppo})}} \\
$V_\psi$ & Value network for original MDP (bootstrapping) \\
$V^\phi_\omega$ & Value network for $\phi$-MDP (advantages) \\
$B$ & Batch size (number of trajectories) \\
$w_j$ & PTR importance-sampling weight for sample $j$ \\
\midrule
\multicolumn{2}{l}{\emph{Analysis}} \\
$(\epsilon, \delta)$ & Accuracy parameters \\
$M_{\delta,\epsilon}$ & Sample complexity: $\sigma^2/((1-\gamma)^2 \delta \epsilon^2)$ \\
$k'$ & Number of causal actions (with $\phi_t \neq 0$) \\
$\lambda$ & NTE mixing parameter ($0$: bootstrap, $1$: Monte Carlo) \\
$\lambda^\phi$ & GAE mixing for $\phi$-advantage estimation \\
$\rho_Y$ & Outcome correlation under shared exogenous noise \\
\bottomrule
\end{tabular}
\end{table}

\subsection{Problem Setting (Section~\ref{sec:credit-assignment})}

\begin{lemma}[Variance of Discounted Sum]
\label{lem:discounted-var}
If $\E[Y_t^2] \leq \sigma^2$ for all $t$, then $\Var[Y] \leq \sigma^2/(1-\gamma)^2$ where $Y = \sum_{t=1}^T \gamma^{t-1} Y_t$.
\end{lemma}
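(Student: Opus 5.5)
The plan is to bound the variance by the second moment and then control the second moment with Minkowski's inequality, i.e.\ the triangle inequality for the $L^2$ norm $\|W\|_2 = \sqrt{\E[W^2]}$. No independence or correlation structure among the $Y_t$ is needed, which matters here because per-step rewards along a trajectory are generally dependent through the shared state process.

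First, since $\Var[Y] = \E[Y^2] - (\E[Y])^2$, we have $\Var[Y] \leq \E[Y^2] = \|Y\|_2^2$, so it suffices to show $\|Y\|_2 \leq \sigma/(1-\gamma)$.

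Second, applying Minkowski's inequality to the finite sum $Y = \sum_{t=1}^T \gamma^{t-1} Y_t$ and using homogeneity of the norm gives
\begin{align}
\|Y\|_2 \leq \sum_{t=1}^T \gamma^{t-1} \|Y_t\|_2 \leq \sigma \sum_{t=1}^T \gamma^{t-1} \leq \frac{\sigma}{1-\gamma},
\end{align}
where the middle step uses Assumption~\ref{assum:second-moment}, $\|Y_t\|_2 = \sqrt{\E[Y_t^2]} \leq \sigma$, and the last step bounds the finite geometric sum by the infinite one, valid since $\gamma \in (0,1)$. Squaring yields $\E[Y^2] \leq \sigma^2/(1-\gamma)^2$, and combining with the first step gives the claim.

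An equivalent route, if one prefers to avoid Minkowski, is to write $\E[Y^2] = \sum_{s,t}\gamma^{s-1}\gamma^{t-1}\E[Y_sY_t]$ and bound each cross term by Cauchy--Schwarz, $|\E[Y_sY_t]| \leq \sigma^2$, which factors back into $\bigl(\sigma\sum_t \gamma^{t-1}\bigr)^2$. There is no genuine obstacle. The only point needing care is not to assume uncorrelated rewards, which would give the tighter but unjustified bound $\sigma^2/(1-\gamma^2)$; Minkowski and Cauchy--Schwarz avoid that assumption automatically.
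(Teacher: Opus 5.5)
Your proof is correct and is essentially the paper's argument. The paper expands $\Var[Y]=\sum_{t,s}\gamma^{t+s-2}\Cov[Y_t,Y_s]$ and bounds each $|\Cov[Y_t,Y_s]|\le\sigma^2$ by Cauchy--Schwarz, which is your alternative route with covariances in place of raw cross moments; your Minkowski version is the same inequality written as the $L^2$ triangle inequality after the harmless step $\Var[Y]\le\E[Y^2]$, and it gives the same bound $\sigma^2\bigl(\sum_t\gamma^{t-1}\bigr)^2\le\sigma^2/(1-\gamma)^2$.
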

\begin{proof}
By Cauchy-Schwarz and $\mathrm{Var}[Y_t] \leq \E[Y_t^2] \leq \sigma^2$, we have $|\mathrm{Cov}[Y_t, Y_s]| \leq \sigma^2$.
Thus:
\begin{align}
    \mathrm{Var}[Y] = \sum_{t,s} \gamma^{t+s-2} \mathrm{Cov}[Y_t, Y_s] \leq \sigma^2 \left(\sum_{t=1}^\infty \gamma^{t-1}\right)^2 = \frac{\sigma^2}{(1-\gamma)^2}.
\end{align}
\end{proof}

\begin{proposition}[Sample Complexity is $\gamma$-Independent]
\label{prop:gamma-independent}
For relative error targets $\epsilon_{\mathrm{rel}} = \epsilon / |\phi|$, $M_{\delta,\epsilon_{\mathrm{rel}}} = O(\sigma^2/(\delta \epsilon_{\mathrm{rel}}^2 R_{\max}^2))$.
\end{proposition}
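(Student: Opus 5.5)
The plan is to combine the absolute sample-complexity bound of Theorem~\ref{thm:optimal-proposal}, $M_{\delta,\epsilon} = \sigma^2/((1-\gamma)^2\delta\epsilon^2)$, with a lower bound on the magnitude of the target $|\phi|$. That lower bound should carry the same factor $1/(1-\gamma)$, so that substituting $\epsilon = \epsilon_{\mathrm{rel}}|\phi|$ cancels the $\gamma$-dependence.

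\textbf{Step 1: Chebyshev bound for the estimator.} By Definition~\ref{def:coalition-estimator} and the $O(1)$ second-moment property of $Q^*$, each summand $\kappa_t(\mathbf{z})f(\mathbf{z})/Q^*(\mathbf{z})$ has second moment at most a constant times $\E[f(\mathbf{z})^2]$. Since $f(\mathbf{z}) = Y - Y^{\mathbf{z}}$, Lemma~\ref{lem:discounted-var} together with the analogous bound for $\E[Y^2]$ gives $\E[f^2] \lesssim \sigma^2/(1-\gamma)^2$. Chebyshev's inequality then yields $P(|\hat\phi_t - \phi_t| > \epsilon) \le \sigma^2/((1-\gamma)^2 M \epsilon^2)$, which is $\le \delta$ once $M \ge M_{\delta,\epsilon}$. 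This simply restates Theorem~\ref{thm:optimal-proposal}, which I will take as given.

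\textbf{Step 2: the $\gamma$-scaling of $|\phi|$ (the main obstacle).} I would make explicit the regime in which the proposition is meant: a non-degenerate causal action whose $\phi$-value is of the order of a return. This means the per-step rewards it affects have magnitude up to $R_{\max}$ and accumulate over the effective horizon $1/(1-\gamma)$, so that $|\phi| = \Theta(R_{\max}/(1-\gamma))$. Without such an assumption the claim is false, because $\phi_t$ could be arbitrarily small.

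The natural justification runs through Shapley efficiency, $\sum_t \phi_t = Y - Y^{\mathbf{1}}$, together with the observation that the NTE and its moments scale homogeneously in the reward scale. Rescaling all rewards by $c$ scales $\phi$ by $c$ and $\sigma$ by $c$. Likewise, a discounted sum of per-step quantities of size $R_{\max}$ is of order $R_{\max}/(1-\gamma)$ in both the mean and the standard deviation. I would state this as the standing assumption $|\phi| \ge c_0 R_{\max}/(1-\gamma)$ and note that it matches the upper bound $|\phi| \le 2R_{\max}/(1-\gamma)$, which follows from efficiency and boundedness.

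\textbf{Step 3: substitution.} Set $\epsilon = \epsilon_{\mathrm{rel}}|\phi|$ in Step 1. This gives
$M = \sigma^2/((1-\gamma)^2\delta\epsilon_{\mathrm{rel}}^2|\phi|^2) \le \sigma^2/(c_0^2\delta\epsilon_{\mathrm{rel}}^2R_{\max}^2)$.
The $(1-\gamma)^2$ factors cancel, so $M_{\delta,\epsilon_{\mathrm{rel}}} = O(\sigma^2/(\delta\epsilon_{\mathrm{rel}}^2R_{\max}^2))$, which is independent of $\gamma$.
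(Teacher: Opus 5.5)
Your argument is essentially the paper's own proof: substitute $\epsilon = \epsilon_{\mathrm{rel}}|\phi|$ into $M_{\delta,\epsilon} = \sigma^2/((1-\gamma)^2\delta\epsilon^2)$ and use that $|\phi|$ scales like $R_{\max}/(1-\gamma)$, so the $(1-\gamma)^2$ factors cancel. You are more careful than the paper on one point. The paper only asserts the upper bound $O(R_{\max}/(1-\gamma))$, which is the wrong direction for bounding $M$ from above, whereas you correctly see that a lower bound $|\phi| \ge c_0 R_{\max}/(1-\gamma)$ is needed and state it as an explicit hypothesis. One small slip in your unused side remark: $|\phi_t| \le 2R_{\max}/(1-\gamma)$ does not follow from efficiency, which only controls $\sum_t \phi_t$. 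It follows from $\phi_t$ being a convex combination of marginal contributions $Y^{\mathbf{z}} - Y^{\mathbf{z}\cup\{t\}}$, each of which is bounded by $2R_{\max}/(1-\gamma)$.
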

\begin{proof}
Returns scale as $O(R_{\max}/(1-\gamma))$, so $\epsilon = O(\epsilon_{\mathrm{rel}} R_{\max} / (1-\gamma))$.
Substituting: $M_{\delta,\epsilon} = \sigma^2 / ((1-\gamma)^2 \delta \epsilon^2) = \sigma^2 / (\delta \epsilon_{\mathrm{rel}}^2 R_{\max}^2)$.
\end{proof}

\subsection{Credit Assignment (Section~\ref{sec:credit-assignment})}

\ThmPhiIsCausal*
\begin{proof}
We show $\phi$-values (Shapley values of the NTE game $f(\mathbf{z}) = Y - Y_{\sigma_{\mathbf{z}}}$) satisfy each desideratum (formal definitions in Appendix~\ref{sec:theory detail}).

\textbf{D1 (Causal Admissibility).}
Suppose $c = 0$: the functional dependence of $Y$ on $X_t$ is zero for all contexts $(\mathbf{u}, z)$.
Then changing $X_t$'s intervention status does not affect the counterfactual outcome: $Y^{\mathbf{z} \cup \{t\}} = Y^{\mathbf{z}}$ for every coalition $\mathbf{z}$.
All marginal contributions vanish: $f(\mathbf{z} \cup \{t\}) - f(\mathbf{z}) = Y^{\mathbf{z}} - Y^{\mathbf{z} \cup \{t\}} = 0$.
By the Shapley null player axiom, $\phi_t = 0$.

\textbf{D2 (Causal Power).}
Suppose $\exists\, \mathbf{u}, z$ such that $c(\mathcal{M}, \mathbf{u}, X_t, Y, z) \neq 0$.
Then there exists a coalition $\mathbf{z}^*$ (corresponding to context $z$) where the marginal contribution is nonzero: $f(\mathbf{z}^* \cup \{t\}) - f(\mathbf{z}^*) \neq 0$.
Since all Shapley weights $w(\mathbf{z}) = \frac{|\mathbf{z}|!(T - |\mathbf{z}| - 1)!}{T!} > 0$, the weighted sum $\phi_t = \sum_{\mathbf{z}: z_t = 0} w(\mathbf{z})[f(\mathbf{z} \cup \{t\}) - f(\mathbf{z})]$ includes a nonzero term with a strictly positive weight.
Therefore $\phi_t \neq 0$.

\textbf{D3 (Causal Normality).}
Consider two SCMs $\mathcal{M}, \mathcal{M}'$ with identical causal effects ($c_\mathcal{M} = c_{\mathcal{M}'}$) but different baseline distributions satisfying $P^\mathcal{M}_{+}(z) \geq P^{\mathcal{M}'}_{+}(z)$.
The NTE game value $f(\mathbf{z})$ depends on the SCM through the counterfactual simulation, which samples baseline actions from $\pi_{\mathrm{base}}$.
When $P^\mathcal{M}_{+}(z) \geq P^{\mathcal{M}'}_{+}(z)$, the baseline under $\mathcal{M}$ assigns weakly higher probability to baselines that produce positive causal effects.
Each marginal contribution $f(\mathbf{z} \cup \{t\}) - f(\mathbf{z})$ measures the additional effect of intervening at step $t$.
Under $\mathcal{M}$, the ``default'' behavior (baseline) is more aligned with positive outcomes, so the departure from baseline caused by $X_t$ receives greater credit.
Formally, the Shapley value is the expected marginal contribution over uniformly random orderings; since each marginal contribution under $\mathcal{M}$ weakly exceeds that under $\mathcal{M}'$ (by the monotonicity of the NTE in baseline probabilities when the causal effect sign is fixed), we obtain $\phi_\mathcal{M} > \phi_{\mathcal{M}'}$.

\textbf{D4 (Causal Effect Scaling).}
Consider two SCMs with identical baseline distributions ($P^\mathcal{M}(Z) \equiv P^{\mathcal{M}'}(Z)$) but $c_\mathcal{M} \geq c_{\mathcal{M}'}$ (larger causal effect in $\mathcal{M}$).
Larger causal effects directly increase the magnitude of counterfactual outcome differences: for each coalition $\mathbf{z}$, $|Y^{\mathbf{z}} - Y^{\mathbf{z} \cup \{t\}}|$ is weakly larger under $\mathcal{M}$.
Since baseline distributions are identical, the Shapley weights are the same in both games.
By the Shapley monotonicity property (if player $t$'s marginal contribution in game $v$ weakly exceeds that in game $v'$ for every coalition, then $\phi_t(v) \geq \phi_t(v')$), we obtain $\phi_\mathcal{M} > \phi_{\mathcal{M}'}$.
\end{proof}

\LemGradientEquiv*
\begin{proof}
By Shapley efficiency, $\sum_t \phi_t = Y - Y^{\mathbf{1}}$ where $Y^{\mathbf{1}} = Y_{\sigma_{\mathbf{1}}}$ is the full-baseline outcome.
We show $\E[Y^{\mathbf{1}} \cdot \sum_t \nabla \log \pi_t] = 0$.

Under any baseline policy $\pi_{\mathrm{base}}$ (including self-baseline), the counterfactual $Y^{\mathbf{1}}$ replaces all observed actions with fresh samples $X^{\mathbf{1}}_{1:T} \sim \pi_{\mathrm{base}}(\cdot \mid S^{\mathbf{1}}_{1:T})$.
These samples are drawn independently of the observed actions $X_{1:T}$.
We claim $Y^{\mathbf{1}} \perp X_t \mid S_t$.
By the Markov property of the MDP, $X_t \sim \pi(\cdot \mid S_t)$ is conditionally independent of all preceding random variables given $S_t$.
The baseline outcome $Y^{\mathbf{1}}$ is a deterministic function of the shared initial state $S_1$ and the counterfactual noise (actions from $\pi_{\mathrm{base}}$ and resampled transitions), all of which are independent of $X_t$.
Although $Y^{\mathbf{1}}$ depends on $S_1$, which is an ancestor of $S_t$, the Markov property ensures $S_1 \perp X_t \mid S_t$.
Therefore $Y^{\mathbf{1}} \perp X_t \mid S_t$ for any baseline policy.
By the score function identity:
\begin{align}
    \E[Y^{\mathbf{1}} \cdot \nabla \log \pi(X_t \mid S_t)]
    = \E_{S_t}[\E[Y^{\mathbf{1}} \mid S_t] \cdot \E_{X_t}[\nabla \log \pi(X_t \mid S_t)]]
    = 0.
\end{align}
\end{proof}

\ThmOptimalPolicyEquiv*
\begin{proof}
By Lemma~\ref{lem:gradient-equiv}, $\nabla_\theta J_\phi(\theta) = \nabla_\theta J(\theta)$ for all $\theta$.
Two functions with identical gradients everywhere have identical stationary points (where the gradient vanishes).
Therefore both objectives share the same critical points, including maxima.

Explicitly, by Shapley efficiency, $J_\phi(\theta) = J(\theta) - \E_{\pi_\theta}[Y^{\mathbf{1}}]$.
The term $\E_{\pi_\theta}[Y^{\mathbf{1}}]$ (expected baseline return under self-baseline) has zero gradient by the same argument as Lemma~\ref{lem:gradient-equiv}: $Y^{\mathbf{1}}$ is independent of the realized actions $X_t$, so $\E[Y^{\mathbf{1}} \nabla \log \pi] = 0$.
Thus $J_\phi$ and $J$ differ by a function with zero gradient, and $\arg\max J_\phi = \arg\max J$.
\end{proof}

\begin{corollary}[$\phi$-Value Function]
\label{cor:phi-bellman}
The $\phi$-MDP $\cM^\phi_\pi$ has a well-defined value function $V^\phi(s) = \E[\sum_{\ell \geq 0} \phi_{t+\ell} \mid S_t = s]$ satisfying the Bellman equation $V^\phi(s) = \E_{x \sim \pi}[r^\phi_\pi(s,x) + \E_{s' \sim P(\cdot \mid s,x)}[V^\phi(s')]]$.
The $\phi$-MDP is undiscounted ($\gamma_\phi = 1$): discounting is already incorporated into the Shapley values via the NTE game outcome $Y = \sum_t \gamma^{t-1} r_t$.
\end{corollary}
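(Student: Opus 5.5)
Corollary~\ref{cor:phi-bellman} makes two claims: $V^\phi$ is well defined, and it satisfies the stated Bellman equation. I would prove both by treating the $\phi$-MDP $\cM^\phi_\pi$ as an ordinary finite-horizon MDP. It keeps the transition kernel $P$ of $\cM$ and the policy $\pi$, and its reward is $r^\phi_\pi$ from Definition~\ref{def:phi-mdp}. The first step is well-definedness. Each $\phi_t$ is a finite kernel-weighted sum (Eq.~\ref{eqn:kernel}) of differences $f(\mathbf{z}) = Y - Y^{\mathbf{z}}$. By Assumption~\ref{assum:second-moment} and Lemma~\ref{lem:discounted-var}, every $Y^{\mathbf{z}}$ has finite second moment, so $\E|\phi_t| < \infty$. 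Hence $r^\phi_\pi(s,x)$ exists, and $\sum_{\ell \ge 0}\phi_{t+\ell}$ is a finite sum because the horizon is $T$. This is why no discount is needed for convergence. Moreover, by Shapley efficiency the full undiscounted sum $\sum_t \phi_t$ equals $Y - Y^{\mathbf{1}}$, which is already a discounted quantity. So I would define $V^\phi(s) = \E[\sum_{\ell\ge 0}\phi_{t+\ell}\mid S_t=s]$ directly. I would also note that it is time-indexed, or stationary if $t$ is absorbed into the state, as is standard for finite horizon.

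The second step is the Bellman recursion. I would split $\sum_{\ell\ge0}\phi_{t+\ell} = \phi_t + \sum_{\ell\ge0}\phi_{t+1+\ell}$ and apply the tower property, first conditioning on $(S_t,X_t)=(s,x)$ and then on $S_{t+1}=s'$. The first term gives $\E_{x\sim\pi}[r^\phi_\pi(s,x)]$ by definition. For the second term I need
\begin{align}
\E\Big[\sum_{\ell\ge0}\phi_{t+1+\ell}\,\Big|\, S_t=s, X_t=x, S_{t+1}=s'\Big] = V^\phi(s').
\end{align}

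This Markov-type property is the main obstacle. The difficulty is that $\phi_{t'}$ for $t' > t$ is a function of the whole trajectory, including the past: the coalition games rerun counterfactuals from $s_1$. So $\phi_{t'}$ is not literally a function of the future given $S_{t'}$. I would try to resolve this the way Definition~\ref{def:phi-mdp} implicitly does, by taking the $\phi$-MDP reward to be the conditional expectation $r^\phi_\pi$ rather than the raw $\phi_t$. I would then define $V^\phi$ as the value of the MDP $(P, r^\phi_\pi, \pi)$, for which the Bellman equation holds by the standard dynamic-programming argument. The remaining task is to check that this value agrees with the raw-$\phi$ expression in the statement. Because the Bellman operator only involves expected rewards under the state-visitation distribution of $\pi$, the expected sum of rewards is linear, and $\E[\phi_{t'}\mid S_{t'},X_{t'}] = r^\phi_\pi(S_{t'},X_{t'})$. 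These facts give equality of the expected sums whenever conditioning on $S_t=s$ does not alter the conditional law of $\phi_{t'}$ given $(S_{t'},X_{t'})$. That holds under the Markov property of $\cM$ combined with Assumption~\ref{assum:ctf-indep}. If this step cannot be closed fully, I would state the corollary with $V^\phi$ defined via $r^\phi_\pi$.

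The final step is the discounting remark. Setting $\gamma_\phi = 1$ and using efficiency gives $V^\phi(s_1) = \E[Y - Y^{\mathbf{1}}]$, which already contains the factors $\gamma^{t-1}$ through $Y$. Applying $\gamma$ again would therefore double-discount, as noted after Theorem~\ref{thm:optimal-policy-equiv}.
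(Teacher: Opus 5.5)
Your main route matches the paper's proof: treat $\cM^\phi_\pi$ as an ordinary finite-horizon MDP with kernel $P$, reward $r^\phi_\pi$ and $\gamma_\phi=1$, check integrability, and invoke standard dynamic programming. Your discounting remark matches the paper's as well. Your integrability argument is actually better than the paper's appeal to efficiency, which only bounds $\sum_t\phi_t$. You instead note that each $\phi_t$ is a finite $\kappa_t$-weighted combination of the $f(\mathbf{z})$. This does require Assumption~\ref{assum:second-moment} to cover the counterfactual rewards $y^{\mathbf{z}}_t$, which the paper also assumes tacitly.

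The extra step you add to recover the raw formula $V^\phi(s)=\E[\sum_{\ell\ge0}\phi_{t+\ell}\mid S_t=s]$ does not go through. You need $\E[\phi_{t'}\mid S_{t'},X_{t'},S_t=s]=r^\phi_\pi(S_{t'},X_{t'})$, and you attribute this to the Markov property plus Assumption~\ref{assum:ctf-indep}. Neither gives it. The Markov property only screens off the \emph{future}, but $\phi_{t'}$ also depends on the past. For every coalition intervening before $t'$, the counterfactual state $s^{\mathbf{z}}_{t'}$, and hence $Y^{\mathbf{z}}-Y^{\mathbf{z}\cup\{t'\}}$, is generated from $s_1,x_1,\dots$. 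Conditioning on $S_t$ then changes the posterior over $S_{1:t-1}$.

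Here is a counterexample. Take $T=2$ with $S_1\in\{a,b\}$ uniform. From $a$, action $0$ leads to $c$ and action $1$ leads to $d$; from $b$, both actions lead to $c$. The only reward is $Y_2=X_2$ at $c$, and $0$ at $d$. Let $\pi(0\mid a)=1$, $\pi(1\mid c)=p\neq\tfrac12$, and $\pi_{\mathrm{base}}$ uniform. Given $(S_2,X_2)=(c,x)$, you get $\phi_2=\gamma(x-\tfrac12)$ if $S_1=b$. If $S_1=a$, you get $\phi_2=\tfrac34\gamma(x-\tfrac12)$, because the coalition $\{X_1\}$ sends the counterfactual to $d$ half the time. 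Consequently $\E[\phi_1+\phi_2\mid S_1=a]$ and the Bellman right-hand side built from $V^\phi(c)=\E[\phi_2\mid S_2=c]$ differ by $\tfrac{\gamma}{8}(p-\tfrac12)\neq0$.

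So in general the raw expression is not the value of $(P,r^\phi_\pi,\pi)$ and does not satisfy the Bellman equation. Only your fallback is a valid proof: define $V^\phi$ as that MDP's value, time-indexed or with $t$ absorbed into the state. That fallback is exactly what the paper's proof establishes; it never checks the raw formula either.

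The literal statement does hold under self-baseline, which is the paper's default. There $\E[\phi_{t'}\mid S_1,X_1,\dots,S_{t'}]=0$. In each marginal contribution, the reused observed action and the fresh baseline action are both draws from $\pi(\cdot\mid s^{\mathbf{z}}_{t'})$ with identically distributed continuations. Both versions of $V^\phi$ therefore vanish identically, and the Bellman equation holds trivially.
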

\begin{proof}
The $\phi$-MDP $\cM^\phi_\pi$ is a valid MDP with state space $\cS$, action space $\cA$, transition $P(s' \mid s, x)$, discount $\gamma_\phi = 1$, and reward $r^\phi_\pi(s, x) = \E[\phi_t \mid S_t = s, X_t = x]$.
Discounting is not applied to $\phi$-rewards because the Shapley game outcome $Y = \sum_t \gamma^{t-1} r_t$ already incorporates temporal discounting; applying $\gamma$ again would double-discount.
Since $\E[|\phi_t|] < \infty$ (bounded by $\E[|Y|] < \infty$ via efficiency) and episodes are finite ($T < \infty$), the undiscounted return $\sum_t \phi_t$ is integrable.
Standard MDP theory applies: value functions exist and satisfy Bellman equations.
\end{proof}

\CorChunkedPhi*
\begin{proof}
The proof proceeds in two steps, corresponding to the two truncations.

\paragraph{Step 1: Arbitrary initial state.}
Theorem~\ref{thm:optimal-policy-equiv} shows that the optimal policy in $\cM^\phi$ equals the optimal policy in $\cM$.
By the Markov property, the optimal policy $\pi^*(x \mid s)$ depends only on the current state $s$, not on the initial state distribution or the trajectory history.
Therefore, $\pi^*$ is optimal in $\cM^\phi(s)$ for any starting state $s$.
This justifies experience replay: transitions sampled from a buffer correspond to different starting states, but all yield the same optimal policy.

\paragraph{Step 2: Finite horizon with value bootstrap.}
Consider the $n$-step truncated $\phi$-MDP $\cM^\phi_n(s)$ with terminal value $V^\phi(s_n)$.
Since the $\phi$-MDP is undiscounted ($\gamma_\phi = 1$, Corollary~\ref{cor:phi-bellman}), the objective is:
\begin{align}
    J_n^\phi(s) = \E_\pi\left[\sum_{t=1}^{n} \phi_t + V^\phi(s_n) \mid S_1 = s\right].
\end{align}
Under self-baseline at the optimal policy $\pi^*$, Shapley efficiency gives $\E_{\pi^*}[\sum_t \phi_t \mid S_1 = s] = \E_{\pi^*}[Y - Y^{\mathbf{1}} \mid S_1 = s] = V^*(s) - V^*(s) = 0$ for all $s$.
Therefore $V^{\phi,*}(s) = 0$ for all $s$, and the $n$-step objective simplifies to $J_n^\phi(s) = \E_\pi[\sum_{t=1}^n \phi_t \mid S_1 = s]$.
By Lemma~\ref{lem:gradient-equiv} applied to the $n$-step game (the proof uses Assumption~\ref{assum:ctf-indep} step-by-step and carries through for any finite horizon), $\nabla_\theta \E_\pi[\sum_{t=1}^n \phi_t] = \nabla_\theta \E_\pi[\sum_{t=1}^n \gamma^{t-1} r_t]$.
The RHS is the gradient of the standard $n$-step return, whose maximizer is $\pi^*$.
Since $J_n^\phi$ and the standard objective share the same gradient for all $\theta$, they have the same maximizer.

\paragraph{Approximate case.}
In practice, $V^\phi$ is learned, not zero, so we bootstrap with $\hat{V}^\phi(s_n) \approx V^{\phi,*}(s_n) = 0$.
The bootstrap error $|\hat{V}^\phi(s_n)|$ introduces bias; as training converges ($\hat{V}^\phi \to 0$), the bias vanishes.
This parallels the standard $n$-step bias-variance tradeoff: shorter chunks reduce variance from trajectory noise at the cost of bootstrap bias.
\end{proof}

\subsection{Estimation (Section~\ref{sec:computing})}
\label{app:estimation}

\ThmOptimalProposal*
\begin{proof}
The second moment of the IS estimator is $\E[(\kappa_t(\mathbf{z})/Q(\mathbf{z}))^2 f(\mathbf{z})^2]$.
Since $\Var[f(\mathbf{z})] \leq \sigma^2/(1-\gamma)^2$ (Assumption~\ref{assum:second-moment}), we minimize:
\begin{align}
    \min_Q \sum_{\mathbf{z}} \frac{\kappa_t(\mathbf{z})^2}{Q(\mathbf{z})} \quad \text{subject to } \sum_{\mathbf{z}} Q(\mathbf{z}) = 1.
\end{align}

The $k = 0$ coalition ($\mathbf{z} = \mathbf{0}$, no interventions) has $f(\mathbf{0}) = Y - Y = 0$ deterministically, so it contributes nothing and is excluded from sampling.
Using two-level sampling over $k \in \{1, \ldots, T\}$ (size $k$ with probability $q_k$, then uniform within size $k$): $Q(\mathbf{z}) = q_k / \binom{T}{k}$.
The objective becomes $\sum_{k=1}^{T} c_k/q_k$ where $c_k = \frac{1}{T}(\frac{1}{k} + \frac{1}{T-k})$ for $k < T$ and $c_T = 1/T^2$.
By Cauchy-Schwarz, $q^*_k \propto \sqrt{c_k}$ minimizes this.
Chebyshev gives $(\epsilon,\delta)$-accuracy with $M = O(M_{\delta,\epsilon})$ samples.
\end{proof}

\ThmBootstrappedPhi*
\begin{proof}
The $\lambda$-bootstrapped estimator (Definition~\ref{def:lambda-estimator}) replaces the exact counterfactual return from step $t$ onward with a $\lambda$-return $G^{\mathbf{z},\lambda}_t$ (Eq.~\ref{eq:lambda-return}).
We bound the bias introduced by this substitution.

\textbf{Step 1: $\lambda$-return bias.}
The $\lambda$-return is a mixture of $n$-step returns:
$G^{\mathbf{z},\lambda}_t = (1-\lambda)\sum_{n=1}^{\infty} \lambda^{n-1} G^{(n)}_t$,
where $G^{(n)}_t = \sum_{k=0}^{n-1} \gamma^k y^{\mathbf{z}}_{t+k} + \gamma^n V(s^{\mathbf{z}}_{t+n})$.
Taking conditional expectations given $s^{\mathbf{z}}_t$:
\begin{align}
    \E[G^{(n)}_t \mid s^{\mathbf{z}}_t] = V_\pi(s^{\mathbf{z}}_t) + \gamma^n \E[V(s^{\mathbf{z}}_{t+n}) - V_\pi(s^{\mathbf{z}}_{t+n}) \mid s^{\mathbf{z}}_t].
\end{align}
Since $|V(s) - V_\pi(s)| \leq \epsilon_V$ for all $s$:
\begin{align}
    |\E[G^{\mathbf{z},\lambda}_t \mid s^{\mathbf{z}}_t] - V_\pi(s^{\mathbf{z}}_t)|
    &\leq (1-\lambda) \sum_{n=1}^{\infty} \lambda^{n-1} \gamma^n \epsilon_V
    = \frac{\gamma(1-\lambda)}{1-\gamma\lambda} \epsilon_V. \label{eq:lambda-bias}
\end{align}

\textbf{Step 2: Per-coalition NTE bias.}
The exact NTE is $f(\mathbf{z}) = Y - Y^{\mathbf{z}} = Y - R^{\mathbf{z}}_t - \gamma^{t-1} G^{\mathbf{z},\mathrm{MC}}_t$,
where $G^{\mathbf{z},\mathrm{MC}}_t = \sum_{s \geq t} \gamma^{s-t} y^{\mathbf{z}}_s$ is the Monte Carlo return.
The bootstrapped NTE is $f^{\mathbf{z}}_t = Y - R^{\mathbf{z}}_t - \gamma^{t-1} G^{\mathbf{z},\lambda}_t$.
Their difference is:
\begin{align}
    |E[f^{\mathbf{z}}_t] - f(\mathbf{z})| = \gamma^{t-1} |\E[G^{\mathbf{z},\mathrm{MC}}_t - G^{\mathbf{z},\lambda}_t]| \leq \gamma^{t-1} \cdot \frac{\gamma(1-\lambda)}{1-\gamma\lambda} \epsilon_V \leq \frac{\gamma(1-\lambda)}{1-\gamma\lambda} \epsilon_V,
\end{align}
where the last inequality uses $\gamma^{t-1} \leq 1$ for $t \geq 1$.

\textbf{Step 3: Shapley bias.}
The estimator is $\hat{\phi}^\lambda_t = \sum_{\mathbf{z}} \kappa_t(\mathbf{z}) f^{\mathbf{z}}_t$ (in expectation over the proposal $Q^*$).
The true Shapley value is $\phi_t = \sum_{\mathbf{z}} \kappa_t(\mathbf{z}) f(\mathbf{z})$.
Using the marginal contribution form with non-negative Shapley weights $w(\mathbf{z}) \geq 0$ summing to $1$:
\begin{align}
    |\E[\hat{\phi}^\lambda_t] - \phi_t|
    &= \Bigl|\sum_{\mathbf{z}: z_t = 0} w(\mathbf{z}) \bigl(\E[f^{\mathbf{z} \cup \{t\}}_t - f^{\mathbf{z}}_t] - [f(\mathbf{z} \cup \{t\}) - f(\mathbf{z})]\bigr)\Bigr| \\
    &\leq \sum_{\mathbf{z}: z_t = 0} w(\mathbf{z}) \bigl(|\E[f^{\mathbf{z} \cup \{t\}}_t] - f(\mathbf{z} \cup \{t\})| + |\E[f^{\mathbf{z}}_t] - f(\mathbf{z})|\bigr) \\
    &\leq \frac{2\gamma(1-\lambda)}{1-\gamma\lambda} \epsilon_V. \qedhere
\end{align}
The last line uses Step~2 for each of the two bias terms and $\sum w(\mathbf{z}) = 1$.
Both coalitions $\mathbf{z}$ and $\mathbf{z} \cup \{t\}$ share the same counterfactual states up to step $t$ (the intervention at step $t$ only affects states from $t+1$ onward), so both bootstrap from the same starting state $s^{\mathbf{z}}_t$, and the bound from Step~2 applies to each.
The factor of $2$ arises because the continuation trajectories diverge after step $t$ (different actions), so the two bootstrap biases do not cancel in general.
\end{proof}

\subsection{PTR Buffer Design (Section~\ref{sec:phi-ppo})}
\label{app:ptr}

PTR stores full trajectories with per-timestep priorities.
The priority formula uses a smoothed estimate:
\begin{align}
    p_t = (\phi^{(2)}_t)^{\alpha_{\mathrm{ptr}}/2} + \epsilon_{\mathrm{ptr}},
\end{align}
where $\phi^{(2)}_t$ is a bias-corrected exponential moving average~\citep{kingma2014adam} of squared Shapley estimates, $\alpha_{\mathrm{ptr}}$ controls priority sharpness, and $\epsilon_{\mathrm{ptr}}$ prevents starvation.
Sampling selects (trajectory, starting timestep) pairs proportionally to $p_t$ using the Gumbel-max trick for efficient vectorized sampling.
When the buffer reaches capacity, the trajectory with lowest total priority $\sum_t p_t$ is evicted.

Following standard PER~\citep{schaul2015per}, we apply importance
sampling (IS) correction to account for the non-uniform sampling distribution.
Let $p_i$ denote the priority of sample $i$ in the buffer.
Each sampled subtrajectory receives weight $w_i = (|\mathcal{B}| \cdot p_i / \sum_j p_j)^{-\beta_{\mathrm{is}}}$,
normalized by $\max_i w_i$ for stability. The exponent $\beta_{\mathrm{is}}$ anneals from
$\beta_0$ (e.g., 0.4) to 1 over training, providing full bias correction at
convergence while allowing faster early learning.

\subsection{When Credit Helps (Section~\ref{sec:when-helps})}

\PropSparseCausality*
\begin{proof}
By the Shapley null player axiom (D1), $\phi_t = 0$ whenever $X_t$ has no causal effect on $Y$, so $\hat{g}^\phi = \sum_{t \in \mathcal{C}} g_t$ where $g_t = \phi_t \nabla \log \pi(X_t \mid S_t)$ and $|\mathcal{C}| = k'$.
Expanding the variance of the sum:
\begin{align}
    \Var[\hat{g}^\phi]
    &= \sum_{t \in \mathcal{C}} \Var[g_t] + 2\!\!\sum_{\substack{t, s \in \mathcal{C} \\ t < s}} \Cov[g_t, g_s].
\end{align}
The diagonal terms are bounded by $k' \sigma^2_\phi$.
For the cross-terms, $|\Cov[g_t, g_s]| \leq \bar{\rho}\, \sigma^2_\phi$ by definition of $\bar{\rho}$ and Cauchy-Schwarz on covariance, and there are $\binom{k'}{2}$ pairs.
Combining: $\Var[\hat{g}^\phi] \leq k'\sigma^2_\phi + k'(k'{-}1)\bar{\rho}\,\sigma^2_\phi = k'(1 + (k'{-}1)\bar{\rho})\, \sigma^2_\phi$.

\paragraph{Why $\bar{\rho}$ is small.}
For $t < s$, condition on $(S_s, S_t, X_t)$ and integrate over $X_s \sim \pi(\cdot \mid S_s)$:
\begin{align}
    \Cov[g_t, g_s] = \E\bigl[g_t \cdot \E[\phi_s \nabla\!\log\pi_s \mid S_s, S_t, X_t] \bigr] - \E[g_t]\E[g_s].
\end{align}
The score function identity $\E[\nabla \log \pi(X_s \mid S_s) \mid S_s] = 0$ drives the inner expectation to zero whenever $\phi_s$ is independent of $X_s$ given $S_s$.
Under self-baseline, $\phi_s$ depends on $X_s$ through the on-track condition ($s^{\mathbf{z}}_s = s_s$ when the observed action is reused), introducing a residual correlation.
This dependence is weak because: (i) the on-track condition at step $s$ is primarily determined by interventions at earlier steps, not by $X_s$ itself; and (ii) the optimal proposal $Q^*$ concentrates on extreme coalition sizes where the on-track probability is either very high ($|\mathbf{z}| \approx 0$) or very low ($|\mathbf{z}| \approx T$).
In deterministic environments, $\bar{\rho} = 0$ for on-track coalitions since $\phi_s$ given $S_s$ depends on $X_s$ only through the on-track indicator, which is determined by prior interventions.

In contrast, standard REINFORCE has $\hat{g} = Y \sum_{t=1}^T \nabla \log \pi_t$ where all $T$ terms contribute regardless of causal structure, giving $\Var[\hat{g}] = O(T \sigma^2_Y)$.
The variance ratio is $k'(1 + (k'{-}1)\bar{\rho})/T \approx k'/T$ when $\bar{\rho} \ll 1$.
\end{proof}

\PropStochasticity*
\begin{proof}
The Counterfactual Shapley value is a weighted sum of marginal contributions:
\begin{align}
    \phi_t = \sum_{\mathbf{z}: z_t = 0} w(\mathbf{z}) \cdot [f(\mathbf{z} \cup \{t\}) - f(\mathbf{z})],
\end{align}
where $w(\mathbf{z}) \geq 0$ are Shapley weights summing to 1.
Each marginal contribution compares outcomes $Y^{\mathbf{z} \cup \{t\}}$ and $Y^{\mathbf{z}}$ under shared exogenous noise $\U$.
Let $\sigma^2 = \Var[Y^{\mathbf{z}}]$ (approximately equal for nearby coalitions) and $\rho_Y = \Corr[Y^{\mathbf{z} \cup \{t\}}, Y^{\mathbf{z}}]$.
The variance of a single marginal contribution is:
\begin{align}
    \Var[Y^{\mathbf{z} \cup \{t\}} - Y^{\mathbf{z}}] &= \Var[Y^{\mathbf{z} \cup \{t\}}] + \Var[Y^{\mathbf{z}}] - 2\Cov[Y^{\mathbf{z} \cup \{t\}}, Y^{\mathbf{z}}] \\
    &= 2\sigma^2 - 2\sigma^2 \rho_Y = 2\sigma^2(1 - \rho_Y).
\end{align}
When environment stochasticity dominates action effects, $\rho_Y \approx 1$ (both outcomes are driven by the same exogenous noise), and variance vanishes.
By convexity of variance, $\Var[\phi_t] \leq \max_{\mathbf{z}} \Var[f(\mathbf{z} \cup \{t\}) - f(\mathbf{z})] = O(\sigma^2(1-\rho_Y))$.

In contrast, independent sampling (different $\U$ for each outcome) gives $\rho_Y = 0$ and variance $2\sigma^2$, a factor of $1/(1-\rho_Y)$ larger.
\end{proof}

\PropTDPropagation*
\begin{proof}
Consider tabular TD(0) with terminal-only reward $R_T$ (all $R_t = 0$ for $t < T$).
Initialize $V(s) = 0$ for all states.
The TD(0) update at step $t$ is: $V(S_t) \gets V(S_t) + \alpha[R_t + \gamma V(S_{t+1}) - V(S_t)]$.

\textbf{Episode 1:} Only the transition $(S_{T-1}, S_T)$ has nonzero TD target ($R_T + \gamma \cdot 0 = R_T$).
After episode 1: $V(S_{T-1}) > 0$, all other values remain 0.

\textbf{Episode $n$:} The TD target at step $T-n$ becomes nonzero (since $V(S_{T-n+1}) > 0$ from previous episodes).
Credit propagates backward by one step per episode.

\textbf{After $T$ episodes:} $V(S_1)$ receives its first nonzero update.
This requires $T$ episodes for credit to reach the initial state.

In contrast, $\phi$-redistribution computes $\phi_t$ for all $t$ from a single trajectory via $O(T \cdot M_{\delta,\epsilon})$ counterfactual simulations.
All timesteps receive credit signal in $O(1)$ episodes.
The trade-off: fewer episodes but $O(T \cdot M)$ simulation cost per episode.
\end{proof}

\CorPropagationComplexity*
\begin{proof}
Follows directly from Proposition~\ref{prop:td-propagation}.
For TD(0), each episode updates one state; propagating terminal reward across $T$ states requires $O(T)$ episodes with $O(T)$ updates each, totaling $O(T^2)$.
For $\phi$-redistribution, a single trajectory yields $\phi_t$ for all $T$ timesteps via $O(T \cdot M_{\delta,\epsilon})$ counterfactual simulations, so one episode suffices.
\end{proof}

\section{Additional Experiments}
\label{app:experiments}

This section contains additional experiments validating the theoretical predictions.

\subsection{Fork MDP}
\label{sec:fork-exp}

\begin{figure*}[t]
\centering
\vspace{-0.1in}
\begin{minipage}[c]{0.33\textwidth}
\centering
\input{figures/fork-mdp}
\end{minipage}%
\hfill
\begin{minipage}[c]{0.65\textwidth}
\centering
\includegraphics[width=\textwidth]{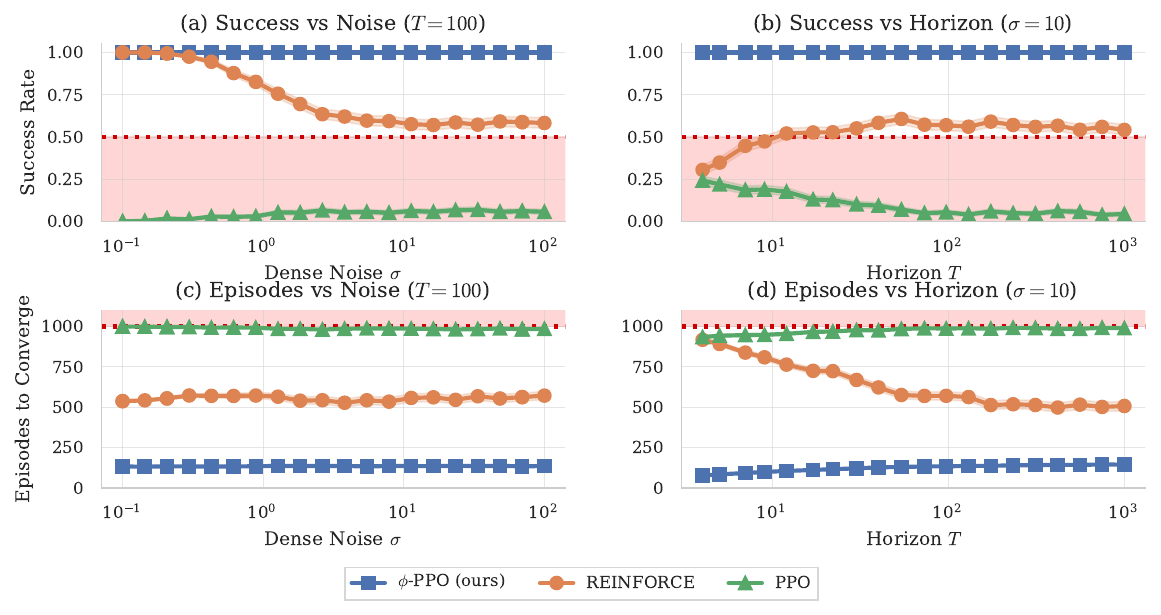}
\end{minipage}
\caption{\textbf{Fork MDP results} ($n{=}1{,}000$ runs per condition; shaded bands are 95\% CIs).
\emph{Left:} MDP structure; only the first $c$ actions affect the terminal reward.
\emph{Top row (a--c):} Success rate across noise $\sigma$, horizon $T$, and causal steps $c$.
\emph{Bottom row (d--f):} Mean episodes to convergence.}
\label{fig:fork}
\end{figure*}

The Fork MDP isolates the three variance reduction benefits of $\phi$-PPO.
The agent chooses from $\mathcal{A} = \{0, 1\}$ at each of $T$ steps ($\gamma{=}0.99$).
Only the first $c$ actions affect the terminal reward: $Y_T \sim \mathcal{N}(1, \sigma^2)$ if $x_1 = \cdots = x_c = 0$ (the optimal actions), $Y_T \sim \mathcal{N}(0, \sigma^2)$ otherwise.
Intermediate rewards are i.i.d.\ noise $Y_t \sim \mathcal{N}(0, \sigma^2)$ for $t < T$.
The environment combines sparse causality ($k' = c \ll T$, Proposition~\ref{prop:sparse-causality}), high stochasticity (shared noise cancels in the counterfactual difference, Proposition~\ref{prop:stochasticity}), and delayed reward ($Y_T$ only, Proposition~\ref{prop:td-propagation}).

\paragraph{Results (Figure~\ref{fig:fork}).}
At the default configuration ($T{=}100$, $\sigma{=}10$, $c{=}1$), $\phi$-PPO achieves $100\%$ success, while REINFORCE degrades to ${\sim}58\%$ and PPO fails at ${\sim}5\%$.
Each column isolates one benefit: noise cancellation (a), horizon independence (b), and causal isolation (c).
The bottom row confirms that $\phi$-PPO also converges fastest: median episodes remain low across all sweeps, while PPO and REINFORCE slow dramatically as conditions worsen.
PPO's clipping saturates under high-variance advantages, preventing meaningful policy updates.
REINFORCE drifts randomly in parameter space, converging roughly half the time.
With $M{=}1$, $\phi$-PPO requires $2T$ steps per episode (one counterfactual rollout), yielding $4{\times}$ episode efficiency but $2{\times}$ simulation efficiency over PPO.

\begin{table}[t]
\centering
\caption{Hyperparameters for $\phi$-PPO (Section~\ref{sec:phi-ppo}) and Fork MDP experiments (Section~\ref{sec:fork-exp}).}
\label{tab:hyperparams}
\begin{tabular}{lll}
\toprule
Parameter & Symbol & Value \\
\midrule
\multicolumn{3}{l}{\emph{$\phi$-PPO (Algo.~\ref{alg:phi-ppo})}} \\
Discount factor & $\gamma$ & 0.99 \\
Trajectory length (chunk size) & $T_{\mathrm{chunk}}$ & 128 \\
Batch size (trajectories) & $B$ & 64 \\
Policy epochs & $K$ & 4 \\
Coalition samples & $M$ & 1 \\
NTE mixing & $\lambda$ & 1.0 (pure MC) \\
GAE mixing & $\lambda^\phi$ & 0.95 \\
PPO clip & $\epsilon_{\mathrm{clip}}$ & 0.2 \\
Value loss coefficient & $c_1$ & 0.5 \\
$\phi$-value loss coefficient & $c_2$ & 0.5 \\
Entropy coefficient & $c_3$ & 0.01 \\
Gradient clip norm & $g_{\max}$ & 0.5 \\
Exploration decay & $\gamma_\xi$ & 0.99 \\
\midrule
\multicolumn{3}{l}{\emph{PTR (Appendix~\ref{app:ptr})}} \\
Priority exponent & $\alpha_{\mathrm{ptr}}$ & 0.6 \\
UCB coefficient & $\beta_{\mathrm{ucb}}$ & 1.0 \\
Priority floor & $\epsilon_{\mathrm{ptr}}$ & 0.01 \\
EMA decay & $\alpha_{\mathrm{ema}}$ & 0.99 \\
IS exponent & $\beta_{\mathrm{is}}$ & $\beta_0 \to 1$ \\
IS exponent start & $\beta_0$ & 0.4 \\
IS annealing iters & $M_\beta$ & 100000 \\
Priority decay & $\gamma_p$ & 0.999 \\
Eviction threshold & $\epsilon_{\mathrm{evict}}$ & $10^{-6}$ \\
\midrule
\multicolumn{3}{l}{\emph{Optimization}} \\
Learning rate & $\eta$ & $0.1$ \\
\midrule
\multicolumn{3}{l}{\emph{Fork MDP environment}} \\
Episode horizon & $T_{\mathrm{ep}}$ & 100 \\
Causal actions & $c$ & 1 \\
Noise std & $\sigma$ & 10 \\
Network architecture & -- & MLP [64, 64] \\
\bottomrule
\end{tabular}
\end{table}

\subsection{Coin Flip MDP: High Stochasticity}

A single-step MDP: $U \sim \text{Bernoulli}(0.5)$, action $X \in \{0,1\}$, return $Y = 100U + X$.
The action's causal effect is $\phi = 1$ with zero variance, while $\Var[Y] = 2500$.
Correlation $\rho = \Corr[Y(0,U), Y(1,U)] \approx 0.9998$.

\subsection{Treasure Hunt MDP: Delayed Reward}

A $T=4$ step MDP with terminal reward only.
Each step: collect treasure or skip.
Reward $Y_3 = \#\{\text{collected}\}$.
All actions are causal ($k' = 4$).
$\phi$-TD provides signal in 1 episode; standard TD requires $O(T) = 4$ episodes for credit propagation.

%% file: figures/fork-mdp.tex

\begin{tikzpicture}[
    state/.style={circle, draw, minimum size=5mm, font=\scriptsize},
    start/.style={state, fill=white},
    mid/.style={state, fill=gray!10, draw=gray!40},
    term/.style={state, fill=yellow!30, draw=orange!50},
    arr/.style={->, thick, >=stealth},
    opt/.style={arr, blue!70},
    sub/.style={arr, red!50},
    lbl/.style={font=\tiny, midway},
    ann/.style={font=\tiny, text=black!60}
]

\node[start] (s0) at (0, 0) {$S_1$};

\node[mid] (s1t) at (1.6, 0.7) {};
\node[ann] at (2.5, 0.7) {$\cdots$};
\node[term] (Yt) at (3.4, 0.7) {$1$};

\node[mid] (s1b) at (1.6, -0.7) {};
\node[ann] at (2.5, -0.7) {$\cdots$};
\node[term] (Yb) at (3.4, -0.7) {$0$};

\draw[opt] (s0) -- node[lbl, above, sloped] {\tiny $X_1{=}0$} (s1t);
\draw[sub] (s0) -- node[lbl, below, sloped] {\tiny $X_1{=}1$} (s1b);
\draw[arr, gray!50] (s1t) -- (2.1, 0.7);
\draw[arr, gray!50] (2.9, 0.7) -- (Yt);
\draw[arr, gray!50] (s1b) -- (2.1, -0.7);
\draw[arr, gray!50] (2.9, -0.7) -- (Yb);

\node[ann, above=1pt of Yt] {$Y_T{\sim}\mathcal{N}(1,\sigma^2)$};
\node[ann, below=1pt of Yb] {$Y_T{\sim}\mathcal{N}(0,\sigma^2)$};
\node[ann] at (1.2, -1.5) {$Y_t \sim \mathcal{N}(0,\sigma^2)$ \scriptsize($t < T$)};

\end{tikzpicture}

%% file: sections/B-limitations.tex
\section{Limitations and Future Work}
\label{app:limitations}
\paragraph{Scope.} Our approach requires a simulator to compute counterfactual trajectories, limiting applicability to environments with resettable state.

\textbf{Limitations.}
Our approach requires simulator access for counterfactual trajectories.
Sample complexity scales with $\sigma^2/(1-\gamma)^2$, which can be large when $\gamma \approx 1$.
We assume counterfactual independence (Assumption~\ref{assum:ctf-indep}), excluding correlated exogenous noise.

\textbf{Future work.}
Finite-sample bounds via Bernstein concentration; relaxing counterfactual independence; empirical validation on Atari/MuJoCo benchmarks.

%% file: sections/B-related-methods.tex
\section{Credit Assignment Methods: Detailed Comparison}
\label{app:ca-methods}

To facilitate the discussion, here we entail the additional notations we use for related work comparison.
$\alpha_{i \to j}$: learned attention weight from token $j$ to token $i$.
$g(\Delta_{0:t})$: LSTM hidden state after processing trajectory up to step $t$.
$P^\pi(X_t {=} x \mid S_t, S_T)$: hindsight posterior over actions given a future state.
$P^\pi(X_t {=} x \mid S_t, Z)$: hindsight posterior over actions given observed return $Z$.
$\phi^\star(s,x,z)$: optimal DICE density-ratio estimator; $\chi^\pi(z \mid s)$: return distribution under $\pi$ from state $s$.
$\Phi_t$: learned hindsight statistic of the future trajectory, constrained to satisfy $X_t \perp \Phi_t \mid S_t$.
$U'$: encoding of a future step $(S', X', Y')$; choices include $U' {=} S'$ (state), $U' {=} Y'$ (reward), or a learned encoding.
$\mathcal{Q}(s,x,\tau)$: quantile function---inverse CDF of the return distribution $F^{-1}_{\eta_{s,x}}(\tau)$ at quantile level $\tau \in [0,1]$.
$\hat\tau$: inferred quantile level of the observed return, $\hat\tau = F_{\eta_{S_t,X_t}}(Z)$.
$B^\pi(s,x,s') = V(s') - \E_{S' \sim p(\cdot|s,x)}[V(S')]$: luck (nature's contribution to the transition).

First we provide an overview of the credit assignment methods we compare against in table~\ref{tab:ca-expressions} with their credit expressions and descriptions.
Then table~\ref{tab:ca-properties} evaluates each method against desirable credit assignment properties (\ref{def:causal credit assignment}) on the Fork MDP
(Section~\ref{sec:fork-exp}: $T{=}100$, $\sigma{=}10$, only $X_1$ causal). Below we discuss the details of each category of credit assignment methods in the literature. 

\paragraph{A. Temporal: discounted reward.}
In the existing credit assignment literature, action values/advantages often serve as a key proxy for actions' influence on the rewards~\citep{watkins1992q,sutton1988temporaldiff,suttonMDPsSemimdpsFramework1999,pmlr-v37-schaul15,10.5555/2031678.2031726generalvaluefunc,pmlr-v139-chang21bmodularity,pan2024skillorluck,auzina-intrinsic-credit-assignment-2026}. 
Although action values and advantages yield unbiased gradients, they assign credit by actions' temporal proximity to the reward, rather than their causal effects. This is both inefficient and biased in terms of credit assignment in that recent actions with no causal effect on the reward add noise to the gradient, while causally relevant actions that occur earlier receive discounted credit.
Intuitively, credit should correspond to actions cause the outcome: we credit studying hard for a good grade, rather than flipping over the returned test to view the grade.

\paragraph{B. Memory: learned models}
Another line of work utilizes learnable sequence models to automatically distribute the credit over actions~\citep{DBLP:conf/nips/ChenLRLGLASM21,arjona2019rudder,ijcai2020selfattentionalcredit,raposo2021synthetic,deng2026densegrpo}, dropping the reliance on the time contiguity assumption. However, such methods conflate agent's actions influence over rewards with environment stochasticity. As a result, they are unable to distinguish skill from luck in learning. Also, learned models suffer from the same problem as black box models that they are also in lack of explainability.

\paragraph{C. Hindsight: posterior/prior ratios and conditional baselines}
The line of hindsight conditioning~\citep{hindsightcredit,velu2024hindsightdice,pmlr-v139-mesnard21a,mesnard2023quantile,meulemans2023would,ramesh2025improving}\footnote{For clarity, prior uses of the term 'counterfactual' in this literature refer to hindsight conditioning not counterfactuals defined in the causal theory~\citep{pearl2009causality}.} partially addresses the skill-luck issue by conditioning on future events and credit each action by how relevant it is to the given event. The fundamental limitation of those is that isolating each single action during attribution overlooks subsequent actions' collective influence on the rewards. Thus, the skill-luck issue still persists as we demonstrate in Figure~\ref{fig:intro example}. 

\begin{table}[ht]
\centering
\caption{Credit assignment methods: expressions and descriptions.}
\label{tab:ca-expressions}
\small
\setlength{\tabcolsep}{3pt}
\begin{tabular}{@{}p{2.8cm}p{3.2cm}p{7.8cm}@{}}
\toprule
\textbf{Method} & \textbf{Expression} & \textbf{Description} \\
\midrule
\multicolumn{3}{l}{\emph{A. Temporal: discounted reward}} \\[2pt]
TD \citep{williams1992simple}, Options \citep{suttonMDPsSemimdpsFramework1999}, UVFA \citep{pmlr-v37-schaul15}, Horde \citep{10.5555/2031678.2031726generalvaluefunc}
  & $\gamma^{T-t} Y_T$
  & Reward $Y_T$ discounted by $\gamma^{T-t}$, the temporal distance from action $X_t$ to reward \\[4pt]
Direct Adv. Estimation \citep{pan2024skillorluck}
  & $\hat{A}(s,x)$
  & Fit $Y = V(S_0) + \sum_t A(S_t, X_t) + \sum_t B^\pi(S_t, X_t, S_{t+1})$ by least squares. $\hat{A}$: agent's causal effect (skill); $B^\pi$: environment's effect (luck) \\[4pt]
QCA \citep{mesnard2023quantile}
  & $\substack{\mathcal{Q}(s,x,\hat\tau) \\{-} \!\sum_{x'}\! \pi(x') \mathcal{Q}(s,x',\hat\tau)}$
  & Quantile level $\hat\tau {=} F_{\eta_{s,x}}(Z)$ indexes ``luck.'' Credit: action's quantile value minus the policy-averaged quantile value at the same luck level \\[4pt]
\midrule
\multicolumn{3}{l}{\emph{B. Memory: learned models}} \\[2pt]
DT \citep{DBLP:conf/nips/ChenLRLGLASM21}
  & $\alpha_{T \to t}$
  & Transformer trained via cross-entropy on actions, conditioned on returns-to-go $\hat{Y}_t {=} \sum_{t' \geq t} Y_{t'}$. Credit implicit via attention weights $\alpha_{T \to t}$ \\[4pt]
RUDDER \citep{arjona2019rudder}
  & $g(\Delta_{0:t}) {-} g(\Delta_{0:t-1})$
  & LSTM $g$ trained via MSE on total return $Y$. Credit to step $t$: change in predicted return after observing $(S_t, X_t)$ \\[4pt]
SECRET \citep{ijcai2020selfattentionalcredit}
  & $\alpha_{t \leftarrow T}\,Y_T$
  & Transformer trained via weighted cross-entropy on $\mathrm{sign}(Y_T)$. Credit: attention $\alpha_{t \leftarrow T}$ on $(S_t, X_t)$, scaled by $Y_T$ \\[4pt]
\midrule
\multicolumn{3}{l}{\emph{C. Hindsight: posterior/prior ratios and conditional baselines}} \\[2pt]
HCA \citep{hindsightcredit}
  & $\dfrac{P^\pi(X_t {=} x \mid S_t, S_T)}{\pi(x \mid S_t)}$
  & Posterior over $X_t$ given future state $S_T$, divided by prior $\pi$. Ratio ${>}\,1$: action became more likely given the future \\[4pt]
H-DICE \citep{velu2024hindsightdice}
  & $\dfrac{\pi(x \mid s)}{P^\pi(X_t {=} x \mid S_t, Z)}$
  & Same hindsight ratio as HCA (conditioned on return $Z$ instead of state). Estimated via DICE variational objective $\phi^\star \!\cdot\! \chi^\pi$ instead of direct density ratio \\[4pt]
CCA \citep{pmlr-v139-mesnard21a}
  & $G_t {-} \E[G_t \!\mid\! S_t, \Phi_t]$
  & $\Phi_t$: learned compression of future trajectory $(S, X, Y)_{t+1:T}$, trained to predict $G_t$ while satisfying $X_t \perp \Phi_t \mid S_t$. Baseline $\E[G_t \mid S_t, \Phi_t]$ fit by regression \\[4pt]
COCOA \citep{meulemans2023would}
  & $\dfrac{P^\pi(U' \mid S_t, X_t)}{P^\pi(U' \mid S_t)} - 1$
  & $U'$: encoding of a future step $(S', X', Y')$; typically $U' {=} Y'$ (reward). Ratio measures how much $X_t$ increased the probability of observing $U'$. Estimated by supervised learning \\[4pt]

\midrule
\multicolumn{3}{l}{\emph{D. Ours: counterfactual simulation}} \\[2pt]
$\phi$-\textbf{values}
  & $\mathbf{Y - Y^{z}}$, \textbf{shared} $\U$
  & Replay trajectory in simulator with alternative action $X'_t {\sim} \pi$ and same exogenous noise $\U$. Credit: difference in outcomes $Y - Y^{\mathbf{z}}$ \\
\bottomrule
\end{tabular}
\end{table}

\begin{table}[ht]
\centering
\caption{Desirable credit assignment properties, following the axiomatic framework of \citet{lee2025}.
\textbf{D1}~(Admissibility): non-causal actions receive zero credit.
\textbf{D2}~(Power): causal actions receive nonzero credit.
\textbf{D3}~(Normality): given equal causal effects, actions whose counterfactual baselines are more probable under $\pi$ receive higher credit (actions $\pi$ would have taken anyway get less credit).
\textbf{D4}~(Effect scaling): given equal baselines, actions with larger causal effects receive proportionally more credit.
\textbf{D5}~(Efficiency): credit sum to total return, $\sum_t \phi_t = Y - Y^{\mathbf{0}}$.
\checkmark: satisfied. $\boldsymbol{\sim}$: partially or in principle but not in practice. \texttimes: violated.}
\label{tab:ca-properties}
\small
\setlength{\tabcolsep}{4pt}
\begin{tabular}{@{}lcccccc@{}}
\toprule
\textbf{Method} & \textbf{D1} & \textbf{D2} & \textbf{D3} & \textbf{D4} & \textbf{D5} \\
 & Admiss. & Power & Normal. & Scaling & Effic. \\
\midrule
\multicolumn{6}{l}{\emph{A. Temporal}} \\
TD         & \texttimes & $\boldsymbol{\sim}$ & \texttimes & $\boldsymbol{\sim}$ & \texttimes \\
Options        & \texttimes & $\boldsymbol{\sim}$ & \texttimes & $\boldsymbol{\sim}$ & \texttimes \\
UVFA           & \texttimes & $\boldsymbol{\sim}$ & \texttimes & $\boldsymbol{\sim}$ & \texttimes \\
Horde          & \texttimes & $\boldsymbol{\sim}$ & \texttimes & $\boldsymbol{\sim}$ & \texttimes \\
Direct Advantage Estimation     & $\boldsymbol{\sim}$ & \texttimes & \texttimes & \texttimes & \texttimes \\
QCA (exact)    & \checkmark & \checkmark & \texttimes & \checkmark & \texttimes \\
QCA (learned)  & \checkmark & \texttimes & \texttimes & \texttimes & \texttimes \\
\midrule
\multicolumn{6}{l}{\emph{B. Memory}} \\
DT             & \texttimes & \texttimes & \texttimes & \texttimes & \texttimes \\
RUDDER         & \texttimes & $\boldsymbol{\sim}$ & \texttimes & \texttimes & \checkmark \\
SECRET         & \texttimes & \texttimes & \texttimes & \texttimes & $\boldsymbol{\sim}$ \\
\midrule
\multicolumn{6}{l}{\emph{C. Hindsight}} \\
HCA            & $\boldsymbol{\sim}$ & \texttimes & $\boldsymbol{\sim}$ & \texttimes & \texttimes \\
H-DICE         & $\boldsymbol{\sim}$ & \texttimes & $\boldsymbol{\sim}$ & \texttimes & \texttimes \\
CCA            & $\boldsymbol{\sim}$ & $\boldsymbol{\sim}$ & \texttimes & $\boldsymbol{\sim}$ & \texttimes \\
COCOA          & \checkmark & \texttimes & $\boldsymbol{\sim}$ & \texttimes & \texttimes \\
\midrule
\multicolumn{6}{l}{\emph{D. Ours}} \\
$\phi$-\textbf{values} & \checkmark & \checkmark & \checkmark & \checkmark & \checkmark \\
\bottomrule
\end{tabular}
\end{table}